\documentclass[11pt]{article}

\usepackage[margin=1in]{geometry}
\usepackage{amsmath,amssymb,amsthm}
\usepackage{mathtools}
\usepackage{mathrsfs}
\usepackage{enumitem}
\usepackage{tikz}
\usepackage{tikz-cd}
\usepackage{float}
\usepackage{graphicx}
\usetikzlibrary{arrows.meta}
\IfFileExists{pythonhighlight.sty}{\usepackage{pythonhighlight}}{}
\usepackage{hyperref}

\newtheorem{theorem}{Theorem}[section]
\newtheorem{proposition}[theorem]{Proposition}
\newtheorem{definition}[theorem]{Definition}
\newtheorem{example}[theorem]{Example}
\newtheorem{remark}[theorem]{Remark}
\newtheorem{corollary}[theorem]{Corollary}
\newtheorem{lemma}[theorem]{Lemma}

\newcommand{\Hom}{\mathrm{Hom}}

\newcommand{\R}{\mathbb{R}}
\newcommand{\Z}{\mathbb{Z}}
\newcommand{\C}{\mathbb{C}}
\newcommand{\OO}{\mathrm{O}}
\newcommand{\SO}{\mathrm{SO}}
\newcommand{\SE}{\mathrm{SE}}
\newcommand{\E}{\mathrm{E}}
\newcommand{\Int}{\operatorname{Int}}
\newcommand{\Irr}{\operatorname{Irr}}
\newcommand{\Rep}{\operatorname{Rep}}

\newcommand{\id}{\mathrm{id}}
\newcommand{\sgn}{\mathrm{sgn}}
\newcommand{\tc}{\mathrm{tc}}
\newcommand{\Bis}{\operatorname{Bis}}
\newcommand{\Bisloc}{\operatorname{Bis}_{\mathrm{loc}}}

\title{Theory for groupoid equivariant neural networks:\\
an approach for steerable CNNs on bounded domains}
\author{A. Ibort$^{1,2,4}$\href{https://orcid.org/0000-0002-0580-5858}{\includegraphics[scale=0.7]{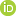}}, M. Jim\'enez-V\'azquez$^{1,5}$\href{https://orcid.org/0009-0009-5445-9320}{\includegraphics[scale=0.7]{ORCID.png}}, J.M. P\'erez-Pardo$^{1,6}$\href{https://orcid.org/0000-0003-2424-8943}{\includegraphics[scale=0.7]{ORCID.png}}}

\date{}

\begin{document}
\maketitle

\noindent
{\footnotesize $^{1}$  Universidad Carlos III de Madrid, ROR: \href{https://ror.org/03ths8210}{03ths8210}, Departamento de Matem\'aticas, Avenida de la Universidad 30 (edificio Sabatini), 28911 Legan\'es (Madrid), Espa\~na.}  \\
{{\footnotesize $^{2}$ Instituto de Ciencias Matem\'{a}ticas ICMAT (CSIC-UAM-UC3M-UCM), ROR: \href{https://ror.org/05e9bn444}{05e9bn444}, Campus de Cantoblanco UAM, Calle Nicol\'as Cabrera 13-15, 28049 Madrid, Espa\~na.} \\}

\bigskip
\noindent
{\scriptsize 
$^{4}$\texttt{albertoi[at]math.uc3m.es}, $^5$\texttt{mjvazque[at]pa.uc3m.es}, $^6$\texttt{jmppardo[at]math.uc3m.es}}

\begin{abstract}
Equivariant convolutional neural networks are usually built from a group
acting globally on the space of signals.  This hypothesis is inappropriate
for many bounded or stratified domains: an ambient rigid motion may be
admissible only on part of the domain, and the boundary introduces geometric
types that are invisible to a transitive group action.  We develop a theory of
groupoid-equivariant neural networks in which the symmetry datum consists of
a groupoid, a selected pseudogroup of local bisections, a measure, and input
and output representation bundles.  For integral channels on the object
space, we prove a bisection-equivariant kernel theorem: equivariance is
equivalent to a transport constraint on the two-point kernel, and its
solutions are classified by one joint-stabilizer intertwiner on each orbit of
pairs.  The usual steerable convolution constraint on homogeneous spaces is
recovered as a special case.

As a case study we apply the theory to bounded planar domains.  Requiring Euclidean arrows to
preserve tangent cones produces a groupoid whose orbits distinguish bulk,
smooth-edge, and corner points.  On a rectangle their isotropy groups are
$O(2)$, $\mathbb Z_2$, and $\mathbb Z_2$ respectively, and the kernel theorem yields a
complete boundary-aware layer with diagonal and cross-stratum couplings.  On
a pixel grid, the edge and corner isotropy subgroups are nonconjugate copies of
$\mathbb Z_2$ inside $D_4$, leading to different branching rules and closed
parameter-count formulas for ten geometric block families.  We also show that
partial equivariance is filtered under composition: finite-propagation layers
remain equivariant on explicitly eroded bisection domains, whereas global
symmetries are preserved exactly at arbitrary depth.

The resulting architecture is implemented through offline nullspace bases and
sparse gather--transform--scatter operations.  Numerical certificates verify
the kernel constraints and the composition theorem to machine precision, and
a synthetic identification experiment confirms completeness and reduced
sample complexity.  A Poisson--Dirichlet kernel study is used separately to assess
boundary-aware inductive bias; the exact inverse is shown to preserve the
global symmetries of the rectangle but not general proper local bisections.  The numerical results show that the proposed architectures provide significant advantages when symmetries cannot be globally implemented by group actions and provide an accuracy improvement of at least one order of magnitude  with respect to the models tested.
\end{abstract}

\begingroup
\small
\tableofcontents
\endgroup
\clearpage


\section{Introduction}
\label{sec:introduction}

Symmetry is one of the principal mechanisms by which prior geometric
knowledge is incorporated into machine learning (see, for instance, \cite{bronstein} and references therein).  A neural map is
\emph{equivariant} when transforming its input and then applying the map is
equivalent to first applying the map and then transforming its output.  In
convolutional architectures this requirement restricts the admissible kernels,
produces systematic weight sharing, and often improves sample efficiency and
interpretability.  Group-equivariant CNNs and steerable CNNs formalize this
principle for transformations generated by a group acting on the signal space
\cite{cohen-welling-groups,cohen-welling-steerable,cohen-welling-spherical,WeilerCesa2019}.  Their
representation-theoretic foundations are particularly transparent on
homogeneous spaces, where equivariant linear maps are characterized by
convolutions with constrained kernels
\cite{KondorTrivedi2018,CohenGeigerWeiler2019}.

The global-action hypothesis is nevertheless too restrictive in many
geometric problems.  Consider a signal on a bounded domain
$D\subset\mathbb R^2$.  The ambient Euclidean group
$E(2)=\mathbb R^2\rtimes O(2)$ acts transitively on the plane, but a generic
rigid motion does not map $D$ to itself.  It may still map one region of $D$
to another, and hence remains meaningful as a \emph{partial} symmetry.  At the
same time, the boundary distinguishes interior points, smooth-edge points, and
corners.  Treating every pixel as though it belonged to a homogeneous copy of
the plane suppresses this information and leaves boundary handling to padding,
masking, or other auxiliary conventions.  Padding is not neutral: standard
convolutions can exploit the absolute-position information created by the
boundary of the padded array \cite{KayhanGemert2020,IslamJiaBruce2020}.

Groupoids provide a natural language for the invertible partial symmetries
that survive this restriction.  An arrow relates a source point to a target
point only when the corresponding transformation is admissible; a local
bisection assembles such arrows into a coherent partial transformation of a
region.  The groupoid alone, however, does not determine a notion of
network equivariance.  One must also specify which local bisections are to be
regarded as admissible symmetries, how the object space is measured, and how
the feature fibres transform.  The basic symmetry datum (see Sect. \ref{ssec:symmetry-datum}, Def. \ref{def:symmetry-datum}) of this paper is therefore
\begin{equation}
 \mathfrak S_{\mathrm{in,out}}
 =
 \Bigl(
   \Gamma\rightrightarrows\Omega,
   \mathcal B,
   \nu;
   (E^{\mathrm{in}},R^{\mathrm{in}}),
   (E^{\mathrm{out}},R^{\mathrm{out}})
 \Bigr),
 \label{eq:intro-symmetry-datum}
\end{equation}
where $\mathcal B$ is a selected pseudogroup of local bisections and the two
bundles $E^{\mathrm{in}}$, $E^{\mathrm{out}}$, carry representations $R^{\mathrm{in}}$, $R^{\mathrm{out}}$ of the groupoid that implements the symmetry, $\Gamma\rightrightarrows\Omega$.  This formulation contains global
group equivariance as the special case in which the relevant bisections are
global.

The central analytical question is then the following.  Given the datum
\eqref{eq:intro-symmetry-datum}, which nonlocal linear maps between fields of
features are equivariant?  A single groupoid arrow acts only between two
fibres and cannot by itself constrain an operator that mixes values at many
points.  Local bisections are the correct objects because they move regions
coherently.  We prove that an integral channel with two-point kernel $K(y,x)$
is equivariant under $\mathcal B$ precisely when
\begin{equation}
 K\bigl(\tau_b(y),\tau_b(x)\bigr)
 =
 R^{\mathrm{out}}\bigl(b(y)\bigr)
 K(y,x)
 R^{\mathrm{in}}\bigl(b(x)\bigr)^{-1}
 \label{eq:intro-kernel-constraint}
\end{equation}
for every admissible local bisection $b$ and almost every pair in its domain (see Sect. \ref{ssec:bisection-kernel-theorem}, Thm. \ref{thm:bisection-kernel}).
The free kernel data reduce, orbit by orbit in $\Omega\times\Omega$, to
intertwiners of the corresponding joint stabilizers.  In the homogeneous
Euclidean case, translations reduce $K(y,x)$ to a function of $y-x$, and
\eqref{eq:intro-kernel-constraint} becomes the familiar steerable-kernel
constraint.  Our object-space kernel theorem should be distinguished from
Haar-system convolution on the arrow space of a Lie groupoid: the two are
related in important examples, but they are not the same construction.

The geometric model developed here is the tangent-cone restricted Euclidean
groupoid of a bounded planar domain.  The ordinary restriction of the action
groupoid $(E(2)\ltimes\mathbb R^2)|_D$ remains transitive and therefore does
not distinguish boundary types at the level of object orbits and isotropy.
We refine it by retaining only arrows whose orthogonal part transports the
tangent cone at the source to the tangent cone at the target.  For a
piecewise-smooth domain this separates the interior, smooth boundary, and
corners of each opening angle.  On a rectangle the three strata have isotropy
$O(2)$, $\mathbb Z_2$, and $\mathbb Z_2$, respectively (see Sect. \ref{ssec:tc-orbits-isotropy}, Thm. \ref{thm:tc-orbit-classification}).  Applying
\eqref{eq:intro-kernel-constraint} to pair orbits produces not only the
stratum-preserving kernels, but also the bulk--edge, bulk--corner, and
edge--corner couplings needed to exchange information across the
stratification (see Sect. \ref{sec:boundary-kernel-classification}).  These cross-stratum blocks are the symmetry-constrained
alternative to treating the boundary by an external padding rule.

The discrete theory contains an additional representation-theoretic feature.
On a rectangular pixel grid the ambient point group is $D_4$.  Edge isotropy
is generated by an axis reflection, whereas corner isotropy is generated by a
diagonal reflection.  These subgroups are isomorphic but nonconjugate in
$D_4$, and the restrictions of the $B_1$ and $B_2$ irreducible types are
therefore exchanged between edges and corners.  The finite kernel constraints
form homogeneous linear systems.  Their nullspaces provide complete bases of
the admissible layers, with closed parameter-count formulas for ten geometric
block families: the nine source--target stratum blocks, with the edge--edge
block separated into same-edge and across-corner interactions (see Sect. \ref{ssec:discrete-parameter-counts}, Thm. \ref{thm:discrete-parameter-counts}).

A second structural issue arises when such layers are stacked.  Global
equivariance is stable under composition, but a proper partial symmetry moves
only a restricted region.  Intermediate points in a multi-layer computation
must also remain inside the domain of the bisection.  We formalize this through
finite propagation and erosion.  A layer is exactly equivariant on every
maximal admissible bisection domain, while a composition is equivariant on an
explicitly eroded domain determined by the propagation radii of its factors (see Sect. \ref{sec:propagation-erosion}).
For $n$ equal-radius layers, the balanced path estimate gives the erosion
radius $\lfloor n/2\rfloor r_0$; global bisections require no erosion.  This
produces a filtration of equivariant operator classes and makes precise the
way in which depth converts a partial symmetry into a controlled boundary
layer.

\medskip
\noindent\textit{Relation with existing approaches.}
The present construction is complementary to several established directions.
Gauge-equivariant CNNs formulate covariance under changes of local frame on a
manifold \cite{CohenGauge2019}; our adapted edge and corner gauges use the same
representation-theoretic principle, but the bisections encode partial rigid
motions of a fixed bounded domain rather than only changes of frame.  Work on
partial or learned equivariance relaxes or estimates the symmetry constraint
from data \cite{RomeroLohit2022}; here the surviving partial symmetry is
specified geometrically and imposed exactly at the layer level.  Recent
categorical and Lie-groupoid approaches formulate equivariant architectures
using naturality or convolution on groupoid arrows
\cite{Maruyama2025,Astwood2026}.  Our focus is more specific: we derive a
boundary stratification from tangent cones, classify the complete
object-space kernel on all source--target stratum pairs, and analyze the
filtered behavior of these kernels under composition.  

Finally, neural
operators learn maps between function spaces and have become a standard tool
for PDE surrogates \cite{FNO2021,Kovachki2023}; the present framework supplies
a representation-theoretic way of encoding bounded-domain geometry, while our
experiments also emphasize that a useful architectural bias need not coincide
with an exact symmetry of the target operator (see Sect. \ref{sec:boundary-learning}).

\medskip
The main results can be summarized as follows.
\begin{enumerate}[label=\textup{(\roman*)}]
\item We define the measured groupoid symmetry datum
      \eqref{eq:intro-symmetry-datum}, separating the groupoid of arrows from
      the selected pseudogroup of coherent partial transformations and from
      the feature representations.
\item We classify pointwise equivariant channels by isotropy intertwiners and
      prove the bisection-equivariant kernel theorem
      (Theorem~\ref{thm:bisection-kernel}).  Its pair-orbit reduction
      (Proposition~\ref{prop:pair-orbits}) identifies one
      joint-stabilizer intertwiner per orbit of pairs and recovers ordinary
      steerable convolution as Corollary~\ref{cor:euclidean-kernel}.
\item We introduce the tangent-cone restricted Euclidean groupoid and prove
      its bulk--edge--corner orbit and isotropy classification
      (Theorem~\ref{thm:tc-orbit-classification}).  For a rectangle we derive
      a complete normal form for all diagonal and cross-stratum kernels
      (Theorem~\ref{thm:rectangle-kernel-classification}).
\item We develop the finite $D_4$ theory, including the distinct edge and
      corner branching rules, finite completeness, and closed parameter
      counts for the ten geometric block families
      (Theorem~\ref{thm:discrete-parameter-counts}).
\item We prove sharp two-layer and multi-layer composition results in terms of
      eroded bisection domains
      (Theorems~\ref{thm:two-layer-erosion} and
      \ref{thm:n-layer-filtered-equivariance}), and extend the result to
      pointwise nonlinearities, affine maps, parallel branches, and residual
      networks.
\item We turn the classification into a practical architecture with
      stratified feature types, equivariant nonlinearities and normalization,
      offline nullspace bases, and sparse gather--transform--scatter
      implementations.  The resulting parameterization is complete for the
      chosen finite symmetry datum.
\item We verify the algebraic constraints and filtered composition theorem to
      machine precision.  A synthetic operator-identification experiment
      confirms completeness and reduced identification complexity.  A
      Poisson--Dirichlet kernel example is reported as a test of
      boundary-aware inductive bias after showing that the fixed-domain inverse
      is globally, but not generally locally, equivariant.
\end{enumerate}

\medskip
\noindent\textit{Organization.}
Part~\ref{part:symmetry-kernels} develops the abstract symmetry datum,
transport operators, pointwise channels, and the bisection-equivariant kernel
theorem.  Part~\ref{part:euclidean-groupoids} studies the Euclidean action
groupoid, its tangent-cone refinement, the boundary-aware pair-orbit
classification, and the finite $D_4$ parameter counts.
Part~\ref{part:filtered-equivariance} introduces finite propagation and proves
the erosion theorems for compositions and deep networks.
Part~\ref{part:architecture} specifies the corresponding neural architecture,
including nonlinearities, normalization, residual connections, gauges, and
the sparse implementation.  Part~\ref{part:experiments} reports exact
certificates, synthetic recovery, and the boundary-value experiment.  The final
section summarizes the results and discusses the principal mathematical and
computational extensions.

\part{Groupoid symmetry data and equivariant kernels}
\label{part:symmetry-kernels}

\section{Groupoids, bisections, and symmetry data}
\label{sec:groupoid-symmetry-data}
\label{sec:groupoids-basics} 

The purpose of this section is to isolate the geometric and
representation-theoretic data that determine the notion of equivariance used
throughout the article.  A groupoid records which points of the object space can
be related by admissible arrows.  A selected family of local bisections records
which arrows can be assembled into coherent partial transformations of
regions.  A measure determines the spaces of fields on which the network
acts, and representations of the groupoid determine how the internal feature
fibres transform.  Equivariance is therefore attached not to an abstract
groupoid alone, but to the complete collection of these data.

This formulation contains ordinary group equivariance as a special case.  It
also covers situations in which an ambient group acts on a larger space but
not on the domain supporting the signals.  In that case the surviving rigid
motions are naturally partial transformations; the corresponding arrows and
local bisections remain available even though no global group action on the
domain exists.

\subsection{Groups, global actions, and partial transformations}
\label{sec:groups-symmetries}

We begin with the classical global picture.

\begin{definition}[Group action]
Let $G$ be a group and let $X$ be a set.  A left action of $G$ on $X$ is a map
\[
  G\times X\longrightarrow X,
  \qquad (g,x)\longmapsto g\cdot x,
\]
such that
\[
  e\cdot x=x,
  \qquad
  (gh)\cdot x=g\cdot(h\cdot x)
\]
for all $g,h\in G$ and $x\in X$.  When $G$ and $X$ carry topological or smooth
structures, the action is required to have the corresponding regularity.
\end{definition}

A group action provides one transformation of the whole space for every
element of $G$.  This globality is precisely what fails after restricting the
space of signals.  Let $D\subseteq X$.  Even if $G$ acts on $X$, a generic
$g\in G$ need not satisfy $gD \subseteq D$.  It nevertheless defines a partial
transformation
\begin{equation}
  g\colon D\cap g^{-1}D\longrightarrow D\cap gD,
  \qquad x\longmapsto g\cdot x.
  \label{eq:partial-action-on-domain}
\end{equation}
The domains in \eqref{eq:partial-action-on-domain} depend on $g$; consequently,
the surviving transformations do not in general form a group acting on $D$.
They are closed instead under restriction, partially defined composition, and
inverse.  Groupoids record these pointwise admissible transformations, while
local bisections assemble them into coherent motions of subsets of $D$.

\begin{remark}[Scope of the groupoid language]
Groupoids are not the only formalism for partial symmetry: partial group
actions, inverse semigroups, pseudogroups, and more general categories are
closely related alternatives.  The advantage relevant here is that groupoids
combine invertible partial transformations with orbit, isotropy, and
representation theory.  Our claim is therefore not that every conceivable
notion of symmetry must be a groupoid, but that groupoids are a natural and
effective language for the invertible local symmetries considered in this
article.
\end{remark}

\subsection{Groupoids, orbits, isotropy, and restrictions}
\label{sec:groupoids}

\begin{definition}[Groupoid]
A groupoid consists of two sets $\Gamma$ and $\Omega$, source and target maps
\[
  s,t\colon\Gamma\longrightarrow\Omega,
\]
and a partially defined multiplication (or composition) $\circ$.  It is customarily displayed as $ \Gamma\rightrightarrows\Omega$.
Elements of $\Omega$ are called objects.  An element $\alpha\in\Gamma$ with
$s(\alpha)=a$ and $t(\alpha)=b$ is written $\alpha\colon a\to b$ and is called
an arrow. Two arrows $\beta$, $\alpha$ are composable if $s(\beta) = t(\alpha)$, in which case their multiplication $\beta \circ \alpha$ is an arrow $\beta \circ \alpha \colon  s(\alpha) \to t(\beta)$.  Multiplication is associative: $\gamma \circ (\beta \circ \alpha) = (\gamma \circ \beta) \circ \alpha$, provided $s(\gamma) = t(\beta)$ and $s(\beta) = t(\alpha)$;
each object $a\in \Omega$ has a unit arrow $1_a\colon a\to a$: 
$\alpha \circ 1_a = 1_b \circ \alpha = \alpha$ for every $\alpha \colon a \to b$; and every arrow
$\alpha\colon a\to b$ has an inverse $\alpha^{-1}\colon b\to a$: $\alpha^{-1}\circ \alpha = 1_a$, $\alpha\circ \alpha^{-1} = 1_b$.
\end{definition}

For $a,b\in\Omega$ we use the notation
\[
  \Gamma_a=s^{-1}(a),
  \qquad
  \Gamma^b=t^{-1}(b),
  \qquad
  \Gamma_a^b=\Gamma_a\cap\Gamma^b.
\]
that represent, respectively, the subset of arrows with source $a$, the subset of arrows with target $b$ and the subset of arrows with source $a$ and target $b$.
The isotropy group at $a$ is defined as:
\begin{equation}
  \Gamma(a):=\Gamma_a^a,
  \label{eq:isotropy-group}
\end{equation}
and the orbit through $a$ is
\begin{equation}
  \mathcal O(a)
  :=\{b\in\Omega:\Gamma_a^b\neq\varnothing\}.
  \label{eq:groupoid-orbit}
\end{equation}
The orbits partition $\Omega$; their set is denoted by $\pi_0(\Gamma)$.  The
groupoid is transitive when it has a single orbit.

\begin{definition}[Restriction]
Let $D\subseteq\Omega$.  The restriction, or full subgroupoid, of $\Gamma$ to
$D$ is
\begin{equation}
  \Gamma|_D
  :=\{\alpha\in\Gamma:s(\alpha),t(\alpha)\in D\}
  \rightrightarrows D.
  \label{eq:groupoid-restriction}
\end{equation}
\end{definition}

Restriction is the elementary operation that will allow us to retain the
arrows of an ambient symmetry that remain meaningful on a bounded domain.
Notice that restricting a transitive groupoid need not preserve the original
global transformations, although the restricted groupoid may still have many
arrows and may itself remain transitive.

\begin{example}[Action groupoid]
\label{ex:action-groupoid}
Let $G$ act on $X$.  The associated action groupoid is
\[
  G\ltimes X\rightrightarrows X.
\]
Its arrows are pairs $(g,x)\in G\times X$, with
\[
  s(g,x)=x,
  \qquad
  t(g,x)=g\cdot x.
\]
Composition and inversion are
\[
  (h,g\cdot x)\circ(g,x)=(hg,x),
  \qquad
  (g,x)^{-1}=(g^{-1},g\cdot x).
\]
Thus every global group action determines a groupoid.  The isotropy group of
the action groupoid at $x$ is the stabilizer
\[
  G_x=\{g\in G:g\cdot x=x\}.
\]
\end{example}

\begin{example}[A restricted translation symmetry]
\label{ex:restricted-translation}
Let $\Z$ act on $\R$ by translations and let $D=[a,b]$.  The group $\Z$ does
not act on $D$, unless the interval is replaced by a translation-invariant
set.  The restricted action groupoid is nevertheless well defined:
\begin{equation}
  (\Z\ltimes\R)|_D
  =\{(n,x)\in\Z\times D:x+n\in D\}
  \rightrightarrows D.
  \label{eq:restricted-translation-groupoid}
\end{equation}
Its arrows are exactly the integer translations whose source and target lie
in $D$.  The same construction applies to a lattice action on $\R^d$ and a
bounded image domain.  This elementary example contains the basic mechanism
used later for Euclidean motions on domains with boundary.
\end{example}

\begin{remark}[Regularity]
In the applications below, $\Omega$ will be a manifold, a manifold with
corners, a stratified space, or a finite pixel grid.  Correspondingly,
$\Gamma$ will be a Lie groupoid, a groupoid compatible with a stratification,
or a finite groupoid.  For a Lie groupoid, the arrow and object spaces are
smooth manifolds, the source and target maps are submersions, and all
structure maps are smooth; see, for example, \cite{Ma05}.  Whenever the object
space has strata of different dimensions, all regularity statements are to be
understood stratum by stratum unless explicitly stated otherwise.
\end{remark}

\subsection{Local bisections and admissible pseudogroups}
\label{ssec:bisections-basics}

A single arrow relates two fibres of the source or target maps.  For instance, if $\alpha \colon a \to b \in \Gamma$, then left composition with $\alpha$ defines a map $L_\alpha \colon \Gamma^a \to \Gamma^b$, $L_\alpha(\beta) = \alpha \circ \beta$.  Equivariance of a nonlocal operator,
however, requires a coherent transformation of a whole region.  This is the
role that local bisections are going to play.

\begin{definition}[Local bisection]
\label{def:bisection}
Let $\Gamma\rightrightarrows\Omega$ be a topological groupoid.  A local
bisection is a continuous map
\[
  b\colon U_b\longrightarrow\Gamma
\]
defined on an open set $U_b\subseteq\Omega$ such that
\[
  s\circ b=\id_{U_b}
\]
and
\[
  \tau_b:=t\circ b\colon U_b\longrightarrow U_b':=\tau_b(U_b)
\]
is a homeomorphism onto an open subset $U_b'\subseteq\Omega$.  We will also call $U_b$ the domain of the local bisection $b \colon U_b \to \Gamma$, and we will denote it as $U_b = \mathrm{dom\,} (b)$.  In the smooth
setting $b$ and $\tau_b$ are required to be smooth, with $\tau_b$ a
diffeomorphism onto its image.  In the purely algebraic or finite setting,
``open'' and ``homeomorphism'' are replaced by ``subset'' and ``bijection''.
A bisection is global if $U_b=U_b'=\Omega$.
\end{definition}

\begin{remark}
Alternatively, a local bisection can be defined as a subset $\Sigma \subset \Gamma$, such that the restriction of the source and target maps to $\Sigma$ are homeomorphisms.   Denoting by $b = (s|_\Sigma)^{-1}$ we recover Def. \ref{def:bisection}.
\end{remark}

If $b\colon U\to\Gamma$ and $b'\colon U'\to\Gamma$ are local bisections, their
composition is defined wherever $\tau_b(x)\in U'$ for $x \in U$, by
\begin{equation}
  (b'\circ b)(x)
  =b'\bigl(\tau_b(x)\bigr)\circ b(x) \, ,
  \label{eq:bisection-composition}
\end{equation}
and then, $b'\circ b$ is a local bisection with domain $\tau_b^{-1}(U') = \tau_b^{-1}(\mathrm{dom\,}(b')) \subseteq U = \mathrm{dom\,}(b)$. 
The inverse bisection $b^{-1}$ is  defined as:
\begin{equation}
  b^{-1}(y)
  =b\bigl(\tau_b^{-1}(y)\bigr)^{-1} \, .
  \label{eq:bisection-inverse}
\end{equation}
with domain $\mathrm{dom\,}(b^{-1}) = \tau_b (U) = \tau_b (\mathrm{dom}(b))$.   Note that both $b^{-1} \circ b$ and $b\circ b^{-1}$ are well defined local bisections with domains $U = \mathrm{dom\,}(b)$ and $U' = \mathrm{dom\, }(b^{-1})$ respectively, and $b^{-1} \circ b =: 1_U \colon U \to \Gamma$, $1_U(x) = 1_x$ for all $x\in U$; $b \circ b^{-1} =: 1_{U'} \colon U' \to \Gamma$, $1_{U'}(y) = 1_y$, for all $y\in U'$. The local bisections $1_U$ are called identity local bisections.

Local bisections are stable under restriction of their domains.  
Local bisections form an inverse semigroup denoted as $\Bisloc(\Gamma)$; the composition of local bisections is associative and $b^{-1}\circ b \circ b^{-1} = b^{-1}$, $b\circ b^{-1} \circ b = b$ \cite{Pa99}.    Their associated local
transformations $\tau_b$ form a pseudogroup\footnote{Pseudogroup here is understood in its simpler form: A pseudogroup $\mathcal{G}$ is a collection of partial homeomorphisms between open subsets of a topological space such that $\mathcal{G}$  is closed under composition and inverses, where we compose $\phi, \psi \in \mathcal{G}$ only if $\mathrm{im\,} \psi = \mathrm{dom\,} \phi$.} after closure under compatible
unions.  On the other hand global bisections form a group $\Bis(\Gamma)$.  In what follows we will not make use of the theory of inverse semigroups and/or pseudogroups as we will concentrate on concrete families of local bisections determined by the structure of the supporting groupoid.  

Constant and rigid bisections:
For the action groupoid $G\ltimes X$, every $g\in G$ defines a constant global
bisection
\begin{equation}\label{ex:constant-bisections}
  b_g(x)=(g,x),
  \qquad
  \tau_{b_g}(x)=g\cdot x.
\end{equation}
Thus the original group $G$ embeds into $\Bis(G\ltimes X)$ in a canonical way.  If the action
groupoid is restricted to $D\subseteq X$, the same formula, Eq. (\ref{ex:constant-bisections}), defines a local
bisection on every open set
\[
  U\subseteq D\cap g^{-1}D.
\]
We denote it by
\begin{equation}
  b_{g,U}(x)=(g,x)
  \label{eq:rigid-bisection}
\end{equation}
and call it a rigid local bisection.  Its associated partial transformation is
$g|_U\colon U\to gU$.  These bisections formalize the ambient rigid motions
that survive only locally on the restricted domain.

\medskip
The complete family $\Bisloc(\Gamma)$ of local bisections of a groupoid is often larger than the geometric
symmetries one wishes to impose on a model.  We therefore select an admissible
subfamily.

\begin{definition}[Admissible and ample families of bisections]
\label{def:admissible-bisections}
An admissible family of local bisections is a subset
$\mathcal B\subseteq\Bisloc(\Gamma)$ that contains the identity local bisections and
is closed under restriction, inversion, and every defined composition.  It is
called ample if every arrow is attained by a member of the family: for every
$\alpha\in\Gamma$ there exists $b\in\mathcal B$ such that
\[
  s(\alpha)\in U_b,
  \qquad
  b\bigl(s(\alpha)\bigr)=\alpha.
\]
\end{definition}

The choice of $\mathcal B$ is part of the model.  Two different admissible
families on the same groupoid can impose different equivariance constraints.
For example, the full family of local bisections permits all coherent local
motions encoded by the groupoid, whereas the rigid family of
Example~\ref{ex:constant-bisections} retains only those assembled from a single
ambient transformation on each local domain.  The latter is the family used
for the Euclidean constructions in Part~\ref{part:euclidean-groupoids}.

\subsection{Linear representations and reduction to isotropy}
\label{sec:reps}

The arrows of a groupoid ``move'' points of the object space.  A representation
specifies how they transport the internal feature spaces attached to those
points.

\begin{definition}[Linear representation of a groupoid]
\label{def:groupoid-representation}
Let $p\colon E\to\Omega$ be a field of finite-dimensional complex vector
spaces, with fibre $E_a=p^{-1}(a)$.  A linear representation of
$\Gamma\rightrightarrows\Omega$ on $E$ is an assignment
\[
  \alpha\colon a\to b
  \quad\longmapsto\quad
  R(\alpha)\colon E_a\longrightarrow E_b
\]
of linear isomorphisms satisfying
\begin{equation}
  R(\beta\circ\alpha)=R(\beta)R(\alpha),
  \qquad
  R(1_a)=I_{E_a}.
  \label{eq:groupoid-representation-law}
\end{equation}
If the fibres are Hermitian spaces and every $R(\alpha)$ is unitary, the
representation is called unitary.  Measurability, continuity, or smoothness is
required according to the category in which $E$ and $\Gamma$ are considered.
\end{definition}

A representation may equivalently be viewed as a functor from the groupoid
$\Gamma$ to the category of vector spaces and linear isomorphisms.  An
intertwiner between representations $(E,R)$ and $(E',R')$ is a family of
linear maps $T_a\colon E_a\to E'_a$ such that
\begin{equation}
  T_bR(\alpha)=R'(\alpha)T_a
  \label{eq:representation-intertwiner}
\end{equation}
for every arrow $\alpha\colon a\to b$.  Alternatively, in categorical language, an intertwiner is a natural transformation between the functors $R$ and $R'$. The representations are equivalent if
the maps $T_a$ can be chosen invertible.

The restriction of $R$ to the isotropy group $\Gamma(a)$ is a  linear representation
\[
  \mu_a\colon\Gamma(a)\longrightarrow\mathrm{GL}(E_a),
\]
called the isotropy representation at $a$.  If $a$ and $b$ belong to the same
orbit and $\gamma\colon a\to b$, conjugation by $\gamma$ identifies
$\Gamma(a)$ with $\Gamma(b)$, while $R(\gamma)$ identifies the corresponding
isotropy representations, thus isotropy representations at objects $a,b$ in the same orbit are equivalent.

\begin{proposition}[Reduction of a transitive groupoid to isotropy]
\label{thm:imprimitivity}
Let $\Gamma\rightrightarrows\Omega$ be a transitive algebraic groupoid and fix
$a_0\in\Omega$.  Restriction to the isotropy group defines an equivalence of
categories
\begin{equation}
  \Rep(\Gamma)\simeq\Rep\bigl(\Gamma(a_0)\bigr).
  \label{eq:rep-equivalence-isotropy}
\end{equation}
More explicitly, if
$\mu\colon\Gamma(a_0)\to\mathrm{GL}(V)$ is a linear representation, the corresponding
linear representation $R_\mu$ of $\Gamma$ is carried by the associated bundle
\begin{equation}
  E_\mu
  :=\Gamma_{a_0}\times_{\Gamma(a_0)}V
  \longrightarrow\Omega,
  \qquad
  [\gamma,v]\longmapsto t(\gamma),
  \label{eq:associated-groupoid-bundle}
\end{equation}
where $\Gamma_{a_0}=s^{-1}(a_0)$ and
\[
  (\gamma\circ h,v)\sim(\gamma,\mu(h)v),
  \qquad h\in\Gamma(a_0).
\]
An arrow $\alpha\colon a\to b$ acts by
\begin{equation}
  R_\mu(\alpha)[\gamma,v]=[\alpha\circ\gamma,v],
  \qquad t(\gamma)=a.
  \label{eq:induced-groupoid-action}
\end{equation}
For a nontransitive groupoid, the same construction is applied independently
on every orbit.
\end{proposition}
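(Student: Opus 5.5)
We will prove the equivalence by exhibiting two functors and natural isomorphisms between their composites and the identities. The restriction functor $\mathrm{Res}\colon\Rep(\Gamma)\to\Rep(\Gamma(a_0))$ sends $(E,R)$ to $(E_{a_0},\mu_{a_0})$ and an intertwiner $T$ to its component $T_{a_0}$. The inverse functor is the induction $\mu\mapsto(E_\mu,R_\mu)$ of \eqref{eq:associated-groupoid-bundle}. First we check that $E_\mu$ is well defined. The relation $(\gamma\circ h,v)\sim(\gamma,\mu(h)v)$ is an equivalence relation because $\Gamma(a_0)$ is a group acting freely on the right of $\Gamma_{a_0}$. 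The projection $[\gamma,v]\mapsto t(\gamma)$ is constant on classes. Each fibre $(E_\mu)_a$ is a vector space: fixing one $\gamma_a\in\Gamma_{a_0}^a$, which exists by transitivity, the map $v\mapsto[\gamma_a,v]$ is a bijection $V\to(E_\mu)_a$. It is injective and surjective because any other $\gamma\in\Gamma_{a_0}^a$ has the form $\gamma_a\circ h$ with $h=\gamma_a^{-1}\circ\gamma\in\Gamma(a_0)$. We transport the linear structure through this bijection; changing $\gamma_a$ alters it only by the linear map $\mu(h)$, so the structure is canonical. Next, $R_\mu(\alpha)$ in \eqref{eq:induced-groupoid-action} is well defined, since $[\alpha\circ\gamma\circ h,v]=[\alpha\circ\gamma,\mu(h)v]$. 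It is linear and invertible with inverse $R_\mu(\alpha^{-1})$, and the law \eqref{eq:groupoid-representation-law} follows from associativity and the unit axiom. An intertwiner $S\colon V\to V'$ of $\Gamma(a_0)$-representations induces $[\gamma,v]\mapsto[\gamma,Sv]$, which is well defined because $S$ commutes with $\mu(h)$, and which clearly satisfies \eqref{eq:representation-intertwiner}.

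Second, we show $\mathrm{Res}\circ\mathrm{Ind}\cong\id$. The fibre of $E_\mu$ over $a_0$ is identified with $V$ via $v\mapsto[1_{a_0},v]$. For $h\in\Gamma(a_0)$ we have $R_\mu(h)[1_{a_0},v]=[h,v]=[1_{a_0},\mu(h)v]$, so this identification is an isomorphism of $\Gamma(a_0)$-representations, and it is visibly natural in $\mu$.

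Third, we show $\mathrm{Ind}\circ\mathrm{Res}\cong\id$, which is the main step. Given $(E,R)$, we define $\Phi_a\colon\Gamma_{a_0}\times_{\Gamma(a_0)}E_{a_0}\to E_a$ on the fibre over $a$ by $\Phi[\gamma,v]=R(\gamma)v$. This is well defined because $R(\gamma\circ h)v=R(\gamma)R(h)v$. Each $\Phi_a$ is a linear isomorphism, since in the trivialization by $\gamma_a$ it is just $R(\gamma_a)$. The family $\Phi$ intertwines, because $\Phi(R_\mu(\alpha)[\gamma,v])=R(\alpha\circ\gamma)v=R(\alpha)\Phi[\gamma,v]$. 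It is natural in $(E,R)$: for an intertwiner $T$ we have $T_aR(\gamma)=R'(\gamma)T_{a_0}$, which is exactly \eqref{eq:representation-intertwiner}.

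The only genuinely delicate point is full faithfulness, or equivalently checking that intertwiners are determined by their $a_0$-component. This follows because $T_a=R'(\gamma_a)T_{a_0}R(\gamma_a)^{-1}$ is forced by \eqref{eq:representation-intertwiner}, and it is independent of the choice of $\gamma_a$ precisely when $T_{a_0}$ commutes with the isotropy action. The two natural isomorphisms already imply this, so no separate argument is needed. For a nontransitive groupoid, a representation is the same as an independent family of representations of the restrictions $\Gamma|_{\mathcal O}$, one for each orbit $\mathcal O\in\pi_0(\Gamma)$, since no arrows connect distinct orbits. Applying the transitive case orbitwise, with a chosen base point in each orbit, gives the final sentence of the statement.
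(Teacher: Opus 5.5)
Your proposal is correct and follows the same route as the paper. The paper only asserts that restriction and the associated-bundle induction are mutually inverse up to natural isomorphism, and you supply the omitted verifications: well-definedness of $E_\mu$ and $R_\mu$, the unit isomorphism $v\mapsto[1_{a_0},v]$, and the counit $[\gamma,v]\mapsto R(\gamma)v$. One small correction: $\sim$ is an equivalence relation simply because it is the orbit relation of the right action $(\gamma,v)\cdot h=(\gamma\circ h,\mu(h)^{-1}v)$; freeness of $\Gamma(a_0)$ on $\Gamma_{a_0}$ is needed only for the injectivity of $v\mapsto[\gamma_a,v]$, not for $\sim$ to be an equivalence relation.
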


\begin{proof}
Restriction gives a functor from representations of $\Gamma$ to
representations of $\Gamma(a_0)$.  Conversely,
\eqref{eq:associated-groupoid-bundle}--\eqref{eq:induced-groupoid-action}
induce a representation from any isotropy representation.  The two
constructions are inverse up to natural equivalence.  In the topological and
smooth settings the same statement holds in the corresponding categories,
provided the groupoid, the associated bundle, and the representation satisfy
the required regularity; see \cite{Ma05} and
\cite{ibort-marmo-mas-dorca-schiavone}.
\end{proof}

\begin{remark}[Complete reducibility]
The equivalence \eqref{eq:rep-equivalence-isotropy} does not by itself imply an
isotypic decomposition.  Whenever Schur orthogonality and multiplicity
formulas are used later, the fibres will be finite-dimensional and the
relevant isotropy groups will be compact (or finite), so that unitary
representations are completely reducible.  We work over $\C$ in the abstract
theory.  Real implementations will be obtained by choosing real forms of the
representations; this avoids the additional real, complex, and quaternionic
commutant cases in the abstract form of Schur's lemma.
\end{remark}

\subsection{Measures and the symmetry datum of a channel}
\label{ssec:symmetry-datum}

To pass from fibrewise representations to spaces of signals, the object space
must be equipped with a measure.  Let $\nu$ be a $\sigma$-finite Borel measure
on $\Omega$.  A local bisection $b\colon U_b\to\Gamma$ is
$\nu$-preserving if
\begin{equation}
  (\tau_b)_*(\nu|_{U_b})=\nu|_{U_b'}.
  \label{eq:measure-preserving-bisection}
\end{equation}
In the continuous Euclidean examples, $\nu$ will be Lebesgue measure in the
bulk, arclength on smooth boundary strata, and counting measure on corner
strata.  In the discrete theory it will be counting measure on every stratum.

\begin{definition}[Measured bisection symmetry datum]
\label{def:symmetry-datum}
A measured bisection symmetry datum for a channel is a tuple
\begin{equation}
  \mathfrak S_{\mathrm{in,out}}
  =\Bigl(
     \Gamma\rightrightarrows\Omega,
     \mathcal B,
     \nu;
     (E^{\mathrm{in}},R^{\mathrm{in}}),
     (E^{\mathrm{out}},R^{\mathrm{out}})
    \Bigr),
  \label{eq:symmetry-datum}
\end{equation}
where
\begin{enumerate}[label=\textup{(\roman*)}]
\item $\Gamma\rightrightarrows\Omega$ is a groupoid in the chosen regularity
category;
\item $\mathcal B\subseteq\Bisloc(\Gamma)$ is an admissible family of local
bisections;
\item $\nu$ is a $\sigma$-finite measure preserved by the bisections in
$\mathcal B$;
\item $(E^{\mathrm{in}},R^{\mathrm{in}})$ and
$(E^{\mathrm{out}},R^{\mathrm{out}})$ are measurable finite-dimensional
unitary representations of $\Gamma$.
\end{enumerate}
\end{definition}

Each entry of \eqref{eq:symmetry-datum} has a distinct role.  The groupoid
determines the background ``symmetry'' of the problem, that is, the admissible pointwise arrows and their isotropy groups.  The
family $\mathcal B$ determines which arrows are required to assemble into
coherent partial transformations and hence which equivariance identities are
imposed.  The measure $\nu$ determines the Hilbert spaces of sections and the
change-of-variables rule.  Finally, the two representations specify how input
and output feature fibres transform.  Omitting any one of these ingredients
leaves the equivariance problem underdetermined.

\begin{remark}[Quasi-invariant measures]
The measure-preserving assumption is sufficient for all applications in this
article and keeps the transport formulas transparent.  The construction extends
to quasi-invariant measures by inserting the appropriate Radon--Nikodym
factor, or equivalently by working with half-densities.  We do not pursue that
extension here in order to keep the formulas as simple as possible.
\end{remark}

\subsection{Basic examples and standing assumptions}
\label{ssec:symmetry-datum-examples}

\begin{example}[Ordinary group equivariance]
\label{ex:global-group-symmetry-datum}
Suppose that a group $G$ acts on a measured space $(X,\nu)$ by
measure-preserving transformations.  Set
\[
  \Gamma=G\ltimes X,
  \qquad
  \mathcal B_G=\{b_g:g\in G\},
\]
where $b_g$ is the constant global bisection associated to $g$.  A symmetry datum based on
$(\Gamma,\mathcal B_G,\nu)$ reproduces the usual global notion of
$G$-equivariance.  Enlarging $\mathcal B_G$ to include restrictions of the
$b_g$ does not change a global intertwining relation, but it prepares the
formalism for domains on which only the restricted bisections survive.
\end{example}

\begin{example}[Restriction of an ambient symmetry]
\label{ex:restricted-symmetry-datum}
Let $G$ act on $X$ and let $D\subseteq X$ be the signal domain.  Define
\begin{equation}
  \Gamma_D:=(G\ltimes X)|_D
  =\{(g,x):x\in D,\ g\cdot x\in D\}
  \rightrightarrows D
  \label{eq:restricted-action-groupoid-general}
\end{equation}
and let $\mathcal B_D^{\mathrm{rig}}$ be the admissible family generated by the
rigid bisections $b_{g,U}$ of \eqref{eq:rigid-bisection}.  Even in the situations where the global
symmetry group
\[
  G_D=\{g\in G:gD=D\}
\]
becomes very small or even trivial, the datum
\[
  (\Gamma_D,\mathcal B_D^{\mathrm{rig}},\nu,
  R^{\mathrm{in}},R^{\mathrm{out}})
\]
can retain a rich collection of partial ambient symmetries. For instance consider that $D$ is a rectangle in the plane.  The full Euclidean group reduces to those transformations preserving $D$.  If $D$ is a generic domain, there are no non-trivial Euclidean motions preserving it.
This is the
prototype for the Euclidean construction developed in
Part~\ref{part:euclidean-groupoids}.
\end{example}

\begin{example}[Boundary-sensitive refinement]
\label{ex:boundary-sensitive-preview}
The restriction $\Gamma_D$ remembers which ambient motions have source and
target in $D$, but it need not distinguish interior points from boundary or
corner points at the level of object orbits and isotropy.  If $D$ has boundary,
one may replace $\Gamma_D$ by a subgroupoid whose arrows also preserve local
geometric data, such as tangent cones.  The resulting groupoid has separate
bulk, edge, and corner orbits.  Its rigid bisections, stratified measure, and
orbitwise representations define the boundary-aware symmetry datum used in
Parts~\ref{part:euclidean-groupoids} and~\ref{part:architecture}.
\end{example}

For clarity, the general results of Part~\ref{part:symmetry-kernels} will be
proved under the following standing assumptions, which are satisfied by the
continuous Euclidean and finite-grid models studied later:

\begin{enumerate}[label=\textup{(A\arabic*)},leftmargin=3em]
\item \label{ass:object-space}
$\Omega$ is a second-countable locally compact Hausdorff space, a compatible
stratified variant, or a finite set, and $\nu$ is a $\sigma$-finite Borel
measure.
\item \label{ass:groupoid-regularity}
$\Gamma\rightrightarrows\Omega$ is a topological, Lie, stratified, or finite
groupoid compatible with the structure of $\Omega$.
\item \label{ass:bisections}
$\mathcal B$ is an admissible family of $\nu$-preserving local bisections.  It
will be assumed ample whenever a classification in terms of all arrows or
isotropy data is invoked.
\item \label{ass:representations}
The input and output fields are finite-dimensional measurable Hermitian
bundles carrying unitary representations of $\Gamma$.
\item \label{ass:compact-isotropy}
Whenever irreducible decompositions or character formulas are used, the
relevant isotropy and joint-stabilizer groups are compact or finite.
\end{enumerate}

The next section constructs the partial transport operators associated with a
symmetry datum, defines pointwise and nonlocal equivariant channels, and proves
the bisection-equivariant kernel theorem.  The homogeneous group case will
then be recovered as a special case before the theory is applied to bounded
Euclidean domains.

\section{Equivariant channels and the bisection-equivariant kernel theorem}
\label{sec:bisection-kernel-theorem}
\label{sec:equivariant-maps} 

The symmetry datum of Definition~\ref{def:symmetry-datum} determines two
levels of equivariance.  A pointwise channel acts independently on each fibre
and is therefore constrained directly by the arrows of the groupoid.  A
nonlocal channel mixes fibres over different objects; to constrain such a map
one must transport a whole region coherently, which is the role of the chosen
family $\mathcal B$ of local bisections.  The main result of this section shows
that equivariance of an integral channel is equivalent to a transport law for
its two-point kernel.  The solutions are then classified orbit by orbit on
$\Omega\times\Omega$, with one joint-stabilizer intertwiner attached to each
orbit of pairs.

Throughout the section we work under assumptions
\ref{ass:object-space}--\ref{ass:representations}.  Compactness or finiteness
of isotropy and joint-stabilizer groups is invoked only when irreducible
decompositions or character formulas are used. 
\subsection{Representation bundles and spaces of sections}
\label{ssec:fields}

Let $(E,R)$ be a finite-dimensional measurable Hermitian representation of
$\Gamma\rightrightarrows\Omega$.  Thus every arrow
$\alpha\colon a\to b$ determines a unitary map
\[
  R(\alpha)\colon E_a\longrightarrow E_b,
\]
compatible with units and composition.  The corresponding Hilbert space of
square-integrable feature fields is the direct integral
\begin{equation}
  L^2(\Omega,E;\nu)
  :=\int_\Omega^{\oplus} E_a\,d\nu(a),
  \label{eq:L2-sections}
\end{equation}
consisting of measurable sections $\psi$ satisfying
\[
  \|\psi\|_{L^2}^2
  =\int_\Omega \|\psi(a)\|_{E_a}^2\,d\nu(a)<\infty.
\]
For a measurable set $U\subseteq\Omega$, let
\begin{equation}
  P_U^E\colon L^2(\Omega,E;\nu)\longrightarrow L^2(\Omega,E;\nu),
  \qquad
  (P_U^E\psi)(a)=\mathbf 1_U(a)\psi(a),
  \label{eq:section-projection}
\end{equation}
be the orthogonal projection onto sections supported in $U$.  When the bundle
is clear from the context we simply write $P_U$.

A linear channel is a bounded operator
\[
  \Phi\colon L^2(\Omega,E^{\mathrm{in}};\nu)
  \longrightarrow
  L^2(\Omega,E^{\mathrm{out}};\nu).
\]
The distinction between pointwise and nonlocal channels concerns how $\Phi$
interacts with the direct-integral decomposition~\eqref{eq:L2-sections}.

\subsection{Pointwise equivariant channels}
\label{ssec:pointwise}

\begin{definition}[Pointwise channel]
\label{def:pointwise-channel}
A pointwise channel is an essentially bounded measurable bundle map
\[
  \varphi=\{\varphi_a\}_{a\in\Omega},
  \qquad
  \varphi_a\colon E_a^{\mathrm{in}}\longrightarrow E_a^{\mathrm{out}},
\]
covering the identity of $\Omega$.  It acts on sections by
\[
  (\varphi\psi)(a)=\varphi_a\psi(a).
\]
It is \emph{$\Gamma$-equivariant} if, for every arrow
$\alpha\colon a\to b$,
\begin{equation}
  \varphi_b\,R^{\mathrm{in}}(\alpha)
  =R^{\mathrm{out}}(\alpha)\,\varphi_a.
  \label{eq:pointwise-equivariance}
\end{equation}
\end{definition}

Condition~\eqref{eq:pointwise-equivariance} says that $\varphi$ is a natural
transformation between the two representations.  Equivalently, the diagram
\[
\begin{tikzcd}[column sep=large,row sep=large]
E_a^{\mathrm{in}} \arrow[r,"\varphi_a"]
 \arrow[d,"R^{\mathrm{in}}(\alpha)"']
& E_a^{\mathrm{out}} \arrow[d,"R^{\mathrm{out}}(\alpha)"] \\
E_b^{\mathrm{in}} \arrow[r,"\varphi_b"']
& E_b^{\mathrm{out}}
\end{tikzcd}
\]
commutes for every arrow.  In convolutional terminology, pointwise channels
are the fibrewise linear maps usually implemented as $1\times1$ convolutions.

The next proposition gives the orbitwise classification in its most useful
form.

\begin{proposition}[Reduction of pointwise channels to isotropy]
\label{prop:pointwise-isotropy}
Let $\mathcal O\subseteq\Omega$ be an orbit and choose $a_0\in\mathcal O$.
Restriction to the fibre over $a_0$ defines a linear isomorphism
\begin{equation}
  \left\{
  \begin{array}{c}
  \Gamma|_{\mathcal O}\text{-equivariant}\\
  \text{pointwise channels}
  \end{array}
  \right\}
  \simeq
  \Hom_{\Gamma(a_0)}
  \bigl(E_{a_0}^{\mathrm{in}},E_{a_0}^{\mathrm{out}}\bigr).
  \label{eq:pointwise-isotropy-reduction}
\end{equation}
More explicitly, if
$C\in\Hom_{\Gamma(a_0)}(E_{a_0}^{\mathrm{in}},E_{a_0}^{\mathrm{out}})$
and $\alpha\colon a_0\to a$, then
\begin{equation}
  \varphi_a
  =R^{\mathrm{out}}(\alpha)
   C
   R^{\mathrm{in}}(\alpha)^{-1}
  \label{eq:pointwise-transport}
\end{equation}
defines a $\Gamma|_{\mathcal O}$-equivariant pointwise channel, and the
right-hand side is independent of the chosen arrow $\alpha$.
\end{proposition}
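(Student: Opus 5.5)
The plan is to construct the two maps explicitly and check that they are mutually inverse. The map in one direction is restriction, $\varphi\mapsto\varphi_{a_0}$. The map in the other direction is transport, $C\mapsto\varphi^C$, with $\varphi^C$ defined by \eqref{eq:pointwise-transport}.

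First, restriction lands in the intertwiner space. Apply \eqref{eq:pointwise-equivariance} to an isotropy arrow $h\in\Gamma(a_0)$, i.e.\ $h\colon a_0\to a_0$. This gives $\varphi_{a_0}R^{\mathrm{in}}(h)=R^{\mathrm{out}}(h)\varphi_{a_0}$, so $\varphi_{a_0}\in\Hom_{\Gamma(a_0)}(E_{a_0}^{\mathrm{in}},E_{a_0}^{\mathrm{out}})$. Restriction is clearly linear.

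Second, the transport formula is well defined, and this is the step where the intertwining property is really used. Suppose $\alpha,\alpha'\colon a_0\to a$ are two arrows. Then $h:=\alpha^{-1}\circ\alpha'$ lies in $\Gamma(a_0)$ and $\alpha'=\alpha\circ h$. By the representation law \eqref{eq:groupoid-representation-law},
\[
R^{\mathrm{out}}(\alpha')\,C\,R^{\mathrm{in}}(\alpha')^{-1}
=R^{\mathrm{out}}(\alpha)\,R^{\mathrm{out}}(h)\,C\,R^{\mathrm{in}}(h)^{-1}\,R^{\mathrm{in}}(\alpha)^{-1}.
\]
Since $C$ intertwines, $R^{\mathrm{out}}(h)\,C\,R^{\mathrm{in}}(h)^{-1}=C$, and the right-hand side equals the expression computed with $\alpha$. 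Transitivity on $\mathcal O$ guarantees that some arrow $a_0\to a$ exists for every $a\in\mathcal O$.

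Third, $\varphi^C$ is equivariant. Let $\beta\colon a\to b$ be an arrow in $\Gamma|_{\mathcal O}$ and choose $\alpha\colon a_0\to a$. Then $\beta\circ\alpha\colon a_0\to b$ is an admissible choice of arrow for computing $\varphi_b$, so
\[
\varphi_b R^{\mathrm{in}}(\beta)
=R^{\mathrm{out}}(\beta)R^{\mathrm{out}}(\alpha)\,C\,R^{\mathrm{in}}(\alpha)^{-1}R^{\mathrm{in}}(\beta)^{-1}R^{\mathrm{in}}(\beta)
=R^{\mathrm{out}}(\beta)\varphi_a .
\]

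Fourth, the two maps are inverse to each other. Taking $\alpha=1_{a_0}$ gives $\varphi^C_{a_0}=C$. Conversely, for any equivariant $\varphi$, applying \eqref{eq:pointwise-equivariance} to $\alpha\colon a_0\to a$ gives $\varphi_a=R^{\mathrm{out}}(\alpha)\varphi_{a_0}R^{\mathrm{in}}(\alpha)^{-1}$, so $\varphi=\varphi^{\varphi_{a_0}}$. Linearity of $C\mapsto\varphi^C$ is immediate. All of this is pure algebra; it is the groupoid analogue of Proposition~\ref{thm:imprimitivity}, applied to Hom-bundles.

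The main obstacle is regularity rather than algebra: Definition~\ref{def:pointwise-channel} requires $\varphi^C$ to be a measurable, essentially bounded bundle map. Boundedness is easy, because unitarity of $R^{\mathrm{in}}$ and $R^{\mathrm{out}}$ gives $\|\varphi^C_a\|=\|C\|$ for every $a$. For measurability I would use a measurable (or, in the Lie case, local smooth) section $\sigma\colon\mathcal O\to\Gamma_{a_0}$ of the target map, which exists under \ref{ass:object-space}--\ref{ass:groupoid-regularity}, for example via local bisections through $a_0$. I would then set $\varphi^C_a=R^{\mathrm{out}}(\sigma(a))\,C\,R^{\mathrm{in}}(\sigma(a))^{-1}$, which is measurable because $R$ is measurable. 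In the finite setting this issue disappears entirely.
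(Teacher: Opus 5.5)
Your proof is correct and follows the paper's own argument: restriction to $\Gamma(a_0)$ produces an intertwiner, independence of the arrow comes from writing $\widetilde\alpha=\alpha\circ h$ with $h\in\Gamma(a_0)$, and equivariance follows by composing transporting arrows. Your explicit check that the two maps are mutually inverse spells out what the paper leaves implicit. The measurability paragraph goes beyond the paper, whose proof is purely algebraic; there you assert, rather than prove, the existence of a measurable section of $t$ on $\Gamma_{a_0}$ under \ref{ass:object-space}--\ref{ass:groupoid-regularity}, although it is standard in the Lie case and explicit in the paper's Euclidean and finite models.
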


\begin{proof}
If $\varphi$ is equivariant, then~\eqref{eq:pointwise-equivariance} restricted
to the isotropy group $\Gamma(a_0)$ shows that $\varphi_{a_0}$ is an
isotropy intertwiner.  Conversely, let $C$ be such an intertwiner and define
$\varphi_a$ by~\eqref{eq:pointwise-transport}.  If
$\widetilde\alpha\colon a_0\to a$ is another arrow, then
$h=\alpha^{-1}\circ\widetilde\alpha\in\Gamma(a_0)$ and
\[
 R^{\mathrm{out}}(\widetilde\alpha)C
 R^{\mathrm{in}}(\widetilde\alpha)^{-1}
 =R^{\mathrm{out}}(\alpha)
  R^{\mathrm{out}}(h)C R^{\mathrm{in}}(h)^{-1}
  R^{\mathrm{in}}(\alpha)^{-1},
\]
which equals~\eqref{eq:pointwise-transport} because $C$ intertwines the
isotropy representations.  Equivariance under an arbitrary arrow in the
orbit follows by composition.
\end{proof}

If the chosen family $\mathcal B$ is ample, arrowwise equivariance can be
checked entirely through bisections: every arrow occurs as $b(a)$ for some
$b\in\mathcal B$.  Thus Proposition~\ref{prop:pointwise-isotropy} is also the
pointwise specialization of the bisection formalism developed below.

Suppose now that $\Gamma(a_0)$ is compact or finite.  Over $\C$, the two
isotropy representations admit isotypic decompositions
\begin{equation}
  E_{a_0}^{\mathrm{in}}
  \simeq
  \bigoplus_{\lambda\in\widehat{\Gamma(a_0)}}
  M_{\lambda}^{\mathrm{in}}\otimes W_\lambda,
  \qquad
  E_{a_0}^{\mathrm{out}}
  \simeq
  \bigoplus_{\lambda\in\widehat{\Gamma(a_0)}}
  M_{\lambda}^{\mathrm{out}}\otimes W_\lambda,
  \label{eq:pointwise-isotypic-decomposition}
\end{equation}
where $W_\lambda$ carries the irreducible representation $\lambda$ and
$M_\lambda^{\mathrm{in/out}}$ are multiplicity spaces.

\begin{theorem}[Structure of pointwise equivariant channels]
\label{thm:pointwise-structure}
Under the preceding compactness or finiteness hypothesis, every isotropy
intertwiner has the form
\begin{equation}
  C
  =\bigoplus_{\lambda}
   C_\lambda\otimes I_{W_\lambda},
  \qquad
  C_\lambda\in
  \Hom\bigl(M_\lambda^{\mathrm{in}},
            M_\lambda^{\mathrm{out}}\bigr).
  \label{eq:isotypic-blocks}
\end{equation}
Consequently, on each orbit, a pointwise equivariant channel is determined by
one arbitrary matrix $C_\lambda$ for every irreducible isotropy type occurring
in both the input and output fibres.  In particular,
\begin{equation}
  \dim\Hom_{\Gamma(a_0)}
  \bigl(E_{a_0}^{\mathrm{in}},E_{a_0}^{\mathrm{out}}\bigr)
  =\sum_{\lambda}
   \dim M_\lambda^{\mathrm{in}}
   \dim M_\lambda^{\mathrm{out}}.
  \label{eq:pointwise-dimension}
\end{equation}
\end{theorem}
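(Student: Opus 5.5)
The plan is to apply Schur's lemma to the isotypic decompositions \eqref{eq:pointwise-isotypic-decomposition}.

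\emph{Step 1: block decomposition.} By Proposition~\ref{prop:pointwise-isotropy}, it suffices to describe $\Hom_{\Gamma(a_0)}(E_{a_0}^{\mathrm{in}},E_{a_0}^{\mathrm{out}})$. Under the hypothesis \ref{ass:compact-isotropy}, the unitary isotropy representations are completely reducible, so the decompositions \eqref{eq:pointwise-isotypic-decomposition} exist. Here $\Gamma(a_0)$ acts trivially on each multiplicity space $M_\lambda$ and by $\lambda$ on $W_\lambda$. Writing $C$ in block form $C=(C_{\mu\lambda})$ with $C_{\mu\lambda}\colon M_\lambda^{\mathrm{in}}\otimes W_\lambda\to M_\mu^{\mathrm{out}}\otimes W_\mu$, the intertwining condition holds blockwise. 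This reduces the problem to intertwiners between isotypic components.

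\emph{Step 2: apply Schur in coordinates.} Choose bases $\{e_i\}$ of $M_\lambda^{\mathrm{in}}$ and $\{f_j\}$ of $M_\mu^{\mathrm{out}}$. Each block $C_{\mu\lambda}$ then splits into components $c_{ji}\in\Hom_{\Gamma(a_0)}(W_\lambda,W_\mu)$. Over $\C$, Schur's lemma gives two cases.
\begin{itemize}
\item If $\lambda\neq\mu$, then $c_{ji}=0$, so all off-diagonal blocks vanish.
\item If $\lambda=\mu$, then $c_{ji}=(C_\lambda)_{ji}\,I_{W_\lambda}$ for a scalar $(C_\lambda)_{ji}$.
\end{itemize}
Assembling the scalars into the matrix $C_\lambda\in\Hom(M_\lambda^{\mathrm{in}},M_\lambda^{\mathrm{out}})$ yields exactly $C_{\lambda\lambda}=C_\lambda\otimes I_{W_\lambda}$, which is \eqref{eq:isotypic-blocks}.

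Conversely, every operator of the form \eqref{eq:isotypic-blocks} commutes with the isotropy action, because $\Gamma(a_0)$ acts as $I\otimes\lambda(h)$ on each isotypic component. The map $(C_\lambda)_\lambda\mapsto C$ is therefore a linear bijection. Taking dimensions gives \eqref{eq:pointwise-dimension}. Only types $\lambda$ occurring in both fibres contribute, since otherwise one of the multiplicity spaces is zero.

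\emph{Step 3: transport along the orbit.} The statement about channels follows by transporting $C$ along the orbit via \eqref{eq:pointwise-transport}.

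The only delicate point is the use of Schur's lemma in the form $\Hom_G(W_\lambda,W_\lambda)=\C\,I$. This requires complex coefficients and finite-dimensional irreducibles, both of which are guaranteed by the working assumptions. For compact $\Gamma(a_0)$, one should also note that only finitely many $\lambda$ occur, because the fibres are finite-dimensional. This keeps the direct sum finite and makes the dimension count well defined.
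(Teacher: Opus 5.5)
Your proposal is correct and follows the paper's own route: Schur's lemma applied blockwise to the isotypic decompositions \eqref{eq:pointwise-isotypic-decomposition}, then transport along the orbit via Proposition~\ref{prop:pointwise-isotropy}. Your write-up also spells out the blockwise Schur computation and the converse inclusion, which the paper's one-line proof leaves implicit.
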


\begin{proof}
This is Schur's lemma applied to the isotypic decompositions
\eqref{eq:pointwise-isotypic-decomposition}, followed by
Proposition~\ref{prop:pointwise-isotropy}.
\end{proof}

\begin{corollary}[Schur alternative]
\label{prop:schur}
If the two representations are irreducible on a transitive orbit, a
pointwise equivariant channel is zero unless they are equivalent.  If they are
equivalent, the intertwiner space is one-dimensional over $\C$.
\end{corollary}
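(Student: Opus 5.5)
The plan is to deduce Corollary~\ref{prop:schur} from Proposition~\ref{prop:pointwise-isotropy} and Schur's lemma for the isotropy group; Theorem~\ref{thm:pointwise-structure} could also be used.

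First, fix $a_0$ in the transitive orbit $\mathcal O$. Proposition~\ref{prop:pointwise-isotropy} identifies the space of $\Gamma|_{\mathcal O}$-equivariant pointwise channels, via $\varphi\mapsto\varphi_{a_0}$, with
\[
\Hom_{\Gamma(a_0)}\bigl(E^{\mathrm{in}}_{a_0},E^{\mathrm{out}}_{a_0}\bigr).
\]
The whole statement therefore reduces to computing this single intertwiner space.

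Second, I will interpret ``irreducible on a transitive orbit'' through Proposition~\ref{thm:imprimitivity}. By that equivalence of categories, $(E,R)$ is irreducible as a representation of $\Gamma|_{\mathcal O}$ exactly when the isotropy representation $\mu_{a_0}$ is irreducible. Indeed, a subrepresentation of $\Gamma|_{\mathcal O}$ corresponds to a $\Gamma(a_0)$-invariant subspace of $E_{a_0}$, transported along arrows. The same equivalence identifies equivalence of groupoid representations with equivalence of the isotropy representations at $a_0$. This happens because the intertwiners at $a_0$ extend uniquely, and invertibly when they are invertible, via \eqref{eq:pointwise-transport}. This translation is the only step needing care, and I expect it to be the main point to justify rather than assert.

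Third, I will apply Schur's lemma over $\C$ to the finite-dimensional irreducible representations $\mu^{\mathrm{in}}_{a_0}$ and $\mu^{\mathrm{out}}_{a_0}$. Let $C$ be an intertwiner. Its kernel and its image are invariant subspaces, so $C$ is either $0$ or an isomorphism. Hence the intertwiner space is $\{0\}$ when the two representations are inequivalent. When they are equivalent, compose with a fixed isomorphism to get an endomorphism of an irreducible space. This endomorphism has an eigenvalue since $\C$ is algebraically closed and the space is finite-dimensional. Subtracting that eigenvalue times the identity gives a non-invertible intertwiner, which must therefore be zero. So the intertwiner space is $\C$ times the fixed isomorphism, which is one-dimensional.

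Alternatively, the same conclusion follows from \eqref{eq:pointwise-dimension}. With a single irreducible type on each side, the sum is $1$ when the types coincide and $0$ otherwise. No compactness is actually needed for this pure Schur argument, only finite dimensionality and working over $\C$, which agrees with the remark on complete reducibility.
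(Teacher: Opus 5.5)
Your proposal is correct and follows essentially the paper's route: the paper reads the corollary off the dimension formula \eqref{eq:pointwise-dimension} of Theorem~\ref{thm:pointwise-structure}, which is itself Schur's lemma combined with the reduction to isotropy in Proposition~\ref{prop:pointwise-isotropy}. That is exactly your alternative, and your main line simply applies Schur's lemma directly. Your use of Proposition~\ref{thm:imprimitivity} to translate irreducibility and equivalence to the isotropy level is sound, and you are right that compactness is not needed when both representations are irreducible.
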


\begin{remark}[Real representations]
The scalar conclusion in Corollary~\ref{prop:schur} is stated over $\C$.  For
real irreducible representations the commutant may be isomorphic to
$\R$, $\C$, or the quaternions.  All later finite-grid calculations are made
with explicit real matrix representations, so the relevant intertwiner spaces
are computed directly and no scalar-commutant assumption is needed.
\end{remark}

\begin{remark}[Equivariant quantum channels]
\label{rem:quantum}
For a quantum channel one replaces a feature fibre by an operator space such
as $\mathcal B(\mathcal H_a)$ and lets the groupoid act by $*$-automorphisms or
unitary conjugations.  The equivariance condition remains linear and is
therefore governed by the same intertwiner spaces.  Complete positivity is
then positivity of the Choi operator inside the corresponding invariant
subspace, while trace preservation is an affine constraint.  This situation will be pursued elsewhere.
\end{remark}

\subsection{Transport by local bisections}
\label{ssec:bisections}

A single arrow compares two fibres, but a nonlocal channel depends on many
source and target points simultaneously.  A local bisection supplies one arrow
at every point of a region and therefore induces a partial transport operator
on sections.

\begin{definition}[Transport operator]
\label{def:transport}
Let $(E,R)$ be a unitary representation and let
$b\colon U_b\to\Gamma$ be a $\nu$-preserving local bisection, with
$U_b'=\tau_b(U_b)$.  The transport operator associated with $b$ is
\begin{equation}
  \Lambda_R(b)
  \colon L^2(\Omega,E;\nu)\longrightarrow L^2(\Omega,E;\nu),
  \label{eq:transport-operator-map}
\end{equation}
defined by
\begin{equation}
  \bigl(\Lambda_R(b)\psi\bigr)(y)
  =
  \begin{cases}
  R\bigl(b(\tau_b^{-1}y)\bigr)
  \psi(\tau_b^{-1}y), & y\in U_b',\\[3pt]
  0, & y\notin U_b'.
  \end{cases}
  \label{eq:transport}
\end{equation}
We write $\Lambda^{\mathrm{in}}(b)$ and
$\Lambda^{\mathrm{out}}(b)$ for the operators associated with the two
representations in the symmetry datum.
\end{definition}

\begin{proposition}[Partial representation of the bisection pseudogroup]
\label{prop:transport-partial-isometry}
For every $b\in\mathcal B$, $\Lambda_R(b)$ is a partial isometry satisfying
\begin{align}
  \Lambda_R(b)^*\Lambda_R(b)&=P_{U_b}^{E},
  &
  \Lambda_R(b)\Lambda_R(b)^*&=P_{U_b'}^{E},
  \label{eq:transport-projections}\\
  \Lambda_R(b)^*&=\Lambda_R(b^{-1}).
  \label{eq:transport-adjoint}
\end{align}
If $b'$ and $b$ are composable, then
\begin{equation}
  \Lambda_R(b')\Lambda_R(b)
  =\Lambda_R(b'\circ b)
  \label{eq:transport-composition}
\end{equation}
on the natural initial space.  In particular, a global bisection acts
unitarily.
\end{proposition}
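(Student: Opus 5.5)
The plan is to compute everything pointwise from the defining formula \eqref{eq:transport}, using only three ingredients: the change-of-variables identity \eqref{eq:measure-preserving-bisection}, unitarity of each $R(\alpha)$, and the functoriality law \eqref{eq:groupoid-representation-law} combined with the bisection composition and inverse formulas \eqref{eq:bisection-composition}--\eqref{eq:bisection-inverse}.

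\textbf{Boundedness and the first projection identity.} First observe that $\Lambda_R(b)$ is well defined on $L^2$: since $\tau_b$ is a measurable bijection $U_b\to U_b'$ and $b$ is measurable, the right-hand side of \eqref{eq:transport} is a measurable section. Then compute the norm:
\[
\|\Lambda_R(b)\psi\|^2=\int_{U_b'}\|R(b(\tau_b^{-1}y))\psi(\tau_b^{-1}y)\|^2\,d\nu(y)=\int_{U_b}\|\psi(x)\|^2\,d\nu(x)=\|P_{U_b}\psi\|^2 .
\]
Here the middle equality uses unitarity of $R$ together with the substitution $y=\tau_b(x)$, which is justified by $\nu$-preservation. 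Hence $\Lambda_R(b)$ is isometric on $\operatorname{ran}P_{U_b}$ and vanishes on its complement. By polarization this gives $\Lambda_R(b)^*\Lambda_R(b)=P_{U_b}$, so $\Lambda_R(b)$ is a partial isometry.

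\textbf{The adjoint formula.} Next compute $\langle\Lambda_R(b)\psi,\phi\rangle$, change variables in the same way, and move $R(b(x))$ across the inner product as $R(b(x))^{-1}=R(b(x)^{-1})$. This yields
\[
\langle\psi,\chi\rangle,\qquad \chi(x)=\mathbf 1_{U_b}(x)\,R\bigl(b(x)^{-1}\bigr)\phi(\tau_b x).
\]
Now compare with $\Lambda_R(b^{-1})$. From \eqref{eq:bisection-inverse}, $b^{-1}(y)=b(\tau_b^{-1}y)^{-1}$, so $\tau_{b^{-1}}=\tau_b^{-1}$ and $\tau_{b^{-1}}^{-1}=\tau_b$. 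Substituting into \eqref{eq:transport} gives exactly $\chi$, which proves \eqref{eq:transport-adjoint}.

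\textbf{The second projection identity.} Since $b^{-1}$ is $\nu$-preserving with domain $U_b'$, applying the first projection identity to $b^{-1}$ gives $\Lambda_R(b)\Lambda_R(b)^*=\Lambda_R(b^{-1})^*\Lambda_R(b^{-1})=P_{U_b'}$.

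\textbf{Composition.} For $y$ in the image of $b'\circ b$, write $y=\tau_{b'}(\tau_b x)$. Evaluating $\Lambda_R(b')\Lambda_R(b)\psi$ at $y$ gives
\[
R\bigl(b'(\tau_b x)\bigr)R\bigl(b(x)\bigr)\psi(x)=R\bigl((b'\circ b)(x)\bigr)\psi(x),
\]
using functoriality of $R$ and the definition \eqref{eq:bisection-composition}. This is precisely $(\Lambda_R(b'\circ b)\psi)(y)$.

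The one subtlety, and the expected main obstacle, is the phrase ``on the natural initial space''. The left-hand side also vanishes at points where $\tau_b x\notin U_{b'}$, because $\Lambda_R(b')$ kills sections supported off $U_{b'}$. The right-hand side vanishes there too, since the domain of $b'\circ b$ is $\tau_b^{-1}(U_{b'})$. I would therefore check that the two operators agree identically, and in particular on the initial space $P_{\tau_b^{-1}(U_{b'})}$.

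Finally, for a global bisection $U_b=U_b'=\Omega$, both projections equal the identity, so $\Lambda_R(b)$ is unitary.
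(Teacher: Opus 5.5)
Your proof is correct and follows essentially the same route as the paper: the norm identity from $\nu$-preservation and fibrewise unitarity, the adjoint and final projection obtained through $b^{-1}$, and composition from functoriality of $R$ together with \eqref{eq:bisection-composition}. Your explicit inner-product computation of $\Lambda_R(b)^*$ fills in what the paper only sketches with ``the same calculation applied to $b^{-1}$,'' and you are right that the composition identity holds on all of $L^2$, not only on the natural initial space.
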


\begin{proof}
The measure-preserving property of $\tau_b$ and unitarity of $R(b(x))$ give
\[
 \|\Lambda_R(b)\psi\|_{L^2}^2
 =\int_{U_b}\|\psi(x)\|^2\,d\nu(x)
 =\|P_{U_b}^{E}\psi\|_{L^2}^2.
\]
The formulas for the adjoint and final projection follow by applying the same
calculation to $b^{-1}$.  Equation~\eqref{eq:transport-composition} follows
from the representation law for $R$ and the composition law for local
bisections.
\end{proof}
\begin{remark}[Recovery of the usual induced action on feature fields] For an action groupoid $G\ltimes X$ and a constant global bisection $b_g$,
Definition~\ref{def:transport} recovers the usual induced action of $g$ on
feature fields.  On a restricted domain, the same formula gives only a partial
isometry because the transformation is defined on a proper subset.   
\end{remark}

\subsection{Bisection-equivariant nonlocal channels}
\label{ssec:convolution}

We now formulate equivariance for a bounded operator that mixes different
fibres.  The projections in the definition are essential: a local bisection
can constrain only the part of the input and output that it actually
transports.

\begin{definition}[$\mathcal B$-equivariant channel]
\label{def:B-equivariant}
Let
\[
 \Phi\colon L^2(\Omega,E^{\mathrm{in}};\nu)
 \longrightarrow L^2(\Omega,E^{\mathrm{out}};\nu)
\]
be bounded.  We say that $\Phi$ is $\mathcal B$-equivariant if, for every
$b\in\mathcal B$, with source domain $U=U_b$ and image domain $U'=U_b'$, one
has
\begin{equation}
  P_{U'}^{E^{\mathrm{out}}}\,
  \Phi\,\Lambda^{\mathrm{in}}(b)
  =
  \Lambda^{\mathrm{out}}(b)\,
  P_U^{E^{\mathrm{out}}}\,
  \Phi\,P_U^{E^{\mathrm{in}}}.
  \label{eq:operator-equivariance}
\end{equation}
\end{definition}

The left-hand side transports an input supported in $U$ to $U'$, applies the
channel, and observes the output in $U'$.  The right-hand side first applies
the channel inside the source window $U$ and then transports the resulting
output to $U'$.  If $b$ is global, all projections are identities and
\eqref{eq:operator-equivariance} reduces to the ordinary intertwining relation
\[
  \Phi\Lambda^{\mathrm{in}}(b)
  =\Lambda^{\mathrm{out}}(b)\Phi.
\]
For a proper local bisection, the projected identity is the natural analogue
of global equivariance.

\begin{remark}[Layerwise character]
Unlike ordinary global equivariance, the identity
\eqref{eq:operator-equivariance} on a maximal partial domain is not generally
preserved under unrestricted composition: intermediate points may leave the
domain on which the bisection is defined.  The resulting erosion of admissible
domains is developed systematically in
Part~\ref{part:filtered-equivariance}.  The present section concerns the exact
classification of a single linear layer.
\end{remark}

An integral channel is specified by a measurable section
\[
 K\in
 \Gamma_{\mathrm{meas}}
 \bigl(\Omega\times\Omega,
       \Hom(p_2^*E^{\mathrm{in}},p_1^*E^{\mathrm{out}})\bigr)\, ,
\]
where $p_1(y,x)=y$ and $p_2(y,x)=x$, i.e., $K(y,x)\colon E_x^{\mathrm{in}}\to E_y^{\mathrm{out}}$,
through
\begin{equation}
  (\Phi_K\psi)(y)
  =\int_\Omega K(y,x)\psi(x)\,d\nu(x)\, .
  \label{eq:integral-operator}
\end{equation}
We assume that
$K$ is locally integrable and satisfies a standard boundedness condition, for
example a Schur bound on the relevant domains, so that
\eqref{eq:integral-operator} defines a bounded operator and locally integrable
kernels are unique up to $\nu\times\nu$-null sets.  These hypotheses are
automatic for the finite models and for the compactly supported bounded-domain
kernels used later.

\subsection{The bisection-equivariant kernel theorem}
\label{ssec:bisection-kernel-theorem}

\begin{theorem}[Bisection-equivariant kernel theorem]
\label{thm:bisection-kernel}
\label{thm:groupoid-convolution} 
Let $\mathfrak S_{\mathrm{in,out}} = (\Gamma, \mathcal{B}, \nu ; (E^{\mathrm{in}},R^{\mathrm{in}}), (E^\mathrm{out},R^\mathrm{out}))$ be a measured bisection symmetry datum and
let $\Phi_K$ be an integral channel of the class just described.  Then
$\Phi_K$ is $\mathcal B$-equivariant if and only if, for every
$b\in\mathcal B$,
\begin{equation}\label{eq:kernel_constraint}
  K\bigl(\tau_b(y),\tau_b(x)\bigr)
  =R^{\mathrm{out}}\bigl(b(y)\bigr)
   K(y,x)
   R^{\mathrm{in}}\bigl(b(x)\bigr)^{-1}
\end{equation}
for $\nu\times\nu$-almost every $(y,x)\in U_b\times U_b$.
\end{theorem}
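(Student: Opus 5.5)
The plan is to compute both sides of \eqref{eq:operator-equivariance} as integral operators and then compare their kernels, using uniqueness of locally integrable kernels up to $\nu\times\nu$-null sets. Fix $b\in\mathcal B$ with $U=U_b$, $U'=U_b'$. Then take $\psi\in L^2(\Omega,E^{\mathrm{in}};\nu)$.

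\emph{Left-hand side.} For $y'\in U'$, insert \eqref{eq:transport} into \eqref{eq:integral-operator}. This gives
\[
 \bigl(P_{U'}\Phi_K\Lambda^{\mathrm{in}}(b)\psi\bigr)(y')
 =\int_{U'}K(y',x')\,R^{\mathrm{in}}\bigl(b(\tau_b^{-1}x')\bigr)\psi(\tau_b^{-1}x')\,d\nu(x').
\]
Next change variables $x'=\tau_b(x)$. This is allowed because $b$ is $\nu$-preserving, \eqref{eq:measure-preserving-bisection}, so the expression becomes $\int_U K(y',\tau_b x)R^{\mathrm{in}}(b(x))\psi(x)\,d\nu(x)$.

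\emph{Right-hand side.} Similarly, for $y'=\tau_b(y)\in U'$ we get
\[
 \bigl(\Lambda^{\mathrm{out}}(b)P_U\Phi_K P_U\psi\bigr)(\tau_b y)
 =R^{\mathrm{out}}\bigl(b(y)\bigr)\int_U K(y,x)\psi(x)\,d\nu(x).
\]
Both sides vanish for $y'\notin U'$. Substituting $y'=\tau_b(y)$ on the left, which again uses measure preservation to move between a.e. statements in $y'$ and in $y$, turns \eqref{eq:operator-equivariance} into the identity
\[
 \int_U\Bigl[K(\tau_b y,\tau_b x)R^{\mathrm{in}}(b(x))-R^{\mathrm{out}}(b(y))K(y,x)\Bigr]\psi(x)\,d\nu(x)=0
\]
for a.e. $y\in U$ and every $\psi$. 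We only need $\psi$ supported in $U$, since the factor $P_U$ and the domain of $\Lambda^{\mathrm{in}}(b)$ both kill the complement.

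\emph{Sufficiency.} If \eqref{eq:kernel_constraint} holds a.e. on $U\times U$, multiply it on the right by $R^{\mathrm{in}}(b(x))$. The bracket then vanishes a.e., so the identity holds and hence $\Phi_K$ is $\mathcal B$-equivariant.

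\emph{Necessity.} This is the step I expect to be the main obstacle. The bracket, call it $\Delta(y,x)$, is the kernel of an integral operator from $L^2(U,E^{\mathrm{in}})$ to $L^2(U,E^{\mathrm{out}})$ that is identically zero. We must conclude that $\Delta=0$ a.e., and this is where the standing hypotheses are needed.

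Local integrability of $K$ carries over to $\Delta$. For $K(\tau_b y,\tau_b x)$ this holds because $\tau_b\times\tau_b$ preserves $\nu\times\nu$ and compact sets (under \ref{ass:object-space}). For the other term it holds because the $R$-factors are unitary and therefore bounded. Next, test against products $\mathbf 1_A(x)e(x)$ and $\mathbf 1_{A'}(y)f(y)$, where $A,A'$ range over finite-measure sets in a countable generating algebra (using $\sigma$-finiteness and second countability) and $e,f$ range over a countable family of measurable local frames of the Hermitian bundles. This shows that $\iint_{A'\times A}\langle f(y),\Delta(y,x)e(x)\rangle\,d\nu\,d\nu=0$ for all such rectangles. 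A monotone class (Dynkin) argument then gives $\langle f,\Delta e\rangle=0$ a.e., and hence $\Delta=0$ $\nu\times\nu$-a.e. on $U\times U$.

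Finally, multiplying on the right by $R^{\mathrm{in}}(b(x))^{-1}$ yields \eqref{eq:kernel_constraint}. In the finite case all of this reduces to comparing matrix entries, taking $\psi$ to be delta sections.
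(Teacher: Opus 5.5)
Your proposal is correct and follows the paper's proof essentially step for step: evaluate both sides of the projected identity at $\tau_b(y)$, change variables using $\nu$-preservation of $\tau_b$, and compare the resulting kernels; the only difference is that you actually prove the uniqueness of locally integrable kernels (testing against rectangles and frame sections, then a monotone class argument), whereas the paper takes this as part of the standing hypotheses. When carrying out that step, take the test sets $A,A'$ relatively compact (available from second countability and local compactness) rather than merely of finite measure, since $\Delta$ is only locally integrable and Fubini requires the double integrals over $A'\times A$ to converge absolutely.
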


\begin{proof}
Fix $b\in\mathcal B$, write $U=U_b$, $U'=U_b'$, and evaluate
\eqref{eq:operator-equivariance} at $y'=\tau_b(y)$ with $y\in U$.  For an
input section $\psi$, Definition~\ref{def:transport} and the change of
variables $x'=\tau_b(x)$ give
\begin{align*}
 &\bigl(P_{U'}^{E^{\mathrm{out}}}
        \Phi_K\Lambda^{\mathrm{in}}(b)\psi\bigr)(\tau_b(y))\\
 &\quad=
 \int_{U'}K(\tau_b(y),x')
 R^{\mathrm{in}}\bigl(b(\tau_b^{-1}x')\bigr)
 \psi(\tau_b^{-1}x')\,d\nu(x')\\
 &\quad=
 \int_U K\bigl(\tau_b(y),\tau_b(x)\bigr)
 R^{\mathrm{in}}\bigl(b(x)\bigr)
 \psi(x)\,d\nu(x).
\end{align*}
On the other hand,
\begin{align*}
 &\bigl(\Lambda^{\mathrm{out}}(b)
 P_U^{E^{\mathrm{out}}}\Phi_KP_U^{E^{\mathrm{in}}}\psi\bigr)
 (\tau_b(y))\\
 &\quad=
 R^{\mathrm{out}}\bigl(b(y)\bigr)
 \int_U K(y,x)\psi(x)\,d\nu(x).
\end{align*}
If~\eqref{eq:kernel_constraint} holds, the two expressions coincide.  Conversely,
if the operator identity holds for all compactly supported essentially bounded
sections $\psi$, uniqueness of locally integrable kernels implies
\[
 K\bigl(\tau_b(y),\tau_b(x)\bigr)
 R^{\mathrm{in}}\bigl(b(x)\bigr)
 =R^{\mathrm{out}}\bigl(b(y)\bigr)K(y,x)
\]
for almost every $(y,x)\in U\times U$, which is equivalent to
\eqref{eq:kernel_constraint}.
\end{proof}

\begin{remark}[Object-space kernels versus groupoid convolution]
Theorem~\ref{thm:bisection-kernel} concerns kernels on the object-pair space
$\Omega\times\Omega$, transported by local bisections.  This should be
distinguished from the standard convolution algebra of functions on the
arrow space of a locally compact groupoid, where multiplication is defined
using a Haar system on source or target fibres.  The two constructions are
related in action-groupoid and induced-representation settings, but they are
not identical.  The term ``bisection-equivariant kernel theorem'' will be used
throughout to avoid this ambiguity.
\end{remark}

\begin{remark}[Distributional kernels]
\label{rem:schwartz}
The theorem was stated for ordinary locally integrable kernels because that is
the class needed in the continuous bounded-domain and finite-grid models.
Pointwise channels correspond formally to kernels supported on the diagonal,
$K(y,x)=\delta_x(y)\varphi_x$, and restriction or trace maps may require
kernels supported on lower-dimensional correspondences.  On smooth manifolds,
the Schwartz kernel theorem extends the discussion to continuous maps from
compactly supported test sections to distributional sections.  This does not
mean that every bounded operator on an arbitrary $L^2$-space has an ordinary
measurable kernel.
\end{remark}

\begin{remark}[Diagonal kernels]
If $K(y,x)=\delta_x(y)\varphi_x$, condition
\eqref{eq:kernel_constraint} reduces to
\eqref{eq:pointwise-equivariance} for every arrow attained by
$\mathcal B$.  Hence, when $\mathcal B$ is ample, the pointwise theory is the
diagonal specialization of Theorem~\ref{thm:bisection-kernel}.
\end{remark}

\subsection{Pair orbits and joint stabilizers}
\label{ssec:reduction}

The kernel constraint (\ref{eq:kernel_constraint}) has an orbit-theoretic interpretation.  Every local
bisection acts diagonally on the part of $\Omega\times\Omega$ contained in its
domain:
\begin{equation}
  (y,x)\longmapsto
  b\cdot(y,x)
  :=\bigl(\tau_b(y),\tau_b(x)\bigr),
  \qquad (y,x)\in U_b\times U_b.
  \label{eq:pair-pseudogroup-action}
\end{equation}
Because $\mathcal B$ is closed under restriction, inverse, and composition,
these partial transformations generate an equivalence relation on
$\Omega\times\Omega$.  Its equivalence classes will be called
$\mathcal B$-orbits of pairs and denoted by
$\mathcal O_{\mathcal B}(y,x)$.

\begin{definition}[Joint stabilizer]
\label{def:joint-stabilizer}
For a pair $p=(y,x)$, its joint stabilizer relative to $\mathcal B$ is
\begin{equation}
  S_p^{\mathcal B}
  :=\left\{
  \bigl(b(y),b(x)\bigr)
  \in\Gamma(y)\times\Gamma(x):
  \begin{array}{l}
  b\in\mathcal B,\ y,x\in U_b,\\[-2pt]
  \tau_b(y)=y,\ \tau_b(x)=x
  \end{array}
  \right\}.
  \label{eq:joint-stabilizer}
\end{equation}
It is a subgroup of $\Gamma(y)\times\Gamma(x)$ under componentwise
composition.
\end{definition}

The joint stabilizer acts on
$\Hom(E_x^{\mathrm{in}},E_y^{\mathrm{out}})$ by
\begin{equation}
  (\beta,\alpha)\cdot C
  :=R^{\mathrm{out}}(\beta)
    C
    R^{\mathrm{in}}(\alpha)^{-1}.
  \label{eq:joint-stabilizer-action}
\end{equation}
We denote the fixed subspace by
\begin{equation}
  \Hom_{S_p^{\mathcal B}}
  \bigl(E_x^{\mathrm{in}},E_y^{\mathrm{out}}\bigr)
  :=\left\{C:
  R^{\mathrm{out}}(\beta)C
  R^{\mathrm{in}}(\alpha)^{-1}=C
  \text{ for all }(\beta,\alpha)\in S_p^{\mathcal B}\right\}.
  \label{eq:pair-intertwiners}
\end{equation}

\begin{proposition}[Reduction to pair orbits]
\label{prop:pair-orbits}
Let $K$ satisfy the kernel constraint~\eqref{eq:kernel_constraint}.

\begin{enumerate}[label=\textup{(\roman*)}]
\item For every pair $p=(y,x)$,
\begin{equation}
  K(y,x)\in
  \Hom_{S_p^{\mathcal B}}
  \bigl(E_x^{\mathrm{in}},E_y^{\mathrm{out}}\bigr).
  \label{eq:kernel-at-representative}
\end{equation}

\item If $q=b\cdot p$ for some $b\in\mathcal B$, then $K(q)$ is determined by
$K(p)$ through
\begin{equation}
  K\bigl(\tau_b(y),\tau_b(x)\bigr)
  =R^{\mathrm{out}}\bigl(b(y)\bigr)
   K(y,x)
   R^{\mathrm{in}}\bigl(b(x)\bigr)^{-1}.
  \label{eq:pair-orbit-transport}
\end{equation}

\item Conversely, choose one representative $p$ in each pair orbit and an
element $C_p$ of the fixed space~\eqref{eq:pair-intertwiners}.  Transporting
$C_p$ by~\eqref{eq:pair-orbit-transport} gives a well-defined orbitwise
solution of the kernel constraint.  Different choices of a bisection carrying
$p$ to the same pair give the same result precisely because $C_p$ is fixed by
the joint stabilizer.
\end{enumerate}
\end{proposition}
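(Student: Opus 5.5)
Parts (i) and (ii) follow almost immediately from the kernel constraint \eqref{eq:kernel_constraint}; the real work is in (iii), the well-definedness of orbitwise transport. I will treat the kernel constraint pointwise, i.e.\ for every pair in $U_b\times U_b$ rather than almost every pair. This is literally true in the finite models. In the continuous case one passes to a representative of $K$, or works orbitwise with measurable selections, and I will flag this as a caveat rather than resolve it.

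For (ii), note that $q=b\cdot p$ means $y,x\in U_b$ and $q=(\tau_b(y),\tau_b(x))$. Then \eqref{eq:pair-orbit-transport} is exactly \eqref{eq:kernel_constraint} evaluated at $(y,x)$, so nothing more is needed. If $q$ is reached from $p$ through a chain of such moves and their inverses, iterating this gives $K(q)$ in terms of $K(p)$. Since $\mathcal B$ is closed under inversion and composition, the chain collapses: the relevant intermediate points lie in the domain of the composite bisection by the domain formula $\tau_b^{-1}(\mathrm{dom}\,b')$. So every pair in the orbit is of the form $b\cdot p$ for a single $b\in\mathcal B$.

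For (i), take $(\beta,\alpha)\in S_p^{\mathcal B}$, realised by some $b$ with $\tau_b(y)=y$, $\tau_b(x)=x$, $b(y)=\beta$ and $b(x)=\alpha$. Then \eqref{eq:kernel_constraint} at $(y,x)$ reads $K(y,x)=R^{\mathrm{out}}(\beta)K(y,x)R^{\mathrm{in}}(\alpha)^{-1}$, which is \eqref{eq:kernel-at-representative}. I would also verify that $S_p^{\mathcal B}$ is a subgroup. Composites of fixing bisections fix $y$ and $x$, and by \eqref{eq:bisection-composition} the composite gives $(b'\circ b)(y)=b'(y)\circ b(y)$. Inverses are handled by \eqref{eq:bisection-inverse}, and identity bisections supply the unit.

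For (iii), fix a representative $p=(y,x)$ and $C_p$ in the fixed space. For $q$ in the orbit, choose $b$ with $q=b\cdot p$ and set $K(q):=R^{\mathrm{out}}(b(y))C_pR^{\mathrm{in}}(b(x))^{-1}$. The key step is independence of the choice. If $b'$ also carries $p$ to $q$, restrict both to a common neighbourhood of $\{y,x\}$ (this uses closure under restriction) and form $c=b'^{-1}\circ b$. It is defined at $y$ and $x$, fixes both, and $c(y)=b'(y)^{-1}\circ b(y)$, $c(x)=b'(x)^{-1}\circ b(x)$. Hence $(c(y),c(x))\in S_p^{\mathcal B}$, and the fixedness of $C_p$ gives $R^{\mathrm{out}}(b(y))C_pR^{\mathrm{in}}(b(x))^{-1}=R^{\mathrm{out}}(b'(y))C_pR^{\mathrm{in}}(b'(x))^{-1}$. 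Note that $c(y)\in\Gamma(y)$ because $\tau_c(y)=y$.

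It then remains to check that the transported $K$ satisfies \eqref{eq:kernel_constraint} for every $b''\in\mathcal B$ and every pair $q\in U_{b''}\times U_{b''}$ in the orbit. Writing $q=b\cdot p$, we have $b''\cdot q=(b''\circ b)\cdot p$ with $(b''\circ b)(y)=b''(\tau_b y)\circ b(y)$. Functoriality \eqref{eq:groupoid-representation-law} then yields the constraint, using the well-definedness just established. The main obstacle I expect is the domain bookkeeping: guaranteeing that $b'^{-1}\circ b$ and $b''\circ b$ are defined at both points of the pair simultaneously, and, in the measurable setting, that the orbitwise definition produces a measurable kernel agreeing almost everywhere. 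I would state the argument pointwise and note measurability as an assumption on the selection of representatives.
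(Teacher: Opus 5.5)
Your proposal is correct and follows the same route as the paper. Parts (i) and (ii) are read off directly from the kernel constraint. Well-definedness in (iii) comes from observing that $\widetilde b^{-1}\circ b$ fixes $p$, so the two transported values differ by an element of $S_p^{\mathcal B}$, which fixes $C_p$.

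Your additional steps are ones the paper leaves implicit, and they are correct:
\begin{itemize}
\item collapsing chains of moves into a single bisection;
\item checking that $S_p^{\mathcal B}$ is a subgroup;
\item verifying, via $(b''\circ b)(y)=b''(\tau_b y)\circ b(y)$, that the transported kernel actually satisfies the constraint.
\end{itemize}
The domain bookkeeping you flag is settled by the domain formula $\tau_b^{-1}(\mathrm{dom}\,b')$, so the restriction step is harmless but unnecessary. Your pointwise-versus-almost-everywhere caveat is fair; the paper touches on this only in its measurability remark.
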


\begin{proof}
If $(\beta,\alpha)=(b(y),b(x))\in S_p^{\mathcal B}$, then
$\tau_b(y)=y$ and $\tau_b(x)=x$.  Substitution into
\eqref{eq:kernel_constraint} yields~\eqref{eq:kernel-at-representative}.  The
second statement is the kernel constraint itself.  For the converse, suppose
$b$ and $\widetilde b$ both carry $p$ to $q$.  The composite
$\widetilde b^{-1}\circ b$ fixes $p$, and the two transported values differ by
the action of an element of $S_p^{\mathcal B}$.  They therefore coincide when
$C_p$ lies in the fixed space.
\end{proof}

Proposition~\ref{prop:pair-orbits} is the nonlocal analogue of
Proposition~\ref{prop:pointwise-isotropy}.  Pointwise channels are classified
by isotropy intertwiners on orbits of objects; integral channels are classified
by joint-stabilizer intertwiners on orbits of pairs.

\begin{remark}[Measurability of orbitwise data]
The proposition is an algebraic classification on each pair orbit.  To obtain
a globally measurable kernel one must choose orbit representatives and free
data measurably, or work with explicit invariants that parameterize the orbit
space.  This is automatic for finite groupoids and will be carried out
constructively for the Euclidean and pixel-grid examples below.
\end{remark}

If $S_p^{\mathcal B}$ is compact or finite, let $d\mu_{S_p}$ be its normalized
Haar measure and let
\[
  \chi_p^{\mathrm{out}}(\beta,\alpha)
  :=\operatorname{tr}R^{\mathrm{out}}(\beta),
  \qquad
  \chi_p^{\mathrm{in}}(\beta,\alpha)
  :=\operatorname{tr}R^{\mathrm{in}}(\alpha).
\]
Schur orthogonality gives
\begin{equation}
  \dim
  \Hom_{S_p^{\mathcal B}}
  \bigl(E_x^{\mathrm{in}},E_y^{\mathrm{out}}\bigr)
  =\int_{S_p^{\mathcal B}}
   \chi_p^{\mathrm{out}}(s)
   \overline{\chi_p^{\mathrm{in}}(s)}\,d\mu_{S_p}(s).
  \label{eq:pair-intertwiner-character-formula}
\end{equation}
Equivalently, after restricting the two fibre representations to the two
projections of $S_p^{\mathcal B}$ and decomposing them into irreducibles, the
dimension is the sum of the products of matching multiplicities.  If the
joint stabilizer is trivial, the full matrix space is allowed:
\[
  \dim\Hom_{S_p^{\mathcal B}}
  \bigl(E_x^{\mathrm{in}},E_y^{\mathrm{out}}\bigr)
  =\dim E_x^{\mathrm{in}}\,\dim E_y^{\mathrm{out}}.
\]

\begin{remark}[Mixed object orbits]
A bisection preserves the groupoid orbit of each individual object, but it may
act simultaneously on a pair whose two entries belong to different object
orbits.  Therefore a disconnected or stratified groupoid does not force a
nonlocal kernel to be block diagonal with respect to object orbits.  Mixed
pair orbits give rise to constrained cross-orbit, and later cross-stratum,
couplings.  This observation is central to the boundary-aware layers of
Part~\ref{part:euclidean-groupoids}.
\end{remark}

\subsection{Recovery of homogeneous and steerable kernels}
\label{ssec:homogeneous-recovery}

We finish Part~\ref{part:symmetry-kernels} by checking that the familiar
steerable convolution constraint is recovered when the symmetry is global and
homogeneous.

\begin{proposition}[Action-groupoid kernel constraint]
\label{prop:action-groupoid-kernel}
Let a group $G$ act on $\Omega$, let $\Gamma=G\ltimes\Omega$, and let
$\mathcal B=\{b_g:g\in G\}$ be the family of constant global bisections.  If
$R^{\mathrm{in/out}}(g,x)$ denote the fibre actions of the arrow $(g,x)$, then
an integral channel is $\mathcal{B}$- equivariant if and only if it is $G$-equivariant, that is if and only if
\begin{equation}
  K(g\cdot y,g\cdot x)
  =R^{\mathrm{out}}(g,y)
   K(y,x)
   R^{\mathrm{in}}(g,x)^{-1}
  \label{eq:action-groupoid-kernel-constraint}
\end{equation}
for every $g\in G$ and almost every $(y,x)$.
\end{proposition}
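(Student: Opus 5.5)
This is a direct specialization of Theorem~\ref{thm:bisection-kernel} together with Definition~\ref{def:B-equivariant}. The plan has three steps: show that $\mathcal B$-equivariance and $G$-equivariance coincide at the operator level, translate the kernel constraint, and check that the hypotheses of the theorem hold.

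\textbf{Operator level.} For a constant global bisection $b_g$ we have $U_{b_g}=U'_{b_g}=\Omega$, so all projections $P_U,P_{U'}$ in \eqref{eq:operator-equivariance} are identities. The identity therefore reduces to $\Phi\Lambda^{\mathrm{in}}(b_g)=\Lambda^{\mathrm{out}}(b_g)\Phi$. By the remark following Proposition~\ref{prop:transport-partial-isometry}, $\Lambda(b_g)$ is the usual induced action
\[
(g\cdot\psi)(y)=R(g,g^{-1}y)\,\psi(g^{-1}y).
\]
So $\mathcal B$-equivariance is, by definition, ordinary $G$-equivariance of $\Phi$. For this to make sense we need $\mathcal B$ to qualify as a symmetry datum, i.e.\ the $G$-action must preserve $\nu$, as in Example~\ref{ex:global-group-symmetry-datum}. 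I will state this assumption explicitly.

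\textbf{Kernel level.} Apply Theorem~\ref{thm:bisection-kernel} with $b=b_g$. Here $\tau_{b_g}(y)=g\cdot y$ and $b_g(y)=(g,y)$, so \eqref{eq:kernel_constraint} becomes exactly \eqref{eq:action-groupoid-kernel-constraint}, holding for $\nu\times\nu$-a.e.\ $(y,x)\in\Omega\times\Omega$, for each $g$.

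\textbf{Admissibility.} I also need to check that $\{b_g\}$ is admissible in the sense of Definition~\ref{def:admissible-bisections}. It is closed under composition and inversion, since $b_h\circ b_g=b_{hg}$ and $b_g^{-1}=b_{g^{-1}}$ by \eqref{eq:bisection-composition}--\eqref{eq:bisection-inverse}, and it contains $b_e=1_\Omega$. It is not closed under restriction. As noted in Example~\ref{ex:global-group-symmetry-datum}, adjoining the restrictions $b_g|_U$ imposes nothing new: \eqref{eq:kernel_constraint} for $b_g|_U$ is the a.e.\ constraint for $b_g$ restricted to $U\times U$. Equivalently, by Proposition~\ref{prop:transport-partial-isometry}, $\Lambda(b_g|_U)=\Lambda(b_g)P_U$, and the projected identity follows from the global one by multiplying by projections.

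\textbf{Main subtlety.} The main obstacle is the quantifier ``almost every $(y,x)$'' when $G$ is uncountable. The theorem gives, for each fixed $g$, a null set $N_g$ outside of which the constraint holds, not a single null set uniform in $g$. I do not expect to need uniformity: the proposition is read as ``for every $g$, a.e.\ $(y,x)$'', which is precisely what the theorem delivers. If a jointly measurable version were wanted, I would first try a Fubini argument on $G\times\Omega\times\Omega$ with a Haar measure, assuming $G$ is locally compact second countable and acts measurably.
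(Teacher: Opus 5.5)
Your proposal is correct and takes the same route as the paper, whose proof is the one-line specialization of Theorem~\ref{thm:bisection-kernel} to the constant global bisections $b_g$, with $\tau_{b_g}(y)=g\cdot y$ and $b_g(y)=(g,y)$. Your extra checks are sound and make explicit hypotheses the paper leaves implicit: the projections become identities for global $b_g$; adjoining the restrictions $b_g|_U$, which admissibility requires, adds no new constraint; $\nu$ must be $G$-invariant; and the null set may depend on $g$.
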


\begin{proof}
This is Theorem~\ref{thm:bisection-kernel} applied to the constant global
bisections of the action groupoid.
\end{proof}

The standard Euclidean steerable-CNN setting is obtained from a semidirect
product.  Let $H\leq O(d)$ be compact and let
\[
  G=\R^d\rtimes H
\]
act on $\R^d$ by $(t,h)\cdot x=hx+t$.  Let
$\rho^{\mathrm{in/out}}$ be unitary representations of $H$, and use the usual
trivializations of the associated homogeneous bundles so that
\begin{equation}
  R^{\mathrm{in/out}}((t,h),x)
  =\rho^{\mathrm{in/out}}(h).
  \label{eq:semidirect-fibre-action}
\end{equation}

\begin{corollary}[Steerable convolution kernel]
\label{cor:euclidean-kernel}
Under the preceding assumptions, an integral channel is
$\R^d\rtimes H$-equivariant if and only if it has the convolutional form
\begin{equation}
  K(y,x)=k(y-x),
  \qquad
  (\Phi\psi)(y)
  =\int_{\R^d}k(y-x)\psi(x)\,dx,
  \label{eq:convolutional-form}
\end{equation}
where
\begin{equation}
  k(h\xi)
  =\rho^{\mathrm{out}}(h)
   k(\xi)
   \rho^{\mathrm{in}}(h)^{-1},
  \qquad h\in H.
  \label{eq:steerable-constraint}
\end{equation}
\end{corollary}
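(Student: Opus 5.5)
By Proposition~\ref{prop:action-groupoid-kernel}, applied with $\Omega=\R^d$, $G=\R^d\rtimes H$ and Lebesgue measure, the integral channel is equivariant if and only if
\[
K(g\cdot y,g\cdot x)=R^{\mathrm{out}}(g,y)\,K(y,x)\,R^{\mathrm{in}}(g,x)^{-1}
\]
for every $g\in G$ and almost every $(y,x)$. Lebesgue measure is invariant under the isometries $x\mapsto hx+t$, so the standing hypotheses hold. Using the trivialization \eqref{eq:semidirect-fibre-action}, the constraint reads
\[
K(hy+t,hx+t)=\rho^{\mathrm{out}}(h)\,K(y,x)\,\rho^{\mathrm{in}}(h)^{-1}
\]
for all $(t,h)$. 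I split this into its two generating families, translations $(t,e)$ and rotations $(0,h)$. Every element factors as $(t,h)=(t,e)(0,h)$, so the full constraint is equivalent to the two separately.

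\emph{Translations.} For $h=e$ the constraint is $K(y+t,x+t)=K(y,x)$ for all $t$ and a.e.\ $(y,x)$. I change variables to $(\xi,x)=(y-x,x)$, which is a measure-preserving linear isomorphism of $\R^d\times\R^d$. Writing $F(\xi,x)=K(\xi+x,x)$, the condition becomes $F(\xi,x+t)=F(\xi,x)$ a.e.\ for each fixed $t$.

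I expect this measure-theoretic step to be the main obstacle. The null set may depend on $t$, so translation invariance does not immediately give a function of $\xi$ alone. My approach is as follows.
\begin{enumerate}
\item Consider the map $(\xi,x,t)\mapsto F(\xi,x+t)-F(\xi,x)$. It vanishes for a.e.\ $(\xi,x)$ at each $t$, so by Fubini it vanishes a.e.\ on $\R^{3d}$.
\item Hence, for a.e.\ $(\xi,x)$, we have $F(\xi,x+t)=F(\xi,x)$ for a.e.\ $t$.
\item Equivalently, for a.e.\ $\xi$ the function $x\mapsto F(\xi,x)$ agrees a.e.\ with a constant, which I call $k(\xi)$.
\end{enumerate}
One must check that $k$ is measurable. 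I obtain it as $k(\xi)=\lim_{R\to\infty}|B_R|^{-1}\int_{B_R}F(\xi,x)\,dx$ over a fixed ball, or simply as the average of $F(\xi,\cdot)$ over any fixed unit ball, which is measurable by Fubini and local integrability. Therefore $K(y,x)=k(y-x)$ almost everywhere. Since locally integrable kernels are unique up to null sets, $\Phi$ takes the form \eqref{eq:convolutional-form}. Conversely, any such $K$ is plainly translation invariant.

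\emph{Rotations.} Substituting $K(y,x)=k(y-x)$ into the constraint for $(0,h)$ gives $k(h(y-x))=\rho^{\mathrm{out}}(h)\,k(y-x)\,\rho^{\mathrm{in}}(h)^{-1}$ for a.e.\ $(y,x)$. Since the map $(y,x)\mapsto y-x$ pushes null sets to null sets in the relevant sense (again via the change of variables and Fubini), this is equivalent to \eqref{eq:steerable-constraint} for a.e.\ $\xi$, for each $h$.

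For the converse direction, assume $k$ satisfies \eqref{eq:steerable-constraint}. Then $K(y,x)=k(y-x)$ satisfies the full constraint by direct computation:
\[
k\bigl(hy+t-(hx+t)\bigr)=k\bigl(h(y-x)\bigr)=\rho^{\mathrm{out}}(h)\,k(y-x)\,\rho^{\mathrm{in}}(h)^{-1}.
\]
Proposition~\ref{prop:action-groupoid-kernel} then yields equivariance. Throughout, the constraint \eqref{eq:steerable-constraint} is understood to hold almost everywhere, in line with the almost-everywhere formulation of Theorem~\ref{thm:bisection-kernel}.
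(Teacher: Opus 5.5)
Your proposal is correct and follows the paper's route: apply Proposition~\ref{prop:action-groupoid-kernel} to the translations $(t,I)$ to get $K(y,x)=k(y-x)$, then to $(0,h)$ to get \eqref{eq:steerable-constraint}, and obtain the converse by direct substitution. The only difference is that you supply the Fubini argument that the paper compresses into the single word ``hence''. That argument handles the $t$-dependent null sets and the measurability of $k$ via averaging over a fixed ball.
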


\begin{proof}
Apply~\eqref{eq:action-groupoid-kernel-constraint} first to translations
$(t,I)$.  Since their fibre action is trivial,
$K(y+t,x+t)=K(y,x)$, hence $K(y,x)=k(y-x)$.  Applying the same constraint to
$(0,h)$ gives~\eqref{eq:steerable-constraint}.  The converse follows by
combining the two identities.
\end{proof}

\begin{remark}[Pair stabilizers in the homogeneous case]
\label{rem:dimension-counts}
The pair orbits of $\R^d\rtimes H$ are the $H$-orbits of the displacement
$\xi=y-x$.  The joint stabilizer of a pair with displacement $\xi$ is
isomorphic to
\[
  H_\xi=\{h\in H:h\xi=\xi\}.
\]
Thus the free value of the kernel on an orbit lies in an $H_\xi$-intertwiner
space.  For $H=O(2)$ and $\xi\neq0$, $H_\xi\simeq\Z_2$, generated by the
reflection across the line spanned by $\xi$; at $\xi=0$ the stabilizer is the
full group $O(2)$.  This is the pair-orbit form of the harmonic kernel
classification used in steerable CNNs.
\end{remark}

\begin{remark}[What changes on a bounded domain]
\label{rem:towards-stratified}
On a bounded domain, the constant bisections of the ambient Euclidean group
survive only locally, and the relevant pair orbits are no longer determined
solely by displacement.  After refining the restricted action groupoid by
tangent-cone data, the object space separates into bulk, edge, and corner
orbits.  Proposition~\ref{prop:pair-orbits} then produces both the familiar
bulk steerable kernels and new mixed pair orbits coupling different strata.
The explicit geometric and discrete classifications are the subject of
Part~\ref{part:euclidean-groupoids}.
\end{remark}

\part{Euclidean and tangent-cone groupoids}
\label{part:euclidean-groupoids}

\section{The Euclidean groupoid and bounded domains}
\label{sec:euclidean_group}

Part~\ref{part:symmetry-kernels} developed the kernel theory for an abstract
symmetry datum
\(
  (\Gamma\rightrightarrows\Omega,\mathcal B,\nu;
  R^{\mathrm{in}},R^{\mathrm{out}})
\).
We now specialize that theory to signals supported on bounded subsets of the
Euclidean plane.  The ambient group is the full Euclidean group
\(\E(2)\), but the bounded domain is not, in general, invariant under its
action.  The action groupoid and its restrictions retain the admissible
point-to-point rigid motions, while a further tangent-cone condition will
separate the interior, smooth boundary, and corner strata.

\subsection{The Euclidean group and its compact isotropy types}
\label{ssec:euclidean-group}

Let \(\R^2\) carry its standard Euclidean metric.  Every Euclidean isometry is
affine, hence has the form
\[
  x\longmapsto Ax+v,
  \qquad v\in\R^2,\quad A\in\OO(2).
\]
Accordingly,
\begin{equation}
  \E(2)=\R^2\rtimes\OO(2),
  \label{eq:E2-semidirect}
\end{equation}
with multiplication and inversion
\begin{equation}
  (v,A)(w,B)=(v+Aw,AB),
  \qquad
  (v,A)^{-1}=(-A^{-1}v,A^{-1}).
  \label{eq:E2-group-law}
\end{equation}
The connected orientation-preserving subgroup is
\(
  \SE(2)=\R^2\rtimes\SO(2)
\).
The orthogonal factor, rather than the full noncompact group \(\E(2)\), will
control the internal feature types of the Euclidean action groupoid.

For later reference we recall the irreducible finite-dimensional
representations of \(\OO(2)\):
\begin{equation}
  \widehat{\OO(2)}
  =\Irr(\OO(2))
  =\{\mathbf 1,\det\}\cup\{\rho_n:n\geq1\}.
  \label{irrepsO(2)}
\end{equation}
Here \(\mathbf 1\) is the trivial representation, \(\det\) is the determinant
character, and \(\rho_n\) is the two-dimensional representation determined by
\begin{equation}
  \rho_n(R_\theta)=
  \begin{pmatrix}
    \cos(n\theta)&-\sin(n\theta)\\
    \sin(n\theta)& \cos(n\theta)
  \end{pmatrix},
  \qquad
  \rho_n(\sigma)=
  \begin{pmatrix}1&0\\0&-1\end{pmatrix}
  \label{eq:O2-irreps}
\end{equation}
for a fixed reflection \(\sigma\in\OO(2)\).  After restriction to any
reflection subgroup \(\langle\sigma\rangle\simeq\Z_2\),
\begin{equation}
  \mathbf 1|_{\Z_2}=\mathbf 1,
  \qquad
  \det|_{\Z_2}=\sgn,
  \qquad
  \rho_n|_{\Z_2}\simeq\mathbf 1\oplus\sgn.
  \label{eq:O2-to-Z2-branching}
\end{equation}
The irreducible unitary representations of \(\Z_2\) are
\begin{equation}
  \widehat{\Z_2}=\{\mathbf 1,\sgn\}.
  \label{eq:irrepsZ2}
\end{equation}

\begin{remark}
The irreducible unitary representations of the group \(\E(2)\) itself are
classified by Mackey's theory and are not exhausted by
\(\widehat{\OO(2)}\).  In the present construction the relevant object is the
transitive action groupoid over the configuration space, whose representation
category is equivalent to that of an isotropy group.  It is this distinction
that makes \(\OO(2)\), and later the smaller boundary isotropy groups, the
feature-type groups.
\end{remark}

\subsection{The Euclidean action groupoid and its feature bundles}
\label{ssec:E2-action-groupoid}

The standard action of \(\E(2)\) on \(\R^2\) gives the action groupoid
\begin{equation}
  \Gamma_{\R^2}:=\E(2)\ltimes\R^2\rightrightarrows\R^2,
  \label{eq:E2-action-groupoid}
\end{equation}
with
\[
  s(g,x)=x,
  \qquad
  t(g,x)=g\cdot x,
  \qquad
  (h,gx)\circ(g,x)=(hg,x).
\]

\begin{proposition}[Euclidean action groupoid]
\label{prop:E2-action-groupoid}
The groupoid \(\Gamma_{\R^2}\) is transitive.  Its isotropy group at every
\(x\in\R^2\) is canonically isomorphic, after choosing the origin at \(x\), to
\(\OO(2)\).  At the origin,
\begin{equation}
  \Gamma_{\R^2}(0)=\{((0,A),0):A\in\OO(2)\}\simeq\OO(2).
  \label{eq:E2isotropy}
\end{equation}
Consequently,
\(
  \Rep(\Gamma_{\R^2})\simeq\Rep(\OO(2))
\).
\end{proposition}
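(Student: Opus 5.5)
The plan has three parts: transitivity first, then the isotropy groups, and finally the representation statement.

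For transitivity, take any \(x,y\in\R^2\) and use the translation \(g=(y-x,I)\in\E(2)\). Then \((g,x)\) is an arrow \(x\to y\) of \(\Gamma_{\R^2}\), so \(\Gamma_{\R^2}{}_x^y\neq\varnothing\). By \eqref{eq:groupoid-orbit} there is a single orbit.

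For the isotropy, Example~\ref{ex:action-groupoid} identifies \(\Gamma_{\R^2}(x)\) with the stabilizer \(\E(2)_x\). I would compute this at the origin using the group law \eqref{eq:E2-group-law}. We have \((v,A)\cdot0=v\), so the stabilizer consists exactly of the elements \((0,A)\) with \(A\in\OO(2)\). Composition of arrows is \(((0,A),0)\circ((0,B),0)=((0,AB),0)\), which gives the isomorphism \eqref{eq:E2isotropy} with \(\OO(2)\).

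For a general point \(x\), I would conjugate by the translation arrow \(\gamma=((x,I),0)\colon0\to x\), as in the discussion preceding Proposition~\ref{thm:imprimitivity}. Explicitly,
\[
  \E(2)_x=\{(x-Ax,A):A\in\OO(2)\}=(x,I)\,\E(2)_0\,(x,I)^{-1}.
\]
The map \(A\mapsto((x-Ax,A),x)\) is then a group isomorphism \(\OO(2)\to\Gamma_{\R^2}(x)\). The word ``canonical'' here means canonical once the origin is placed at \(x\): the translation to \(x\) is the distinguished arrow used for the identification, and its linear part \(A\) is intrinsic. I would also note a further point about independence of choices. Choosing a different arrow \(0\to x\) changes the identification by an inner automorphism of \(\OO(2)\). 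Such a change is harmless at the level of representation categories.

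For the representation statement, apply Proposition~\ref{thm:imprimitivity} with \(a_0=0\). This is legitimate because the groupoid is transitive by the first step, and it gives \(\Rep(\Gamma_{\R^2})\simeq\Rep(\Gamma_{\R^2}(0))\simeq\Rep(\OO(2))\).

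The main obstacle is regularity, since the statement is meant in the Lie category and Proposition~\ref{thm:imprimitivity} is stated for algebraic groupoids with a pointer to \cite{Ma05}. I would handle this with an explicit trivialization instead of abstract induction. The source fibre \(s^{-1}(0)\cong\E(2)\) is a principal \(\OO(2)\)-bundle over \(\R^2\) via \(t\), and it has the global smooth section \(x\mapsto((x,I),0)\). Consequently every associated bundle \eqref{eq:associated-groupoid-bundle} is smoothly trivial, \(E_\mu\cong\R^2\times V\). The induced action \eqref{eq:induced-groupoid-action} becomes \(R_\mu((v,A),x)=\mu(A)\), which is visibly smooth; this matches \eqref{eq:semidirect-fibre-action}. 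This explicit form would also confirm that the restriction and induction functors are smooth inverse equivalences.
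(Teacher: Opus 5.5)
Your proof is correct and follows the paper's argument: the translation \((y-x,I)\) gives transitivity, the isotropy at \(x\) consists of the isometries \(z\mapsto x+A(z-x)\) (your \((x-Ax,A)\)), and the representation equivalence is Proposition~\ref{thm:imprimitivity} applied at \(a_0=0\). Your additional smoothness check via the global section \(x\mapsto((x,I),0)\) is not part of the paper's proof, but it gives exactly the trivialization \eqref{eq:trivialization} that the paper writes down right after the proposition.
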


\begin{proof}
Transitivity follows because the translation \((y-x,I)\) sends \(x\) to
\(y\).  An isometry fixes \(x\) precisely when it has the form
\(
  z\mapsto x+A(z-x)
\)
with \(A\in\OO(2)\).  The representation statement is then
Theorem~\ref{thm:imprimitivity}.
\end{proof}

Let \(\rho:\OO(2)\to U(V_\rho)\) be a finite-dimensional unitary
representation.  The associated groupoid representation bundle is
\begin{equation}
  E_\rho=\E(2)\times_{\OO(2)}V_\rho
  \longrightarrow \E(2)/\OO(2)\simeq\R^2.
  \label{eq:E2-associated-bundle}
\end{equation}
The translation section \(x\mapsto(x,I)\) trivializes this bundle.  In the
resulting identification \(E_\rho\simeq\R^2\times V_\rho\), an arrow
\(((v,A),x):x\to Ax+v\) acts by
\begin{equation}
  R_\rho((v,A),x)(x,\xi)
  =(Ax+v,\rho(A)\xi).
  \label{eq:trivialization}
\end{equation}
Thus the translation part moves the base point and the orthogonal part acts on
the internal feature fibre.  Corollary~\ref{cor:euclidean-kernel} is the
corresponding homogeneous steerable-kernel theorem.

\subsection{Restriction to a bounded domain}
\label{ssec:restricted-E2-groupoid}

Let \(D\subset\R^2\) be a nonempty domain, not assumed invariant under
\(\E(2)\).

\begin{definition}[Restricted Euclidean groupoid]
\label{def:restricted-E2-groupoid}
The restriction of the Euclidean action groupoid to \(D\) is
\begin{equation}
  \Gamma_D
  :=(\E(2)\ltimes\R^2)|_D
  =\{(g,x) \mid x\in D,\ g\cdot x\in D\}
  \rightrightarrows D.
  \label{eq:restricted-E2-groupoid}
\end{equation}
\end{definition}

\begin{proposition}[Object-orbit and isotropy structure of \(\Gamma_D\)]
\label{prop:restricted-E2-transitive}
The groupoid \(\Gamma_D\) is transitive, and its isotropy group at every point
is isomorphic to \(\OO(2)\).  Hence
\begin{equation}
  \Rep(\Gamma_D)\simeq\Rep(\OO(2)).
  \label{eq:restricted-rep-equivalence}
\end{equation}
In particular, the object-orbit and isotropy data of \(\Gamma_D\) do not
distinguish interior points from boundary points.
\end{proposition}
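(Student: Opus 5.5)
The plan mirrors the proof of Proposition~\ref{prop:E2-action-groupoid}, using only that arrows of \(\Gamma_D\) are arrows of \(\Gamma_{\R^2}\) whose source and target lie in \(D\).

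\emph{Transitivity.} Given \(x,y\in D\), the translation \(g=(y-x,I)\in\E(2)\) satisfies \(g\cdot x=y\in D\). By \eqref{eq:restricted-E2-groupoid}, \((g,x)\in\Gamma_D\) is an arrow \(x\to y\). Hence every pair of points of \(D\) is connected, and \(\Gamma_D\) has a single orbit. This uses no convexity or connectedness of \(D\): arrows are pointwise and need not be defined on any neighbourhood.

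\emph{Isotropy.} At \(x\in D\), the isotropy group \(\Gamma_D(x)\) consists of the arrows \((g,x)\) with \(g\cdot x=x\). The constraint \(g\cdot x\in D\) is then automatic, so
\[
  \Gamma_D(x)=\Gamma_{\R^2}(x).
\]
Proposition~\ref{prop:E2-action-groupoid} identifies this with the stabilizer \(\{z\mapsto x+A(z-x):A\in\OO(2)\}\simeq\OO(2)\). Composition in the restricted groupoid is the composition of \(\Gamma_{\R^2}\), so this is a group isomorphism and not merely a bijection of sets.

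\emph{Representations.} Since \(\Gamma_D\) is transitive, Proposition~\ref{thm:imprimitivity} applied with any base point \(a_0\in D\) gives \(\Rep(\Gamma_D)\simeq\Rep(\Gamma_D(a_0))\simeq\Rep(\OO(2))\). This is \eqref{eq:restricted-rep-equivalence}. If a topological or smooth version is wanted, one notes that \(\Gamma_D\) is an open or closed subset of \(\E(2)\times D\), depending on \(D\), and the associated-bundle construction restricts accordingly. I would state the equivalence in the algebraic category and remark on the regularity, as the paper does elsewhere.

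\emph{The final claim.} Orbits and isotropy groups are the only invariants at the object level. There is one orbit, and the isotropy is \(\OO(2)\) at every point, interior or boundary alike, so these data cannot separate \(\partial D\) from \(\operatorname{Int}D\). The main subtlety to make explicit is why a boundary point still has full \(\OO(2)\) isotropy: the arrows are single pairs \((g,x)\), and requiring only \(x\in D\) and \(g\cdot x=x\in D\) imposes nothing on the behaviour of \(g\) near \(x\). This is exactly what motivates the tangent-cone refinement that follows.
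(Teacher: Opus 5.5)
Your proof is correct and is essentially the paper's: transitivity via the translation $(y-x,I)$, isotropy carried over unchanged from Proposition~\ref{prop:E2-action-groupoid} because $g\cdot x=x\in D$ makes the restriction to $D$ vacuous, and the representation equivalence from Proposition~\ref{thm:imprimitivity}. Your regularity aside is slightly loose, since $\Gamma_D$ is the preimage of $D$ under $(g,x)\mapsto g\cdot x$ and so is open or closed only when $D$ is, but this plays no role in the algebraic statement.
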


\begin{proof}
For \(x,y\in D\), the translation \(y-x\) defines an arrow from \(x\) to
\(y\).  The isotropy computation is unchanged from
Proposition~\ref{prop:E2-action-groupoid}.
\end{proof}

The restricted groupoid should not be confused with the action groupoid of the
global symmetry group of the domain,
\begin{equation}
  \E(2)_D\ltimes D,
  \qquad
  \E(2)_D:=\{g\in\E(2) \mid gD=D\}.
  \label{eq:global-domain-symmetry-group}
\end{equation}
For a generic bounded domain, \(\E(2)_D\) is trivial or finite, whereas
\(\Gamma_D\) contains all ambient rigid motions whose source and target happen
to lie in \(D\).

\begin{remark}[In what sense is \(\Gamma_D\) boundary-blind?]
\label{rem:ordinary-restriction-boundary-blind}
Proposition~\ref{prop:restricted-E2-transitive} is a statement about object
orbits and isotropy groups.  The full arrow set, the domains of local
bisections, and the induced pair-orbit geometry still depend on \(D\).  Thus
\(\Gamma_D\) is not devoid of boundary information; rather, it fails to encode
the boundary as a separate object stratum.  The tangent-cone refinement below
is designed precisely to introduce that stratification.
\end{remark}

\begin{remark}[Rigid bisections need not be ample for \(\Gamma_D\)]
\label{rem:rigid-not-ample-restriction}
The distinction between the groupoid and the selected bisection family is
already visible here.  A rigid bisection has the form
\(
  b_{g,U}(x)=(g,x)
\)
with \(gU\subset D\).  If an arrow of \(\Gamma_D\) sends an interior point to
a boundary point, no open neighborhood of the source can be carried by that
same rigid motion entirely into \(D\).  Hence the rigid family is generally
not ample for \(\Gamma_D\).  Indeed, with the stratified topology such an
interior-to-boundary arrow need not lie on any local bisection at all.  The
tangent-cone condition removes precisely these cross-stratum arrows; for
polygonal domains the remaining arrows are attained by rigid bisections, as
shown in Lemma~\ref{lem:ample-rigid}.
\end{remark}

\section{Tangent-cone groupoids and stratified representations}
\label{sec:tangent-cone-groupoid}

The ordinary restriction \(\Gamma_D\) relates any two points of \(D\).  To
make the local boundary geometry part of the symmetry datum, we now retain
only those arrows whose linear part identifies the tangent cones of the
domain.  For polygonal domains this produces a groupoid with bulk, edge, and
corner orbits and with a rich ample pseudogroup of rigid local bisections.

\subsection{Tangent cones and the refined groupoid}
\label{ssec:tangent-cones}

Assume that \(D\subset\R^2\) is compact and that its boundary is piecewise
\(C^1\), with finitely many corner points.  The following curve definition is
sufficient in this setting and agrees with the usual Bouligand tangent cone \cite{Au09}.

\begin{definition}[Tangent cone]
\label{def:tangent-cone}
For \(x\in D\), the tangent cone of \(D\) at \(x\) is
\begin{equation}
  T_x^cD
  :=\{\dot\gamma(0)\mid 
  \gamma:[0,\varepsilon)\to D\text{ is }C^1,
  \ \gamma(0)=x\}
  \subset T_x\R^2.
  \label{eq:tangent-cone}
\end{equation}
\end{definition}

At an interior point, \(T_x^cD=\R^2\).  At a smooth boundary point it is the
closed inward half-plane, and at a corner it is a closed sector whose opening
angle is the interior angle of \(D\) at that corner.

\begin{definition}[Tangent-cone restricted Euclidean groupoid]
\label{def:tc-groupoid}
The tangent-cone groupoid of \(D\) is the subgroupoid
\begin{equation}
  \Gamma_D^{\tc}
  :=\{((v,A),x)\in\Gamma_D \mid
  A(T_x^cD)=T_{Ax+v}^cD\}
  \rightrightarrows D.
  \label{eq:tc-groupoid}
\end{equation}
\end{definition}

\begin{proposition}
\label{prop:tc-subgroupoid}
The set \(\Gamma_D^{\tc}\) is a subgroupoid of \(\Gamma_D\).
\end{proposition}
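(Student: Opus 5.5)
We check the three subgroupoid axioms directly from the definition \eqref{eq:tc-groupoid}, namely that $\Gamma_D^{\tc}$ contains all units, is closed under inversion, and is closed under composition of composable arrows. The one structural fact used throughout is the composition law \eqref{eq:E2-group-law}. It shows that the orthogonal part of a product $(v,A)(w,B)$ is $AB$, and that of an inverse $(v,A)^{-1}$ is $A^{-1}$. In other words, $(v,A)\mapsto A$ is a homomorphism $\E(2)\to\OO(2)$. Since the tangent-cone condition only involves the linear part $A$ acting on cones in $\R^2$, each axiom reduces to an elementary identity about images of sets under linear maps.

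\emph{Units.} The unit at $x\in D$ is $1_x=((0,I),x)$. Its target is $Ix+0=x\in D$, so it lies in $\Gamma_D$. The condition reads $I(T_x^cD)=T_x^cD$, which holds trivially.

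\emph{Inverses.} Let $\alpha=((v,A),x)\in\Gamma_D^{\tc}$ and put $y=Ax+v$. Then $\alpha^{-1}=((v,A)^{-1},y)=((-A^{-1}v,A^{-1}),y)$, with source $y\in D$ and target $x\in D$, so $\alpha^{-1}\in\Gamma_D$. Applying the invertible linear map $A^{-1}$ to the identity $A(T_x^cD)=T_y^cD$ gives $A^{-1}(T_y^cD)=T_x^cD$. This is exactly the tangent-cone condition for $\alpha^{-1}$, since its linear part is $A^{-1}$ and its target is $x$.

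\emph{Composition.} Let $\alpha=((w,B),x)$ and $\beta=((v,A),y)$ with $y=Bx+w$, both in $\Gamma_D^{\tc}$. Then $\beta\circ\alpha=((v,A)(w,B),x)=((v+Aw,AB),x)$. Its target is $ABx+Aw+v=Ay+v=:z\in D$, so $\beta\circ\alpha\in\Gamma_D$. For the cone condition,
\[
 AB(T_x^cD)=A\bigl(B(T_x^cD)\bigr)=A(T_y^cD)=T_z^cD .
\]
Associativity is inherited from $\Gamma_D$, because the operations are the restricted ones.

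No step here is a genuine obstacle. The only points needing care are bookkeeping ones. First, the target of each composite or inverse must be correctly identified, so that the cone on the right-hand side is taken at the right point. Second, the linear part of the composite must be $AB$ and not something twisted by the translation; \eqref{eq:E2-group-law} guarantees this. Note also that the argument uses no property of tangent cones beyond their being subsets of $\R^2$ attached to points of $D$. In particular, it does not need Definition~\ref{def:tangent-cone} or any regularity of $\partial D$.
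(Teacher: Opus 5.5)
Your proof is correct and follows the same route as the paper: units have linear part $I$, composition is handled by $AB(T_x^cD)=A(T_y^cD)=T_z^cD$, and inverses follow from the equality (not mere inclusion) in the cone condition. Your version differs only in making explicit the homomorphism $(v,A)\mapsto A$ and in tracking the targets of composites and inverses.
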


\begin{proof}
The unit arrow has linear part \(I\) and preserves every tangent cone.  If
\(A(T_x^cD)=T_y^cD\) and \(B(T_y^cD)=T_z^cD\), then
\((BA)(T_x^cD)=T_z^cD\), so the condition is closed under composition.
Finally, equality rather than inclusion gives
\(A^{-1}(T_y^cD)=T_x^cD\), proving closure under inverses.
\end{proof}

\subsection{Orbit and isotropy stratification}
\label{ssec:tc-orbits-isotropy}

Write \(\partial_{\mathrm{sm}}D\) for the smooth part of the boundary and
\(\mathcal C_\alpha(D)\) for the set of corners of interior angle \(\alpha\).

\begin{theorem}[Tangent-cone orbit classification]
\label{thm:tc-orbit-classification}
For a compact piecewise-\(C^1\) planar domain, the orbits of
\(\Gamma_D^{\tc}\) are determined by the orthogonal congruence type of the
tangent cone:
\begin{enumerate}[label=\textup{(\roman*)}]
\item all interior points form one orbit \(\Omega_{\mathrm b}=\Int(D)\);
\item all smooth boundary points form one orbit
  \(\Omega_{\mathrm e}=\partial_{\mathrm{sm}}D\);
\item two corners lie in the same orbit if and only if they have the same
  interior angle, so the corner orbits are the nonempty sets
  \(\mathcal C_\alpha(D)\).
\end{enumerate}
The isotropy groups are
\begin{equation}
  \Gamma_D^{\tc}(x)\simeq
  \begin{cases}
    \OO(2), & x\in\Omega_{\mathrm b},\\
    \Z_2,   & x\in\Omega_{\mathrm e},\\
    \Z_2,   & x\in\mathcal C_\alpha(D).
  \end{cases}
  \label{eq:tc-isotropy-groups}
\end{equation}
At a smooth boundary point the nontrivial isotropy element is reflection in
the inward normal line; at a corner it is reflection in the bisector of the
sector.
\end{theorem}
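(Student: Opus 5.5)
The plan is to reduce everything to a statement about cones in \(\R^2\).

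\emph{Step 1: arrows are exactly the cone-matching motions.} By Definition~\ref{def:tc-groupoid}, an arrow \(x\to y\) of \(\Gamma_D^{\tc}\) is a pair \(((v,A),x)\) with \(v=y-Ax\) and \(A\in\OO(2)\) satisfying \(A(T_x^cD)=T_y^cD\). Once \(A\) is chosen, the translation part \(v\) is uniquely determined by \(x\) and \(y\). Hence
\[
\Gamma_D^{\tc}(x)\simeq\{A\in\OO(2)\mid A(T_x^cD)=T_x^cD\},
\]
and \(y\in\mathcal O(x)\) holds if and only if \(T_y^cD\) and \(T_x^cD\) are \(\OO(2)\)-congruent. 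Both orbit and isotropy questions are therefore questions about the \(\OO(2)\)-action on closed cones.

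\emph{Step 2: identify the tangent cones.} Use the description already given after Definition~\ref{def:tangent-cone}:
\begin{itemize}
\item at an interior point, the cone is \(\R^2\);
\item at a smooth boundary point, it is a closed half-plane bounded by the tangent line, the inward one;
\item at a corner, it is a closed sector of opening \(\alpha\), the interior angle.
\end{itemize}
I will justify this briefly with \(C^1\) curves and local \(C^1\) graph charts of the boundary. The one point needing care is whether a corner could have angle \(\pi\) or \(2\pi\). I take ``corner'' to mean the angle is not \(\pi\); then its sector is not a half-plane. For \(\alpha>\pi\) the cone is the closure of a reflex sector. This is still a closed cone with a well-defined bisector, so the argument goes through.

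\emph{Step 3: orbits and stabilizers of cones.} Three facts about the \(\OO(2)\)-action settle the classification.
\begin{itemize}
\item All half-planes through the origin are congruent by a rotation.
\item Two sectors are congruent if and only if they have the same opening angle, since \(\OO(2)\) preserves angles. Conversely, a rotation aligns one boundary ray of the first sector with the corresponding ray of the second; at that point the two sectors coincide.
\item The three types \(\R^2\), half-plane and sector with \(\alpha\neq\pi\) are pairwise non-congruent, because they are distinguished by angular measure, i.e. the Lebesgue measure of the intersection with the unit disc.
\end{itemize}
These give (i)–(iii), together with Step 1. Each set listed in (i)–(iii) is then a single orbit, because any two points with congruent cones are connected by an arrow, with \(v=y-Ax\). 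No connectivity of \(D\) is needed.

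For stabilizers:
\begin{itemize}
\item The stabilizer of \(\R^2\) is all of \(\OO(2)\).
\item An orthogonal map preserving a half-plane must preserve its boundary line and its inward unit normal \(n\). The only nontrivial element doing so is the reflection fixing \(n\), i.e. the reflection in the normal line. This gives \(\Z_2\).
\item For a sector, \(A\) permutes the two boundary rays. If it fixes both rays (which are linearly independent when \(\alpha\neq\pi\); for \(\alpha=2\pi\) one uses the single boundary ray) then \(A=I\). If it swaps them, \(A\) is the reflection in the bisector. No rotation can swap the rays while mapping the sector onto itself, since that would require a rotation by \(\pi\) about the bisector direction, which sends the sector into its negative.
\end{itemize}
This yields \eqref{eq:tc-isotropy-groups} and the stated generators.

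The main obstacle is making Step 2 rigorous at corners with reflex angles, and checking that the curve definition really produces a closed sector. I would handle this by straightening the two \(C^1\) boundary arcs at the corner through their tangent rays. A \(C^1\) curve into \(D\) has initial velocity in the closed sector. Conversely, every vector in the sector is realized: a straight segment works for interior directions, and a curve following a boundary arc works for the boundary rays. Everything else is elementary linear algebra.
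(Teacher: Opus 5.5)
Your proposal is correct and follows essentially the same route as the paper's proof: arrows match tangent cones by an orthogonal map, congruent cones are joined by the arrow with $v=y-Ax$, and the isotropy groups are the $\OO(2)$-stabilizers of the plane, a half-plane, and a sector, with your Step~2 and your angular-measure argument making explicit what the paper only asserts. One side remark needs correcting: a closed sector of opening $2\pi$ (which is what the curve definition gives at an inward cusp) is all of $\R^2$ and has no boundary ray, so such a point would lie in the bulk orbit with isotropy $\OO(2)$; like the paper's ``proper sector'', your argument therefore needs corner angles different from both $\pi$ and $2\pi$.
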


\begin{proof}
An arrow identifies the tangent cones by an orthogonal map, so cone type is
constant on every orbit.  Conversely, any two full planes, any two closed
half-planes, and any two sectors with the same opening angle are related by an
orthogonal map.  After choosing such a map \(A\), the translation
\(v=y-Ax\) gives an arrow from \(x\) to \(y\).

For isotropy, an interior cone is the whole plane and is preserved by all of
\(\OO(2)\).  The subgroup preserving a closed half-plane fixes its inward
normal and consists of the identity and reflection in the normal line.  The
subgroup preserving a proper sector consists of the identity and reflection
in its bisector.
\end{proof}

For a rectangle all four corner angles are \(\pi/2\), and there are exactly
three orbits:
\begin{equation}
  D=\Omega_{\mathrm b}\sqcup\Omega_{\mathrm e}\sqcup\Omega_{\mathrm c},
  \qquad
  \Omega_{\mathrm b}=\Int(D),\quad
  \Omega_{\mathrm e}=\partial D\setminus\Omega_{\mathrm c},\quad
  |\Omega_{\mathrm c}|=4.
  \label{eq:rectangle-strata}
\end{equation}

\begin{figure}[ht]
\centering
\begin{tikzpicture}[scale=0.82,>=Latex]
  \draw[thick] (0,0) rectangle (8,4.8);
  \node at (4,2.4) {$\Omega_{\mathrm b}$};
  \node[rotate=90] at (-0.32,2.4) {$\Omega_{\mathrm e}$};
  \node[rotate=90] at (8.32,2.4) {$\Omega_{\mathrm e}$};
  \node at (4,-0.32) {$\Omega_{\mathrm e}$};
  \node at (4,5.12) {$\Omega_{\mathrm e}$};
  \fill (0,0) circle (2pt) node[below left] {$\Omega_{\mathrm c}$};
  \fill (8,0) circle (2pt) node[below right] {$\Omega_{\mathrm c}$};
  \fill (8,4.8) circle (2pt) node[above right] {$\Omega_{\mathrm c}$};
  \fill (0,4.8) circle (2pt) node[above left] {$\Omega_{\mathrm c}$};

  \fill (2.9,2.3) circle (1.6pt);
  \draw[->] (2.9,2.3)--(3.7,2.3);
  \draw[->] (2.9,2.3)--(2.9,3.1);
  \node[below] at (2.9,2.15) {$T_x^cD=\R^2$};

  \fill (5.6,0) circle (1.6pt);
  \draw[->] (5.6,0)--(6.3,0);
  \draw[->] (5.6,0)--(5.6,0.8);
  \draw[->] (5.6,0)--(4.9,0);
  \node[right] at (5.7,0.62) {half-plane};

  \fill (0,4.8) circle (1.6pt);
  \draw[->] (0,4.8)--(0.8,4.8);
  \draw[->] (0,4.8)--(0,4.0);
  \node[right] at (0.55,4.35) {sector};
\end{tikzpicture}   
\caption{The tangent-cone stratification of a rectangle.  The ordinary
restricted groupoid has one object orbit, whereas
\(\Gamma_D^{\tc}\) has bulk, edge, and corner orbits.}
\label{fig:tc-strata}
\end{figure}

\subsection{Stratified representation bundles and measures}
\label{ssec:stratified-representations}

Because \(\Gamma_D^{\tc}\) is the disjoint union of its transitive orbit
restrictions, its representation category decomposes orbit by orbit.  For a
rectangle,
\begin{equation}
  \Rep(\Gamma_D^{\tc})
  \simeq
  \Rep(\OO(2))\times\Rep(\Z_2)\times\Rep(\Z_2).
  \label{eq:tc-representation-product}
\end{equation}
A global feature type is therefore specified by a triple
\begin{equation}
  (\rho,\varepsilon,\delta),
  \qquad
  \rho\in\Rep(\OO(2)),\quad
  \varepsilon,\delta\in\Rep(\Z_2),
  \label{eq:stratified-feature-type}
\end{equation}
with induced bundles \(E_{\mathrm b,\rho}\), \(E_{\mathrm e,\varepsilon}\),
and \(E_{\mathrm c,\delta}\) over the three strata.  The triple is not, in
general, an irreducible representation of the disconnected groupoid: an
irreducible is supported on a single orbit.  The triple is the natural datum
for a feature field defined over the whole stratified domain.

To retain boundary and corner features as independent components, we use the
stratified measure
\begin{equation}
  \nu=\nu_{\mathrm b}\oplus\nu_{\mathrm e}\oplus\nu_{\mathrm c},
  \label{eq:stratified-measure}
\end{equation}
where \(\nu_{\mathrm b}\) is area measure, \(\nu_{\mathrm e}\) is arclength,
and \(\nu_{\mathrm c}\) is counting measure.  The corresponding section
space is
\begin{equation}
  \mathcal H_{\rho,\varepsilon,\delta}
  =L^2(\Omega_{\mathrm b},E_{\mathrm b,\rho};\nu_{\mathrm b})
  \oplus
  L^2(\Omega_{\mathrm e},E_{\mathrm e,\varepsilon};\nu_{\mathrm e})
  \oplus
  \ell^2(\Omega_{\mathrm c},E_{\mathrm c,\delta}).
  \label{eq:stratified-section-space}
\end{equation}
The use of \(\nu\) is a modeling choice, not the restriction of planar
Lebesgue measure: with area measure alone the boundary and corner components
would have measure zero.

\subsection{Rigid bisections and the polygonal ampleness theorem}
\label{ssec:rigid-bisections-polygonal}

Let \(\mathcal B_D^{\mathrm{rig}}\) be the inverse semigroup generated by the
rigid local bisections
\begin{equation}
  b_{g,U}(x)=(g,x),
  \qquad
  U\subset D\text{ relatively open},
  \qquad
  (g,x)\in\Gamma_D^{\tc}\text{ for all }x\in U.
  \label{eq:tc-rigid-bisection}
\end{equation}
We allow \(U\) to be disconnected; this convention is useful when a bisection
is required to transport two separated points simultaneously.

\begin{lemma}[Rigid ampleness for polygonal domains]
\label{lem:ample-rigid}
If \(D\) is polygonal, \(\mathcal B_D^{\mathrm{rig}}\) is ample for
\(\Gamma_D^{\tc}\): every arrow of the tangent-cone groupoid lies on a rigid
local bisection.
\end{lemma}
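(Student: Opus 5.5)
Given an arrow $((v,A),x)\in\Gamma_D^{\tc}$ with target $y=Ax+v$, we must produce a relatively open $U\ni x$ in $D$ such that $g=(v,A)$ satisfies $gU\subset D$ and $A(T_z^cD)=T_{gz}^cD$ for every $z\in U$. Then $b_{g,U}$ is a rigid bisection through the arrow. Since $g$ is a global isometry, $\tau_{b}=g|_U$ is automatically a homeomorphism onto $gU$. We still need $gU$ to be relatively open in $D$; we will arrange $gU=D\cap B(y,r)$. The plan is to use the local conic structure of a polygon. For each $x\in D$ there is $r_x>0$ with
\[
D\cap B(x,r)=\bigl(x+T_x^cD\bigr)\cap B(x,r)\qquad\text{for all }0<r\le r_x .
\]
This holds because near any point a polygon coincides with its tangent cone translated to that point. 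For an interior point we take $r_x<\operatorname{dist}(x,\partial D)$. For an edge point we take $r_x$ less than the distance to the other edges. For a vertex we take $r_x$ less than the distance to all edges not incident to it.

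We set $r=\min(r_x,r_y)$ and $U=D\cap B(x,r)$. Because $g$ is an isometry with $gx=y$ and $A(T_x^cD)=T_y^cD$ by hypothesis, we get
\[
gU=\bigl(y+A\,T_x^cD\bigr)\cap B(y,r)=\bigl(y+T_y^cD\bigr)\cap B(y,r)=D\cap B(y,r).
\]
So $gU\subset D$ and $gU$ is relatively open, and $(g,z)\in\Gamma_D$ for all $z\in U$.

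The step I expect to be the main obstacle is checking the tangent-cone condition at every $z\in U$, not only at $x$. For $z\in U$, a neighbourhood of $z$ in $D$ agrees with a neighbourhood of $z$ in the model cone $x+T_x^cD$, so $T_z^cD=T_z^c(x+T_x^cD)$. Likewise $T_{gz}^cD=T_{gz}^c(y+T_y^cD)$. This uses that the tangent cone of Definition~\ref{def:tangent-cone} depends only on the germ of the set at the point. Since $g$ maps the model cone at $x$ onto the model cone at $y$, and $g$'s linear part $A$ carries tangent cones of a set to tangent cones of its image (apply $g$ to curves $\gamma$), we obtain $A(T_z^cD)=T_{gz}^cD$. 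One should check separately that $gz$ lies in the ball where the second identification holds; this is true since $|gz-y|=|z-x|<r\le r_y$.

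Finally, $b_{g,U}(x)=(g,x)$ is the given arrow, which proves ampleness. In the write-up I would remark why polygonality matters: for curved boundaries the local model is not exactly conic, and a single rigid motion generally fails to preserve tangent cones at points of $U$ near $x$. This is consistent with the lemma being stated only for polygonal $D$.
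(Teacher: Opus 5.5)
Your proof is correct and follows the same idea as the paper's: near both the source and the target the polygon coincides with its translated tangent cone, so the rigid motion that matches the cones at $x$ carries a small relative ball around $x$ onto a relative ball around $y$ and preserves tangent cones at every nearby point. You replace the paper's interior/edge/corner case split with the single local-conic identity $D\cap B(x,r)=(x+T_x^cD)\cap B(x,r)$. You also make explicit, via germ-locality of the tangent cone, the pointwise cone preservation on $U$ that the paper only asserts.
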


\begin{proof}
Let \(((v,A),x):x\to y\) be an arrow.  If \(x\) is interior, choose a ball
\(U\Subset\Int(D)\) around \(x\) small enough that \(AU+v\Subset\Int(D)\).
If \(x\) lies in the relative interior of an edge, the cone condition forces
\(A\) to map the supporting line and inward half-plane at \(x\) to those at
\(y\).  Since the boundary is locally straight at both points, a sufficiently
small relative neighborhood of \(x\) is mapped into a relative neighborhood of
\(y\), with tangent cones preserved at every point.  At a corner the same
argument applies to the two incident rays and the sector between them.
\end{proof}

\begin{remark}[Curved boundaries and higher-order data]
\label{rem:curved-boundary-jets}
For a curved boundary, equality of tangent cones at two points is only a
first-order condition.  A rigid motion matching the tangent and inward normal
at one point need not map a boundary arc to a boundary arc; curvature already
provides a second-order obstruction.  Therefore
\(\mathcal B_D^{\mathrm{rig}}\) need not be ample for
\(\Gamma_D^{\tc}\).  There are two distinct extensions: one may enlarge the
selected bisection family to non-rigid local bisections of the tangent-cone
groupoid, or refine the arrows by requiring equality of higher boundary jets.
The present paper uses polygonal domains, for which
Lemma~\ref{lem:ample-rigid} is exact.
\end{remark}

\section{Boundary pairs of orbits and kernel blocks}
\label{sec:boundary-kernel-classification}
\label{ssec:stratified-filter-banks}

We now apply Proposition~\ref{prop:pair-orbits} to a rectangle.  This is the
step at which the tangent-cone groupoid produces layer types that do not occur
in homogeneous steerable CNNs.  The object orbits are the three strata, but a
nonlocal kernel is defined on ordered pairs of points.  Pair orbits may
therefore have their target and source in different strata, and the associated
cross-stratum blocks are constrained rather than forbidden.

\subsection{Setup and block decomposition}
\label{ssec:rectangle-kernel-setup}

Let
\(
  D=[0,L_1]\times[0,L_2]
\)
with the strata \(\Omega_{\mathrm b},\Omega_{\mathrm e},\Omega_{\mathrm c}\)
of \eqref{eq:rectangle-strata}.  Input and output feature types are
\begin{equation}
  (\rho_{\mathrm{in}},\varepsilon_{\mathrm{in}},\delta_{\mathrm{in}}),
  \qquad
  (\rho_{\mathrm{out}},\varepsilon_{\mathrm{out}},\delta_{\mathrm{out}}).
  \label{eq:in-out-stratified-types}
\end{equation}
We use adapted gauges: the ambient Cartesian frame in the bulk; a tangent--inward
normal frame \((t_y,n_y)\) on an edge; and a frame aligned with the two rays,
or equivalently the bisector, at a corner.  A different coherent gauge changes
the matrix representatives by fibrewise conjugation but not the orbit spaces
or parameter dimensions.

Fix a support radius
\begin{equation}
  0<r_0<\tfrac12\min\{L_1,L_2\},
  \qquad
  K(y,x)=0\quad\text{if }|y-x|>r_0 \, .
  \label{eq:small-filter-regime}
\end{equation}
The previous condition (\ref{eq:small-filter-regime}) will be called the \emph{small-filter regime}.
This excludes interactions between opposite edges and between distinct
corners; adjacent-edge interactions near a common corner remain present.  The
domains of rigid bisections are allowed to be disconnected, as in
Section~\ref{ssec:rigid-bisections-polygonal}.

Relative to the direct sums in \eqref{eq:stratified-section-space}, a kernel is
a \(3\times3\) block matrix
\begin{equation}
  K(y,x)=
  \begin{pmatrix}
    K^{\mathrm{bb}}&K^{\mathrm{be}}&K^{\mathrm{bc}}\\
    K^{\mathrm{eb}}&K^{\mathrm{ee}}&K^{\mathrm{ec}}\\
    K^{\mathrm{cb}}&K^{\mathrm{ce}}&K^{\mathrm{cc}}
  \end{pmatrix}(y,x),
  \label{eq:3x3-kernel-blocks}
\end{equation}
where the first superscript is the target stratum and the second is the source
stratum.  The corresponding channel is
\begin{equation}
  (\Phi\psi)^s(y)
  =\sum_{s'\in\{\mathrm b,\mathrm e,\mathrm c\}}
  \int_{\Omega_{s'}}K^{ss'}(y,x)\psi^{s'}(x)\,d\nu_{s'}(x),
  \qquad y\in\Omega_s.
  \label{eq:stratified-layer}
\end{equation}
Because every tangent-cone arrow preserves the object stratum, the nine blocks
transform independently under the diagonal pair action.  They need not vanish.

For the reference edge frame \((t,n)\), write
\begin{equation}
  \sigma_{\mathrm{ax}}=
  \begin{pmatrix}-1&0\\0&1\end{pmatrix},
  \label{eq:axis-reflection}
\end{equation}
the reflection fixing the inward normal.  For the reference right-angle corner
write
\begin{equation}
  \sigma_{\mathrm{diag}}=
  \begin{pmatrix}0&1\\1&0\end{pmatrix},
  \label{eq:diagonal-reflection}
\end{equation}
the bisector reflection.

\subsection{Diagonal blocks}
\label{ssec:diagonal-kernel-blocks}

\paragraph{Bulk to bulk.}
For two bulk points, rigid Euclidean motions act exactly as in the free plane.
Translations give
\(
  K^{\mathrm{bb}}(y,x)=k^{\mathrm{bb}}(y-x)
\),
and the orthogonal part gives
\begin{equation}
  k^{\mathrm{bb}}(A\xi)
  =\rho_{\mathrm{out}}(A)
   k^{\mathrm{bb}}(\xi)
   \rho_{\mathrm{in}}(A)^{-1},
  \qquad A\in\OO(2).
  \label{eq:bulk-constraint}
\end{equation}
For \(\xi\neq0\), the joint stabilizer is the reflection subgroup fixing the
line \(\R\xi\); at \(\xi=0\) it is the full \(\OO(2)\).  Hence the free data
may be described as one
\(\Hom_{\Z_2}(\rho_{\mathrm{in}},\rho_{\mathrm{out}})\)-valued radial
function for positive radius, together with an
\(\OO(2)\)-intertwiner at the center.  This is precisely the usual steerable
bulk kernel.

\paragraph{Edge to edge on one edge.}
Let \(u\) be the signed arclength of the source relative to the target in the
adapted edge frame.  Translation along an edge removes the absolute position,
while reflection in the inward normal line sends \(u\) to \(-u\).  Thus
\begin{equation}
  K^{\mathrm{ee}}(y,x)=k^{\mathrm{ee}}(u),
  \qquad
  k^{\mathrm{ee}}(-u)
  =\varepsilon_{\mathrm{out}}(\sigma)
   k^{\mathrm{ee}}(u)
   \varepsilon_{\mathrm{in}}(\sigma)^{-1}.
  \label{eq:edge-edge-constraint}
\end{equation}
For \(u\neq0\) the ordered pair has trivial joint stabilizer.  At \(u=0\)
the stabilizer is the full edge \(\Z_2\), so the center value lies in
\(
  \Hom_{\Z_2}(\varepsilon_{\mathrm{in}},\varepsilon_{\mathrm{out}})
\).
In an irreducible parity basis, equal-parity components are even in \(u\) and
opposite-parity components are odd.

\paragraph{Corner to corner.}
Condition \eqref{eq:small-filter-regime} excludes distinct corners.  The only
corner--corner pairs are \((p,p)\), and the four diagonal pairs form one pair
orbit.  Their common kernel value satisfies
\begin{equation}
  k^{\mathrm{cc}}
  \in\Hom_{\Z_2}(\delta_{\mathrm{in}},\delta_{\mathrm{out}}).
  \label{eq:corner-corner}
\end{equation}
If
\(
  \delta_{\mathrm{in}}=m_+^{\mathrm{in}}\mathbf1
  \oplus m_-^{\mathrm{in}}\sgn
\)
and similarly for the output, then
\begin{equation}
  \dim\Hom_{\Z_2}(\delta_{\mathrm{in}},\delta_{\mathrm{out}})
  =m_+^{\mathrm{in}}m_+^{\mathrm{out}}
   +m_-^{\mathrm{in}}m_-^{\mathrm{out}}.
  \label{eq:corner-corner-dimension}
\end{equation}

\subsection{Bulk--edge and bulk--corner blocks in the small-filter regime}
\label{ssec:mixed-bulk-boundary-blocks}

\paragraph{Edge target, bulk source.}
Let \(y\in\Omega_{\mathrm e}\), \(x\in\Omega_{\mathrm b}\), and write
\begin{equation}
  x-y=\xi_t t_y+\xi_n n_y,
  \qquad \xi_n>0.
  \label{eq:edge-bulk-coordinates}
\end{equation}
The pair orbit is represented by \((\xi_t,\xi_n)\) modulo
\(\xi_t\mapsto-\xi_t\), and
\begin{equation}
  k^{\mathrm{eb}}(-\xi_t,\xi_n)
  =\varepsilon_{\mathrm{out}}(\sigma)
   k^{\mathrm{eb}}(\xi_t,\xi_n)
   \rho_{\mathrm{in}}(\sigma_{\mathrm{ax}})^{-1}.
  \label{eq:edge-bulk-constraint}
\end{equation}
Generic pairs \(\xi_t\neq0\) have trivial stabilizer.  On the normal axis
\(\xi_t=0\), the reflection fixes both points and the kernel value is an
intertwiner for the edge reflection subgroup.

\paragraph{Bulk target, edge source.}
The reverse block has the same orbit space and satisfies
\begin{equation}
  k^{\mathrm{be}}(-\xi_t,\xi_n)
  =\rho_{\mathrm{out}}(\sigma_{\mathrm{ax}})
   k^{\mathrm{be}}(\xi_t,\xi_n)
   \varepsilon_{\mathrm{in}}(\sigma)^{-1}.
  \label{eq:bulk-edge-constraint}
\end{equation}
These two blocks are, respectively, an equivariant boundary readout from the
bulk and an equivariant injection of edge data into nearby bulk features.

\paragraph{Corner target, bulk source.}
Let \(p\in\Omega_{\mathrm c}\) and express \(x-p=\xi\) in the reference
quadrant frame at \(p\).  The bisector reflection identifies \(\xi\) with
\(\sigma_{\mathrm{diag}}\xi\), and
\begin{equation}
  k^{\mathrm{cb}}(\sigma_{\mathrm{diag}}\xi)
  =\delta_{\mathrm{out}}(\sigma)
   k^{\mathrm{cb}}(\xi)
   \rho_{\mathrm{in}}(\sigma_{\mathrm{diag}})^{-1}.
  \label{eq:corner-bulk-constraint}
\end{equation}
The stabilizer is trivial away from the bisector and is \(\Z_2\) on the
bisector.

\paragraph{Bulk target, corner source.}
The reverse block satisfies
\begin{equation}
  k^{\mathrm{bc}}(\sigma_{\mathrm{diag}}\xi)
  =\rho_{\mathrm{out}}(\sigma_{\mathrm{diag}})
   k^{\mathrm{bc}}(\xi)
   \delta_{\mathrm{in}}(\sigma)^{-1}.
  \label{eq:bulk-corner-constraint}
\end{equation}

\subsection{Edge--corner and corner-adjacent edge blocks in the small-filter regime}
\label{ssec:corner-adjacent-blocks}

Let \(p\) be a corner and \(e_1,e_2\) its two incident open edges.  Choose
coherent edge gauges, and let \(u>0\) denote arclength away from the corner.
The bisector reflection exchanges \(e_1\) and \(e_2\).  In a gauge symmetric
under that reflection, the corner--edge and edge--corner relations are
\begin{equation}
  k^{\mathrm{ce}}_{e_2}(u)
  =\delta_{\mathrm{out}}(\sigma)k^{\mathrm{ce}}_{e_1}(u),
  \qquad
  k^{\mathrm{ec}}_{e_2}(u)
  =k^{\mathrm{ec}}_{e_1}(u)\delta_{\mathrm{in}}(\sigma)^{-1}.
  \label{eq:corner-edge-constraint}
\end{equation}
In a different coherent gauge, the right-hand sides acquire the corresponding
edge-fibre cocycles.  This does not alter the number of free parameters: one
unconstrained matrix-valued function of \(u\) determines each of the two
blocks.

The edge--edge block has an additional family when target and source lie on
the two different edges adjacent to the same corner.  Such an ordered pair is
represented by \((u_y,u_x)\), with the support restriction
\(
  \sqrt{u_y^2+u_x^2}\leq r_0
\).
The bisector reflection exchanges the two ordered edge configurations.  The
generic joint stabilizer is trivial, so one arbitrary matrix-valued function
of \((u_y,u_x)\) determines the second configuration by transport.  We denote
this contribution by \(K^{\mathrm{ee}}_{\mathrm{corner}}\); together with the
same-edge contribution it forms the single matrix block
\(K^{\mathrm{ee}}\) in \eqref{eq:3x3-kernel-blocks}.

\subsection{Small-filter regime normal form}
\label{ssec:rectangle-normal-form}

\begin{theorem}[Boundary-aware kernel classification on a rectangle]
\label{thm:rectangle-kernel-classification}
Under the assumptions of \eqref{eq:small-filter-regime}, i.e., in the small-filter regime, an integral kernel is
\(\mathcal B_D^{\mathrm{rig}}\)-equivariant if and only if its nine blocks
have the forms and transport relations described in
Sections~\ref{ssec:diagonal-kernel-blocks}--\ref{ssec:corner-adjacent-blocks}.
Equivalently, the free data consist of one joint-stabilizer intertwiner for
each rigid pair orbit listed in Table~\ref{tab:blocks}.  In particular, all six
cross-stratum matrix blocks are permitted by equivariance.
\end{theorem}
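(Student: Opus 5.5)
The plan is to reduce the statement to Theorem~\ref{thm:bisection-kernel} together with Proposition~\ref{prop:pair-orbits}, and then compute the pair orbits and joint stabilizers of $\mathcal B_D^{\mathrm{rig}}$ on $D\times D$ explicitly in the small-filter regime. By Lemma~\ref{lem:ample-rigid} the rigid family is ample, and it is admissible and preserves the stratified measure $\nu$ of \eqref{eq:stratified-measure}: rigid motions preserve area, arclength on edges mapped to edges, and counting measure. Hence Theorem~\ref{thm:bisection-kernel} applies, and equivariance is equivalent to the transport law \eqref{eq:kernel_constraint} for all rigid bisections. Since every tangent-cone arrow preserves strata, the diagonal action on $D\times D$ preserves each product $\Omega_s\times\Omega_{s'}$. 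The constraint therefore decouples into the nine blocks of \eqref{eq:3x3-kernel-blocks}. For each block, Proposition~\ref{prop:pair-orbits} says that the solutions are exactly one $S_p$-intertwiner per pair orbit, transported along the orbit. The proof thus reduces to showing that the orbit spaces, stabilizers and transport relations are the ones listed in Sections~\ref{ssec:diagonal-kernel-blocks}--\ref{ssec:corner-adjacent-blocks}.

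The orbit computation goes block by block. The key geometric input is this: if $b=b_{g,U}$ fixes the stratum data of both points of a pair $(y,x)$ with $|y-x|\le r_0$, then $g$ is determined by its restriction to the pair. The converse requires an existence argument: any rigid motion $g$ with $(g,y),(g,x)\in\Gamma_D^{\tc}$ yields a bisection through both points. We get it by taking $U$ to be a union of two small relatively open neighbourhoods, possibly disconnected, as allowed in \eqref{eq:tc-rigid-bisection}, each chosen as in the proof of Lemma~\ref{lem:ample-rigid}.

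With this, two pairs are in the same orbit if and only if a single rigid motion carries one to the other while matching tangent cones at both points. The cases are as follows.
\begin{itemize}
\item \textbf{bb:} all of $\E(2)$ acts locally, so the argument of Corollary~\ref{cor:euclidean-kernel} gives \eqref{eq:bulk-constraint}.
\item \textbf{ee, same edge:} the admissible motions are edge translations and the normal reflection, giving \eqref{eq:edge-edge-constraint}.
\item \textbf{eb/be:} the edge point pins $g$ to a motion mapping the edge frame to an edge frame, leaving the invariants $(\xi_t,\xi_n)$ modulo the axis reflection.
\item \textbf{cb/bc:} the corner point pins $g$ to the corner's $\Z_2$ composed with the symmetries of the rectangle mapping corners to corners. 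Using the adapted quadrant gauge, all four corners are identified, leaving $\xi$ modulo $\sigma_{\mathrm{diag}}$.
\item \textbf{ce/ec and corner-adjacent ee:} these follow in the same way, with the bisector reflection swapping $e_1,e_2$.
\end{itemize}
In each case the joint stabilizer is read off as the subgroup of the relevant $\Z_2$ or $\OO(2)$ fixing both points. This produces the generic trivial stabilizers and the $\Z_2$ on axes and bisectors. Rewriting the fibre actions in the adapted gauges, via \eqref{eq:trivialization} and its edge and corner analogues, yields the displayed matrix relations.

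The small-filter hypothesis \eqref{eq:small-filter-regime} is used to discard pair configurations. With $r_0<\tfrac12\min\{L_1,L_2\}$, no support pair involves opposite edges or two distinct corners, and a pair near a corner sees only that corner. This is what makes the orbit lists exhaustive, and it also ensures that the transported kernel still vanishes outside radius $r_0$, because rigid motions preserve $|y-x|$. Measurability of the resulting orbitwise data is handled by the explicit invariants $(\xi,u,(\xi_t,\xi_n),(u_y,u_x))$, as in the remark following Proposition~\ref{prop:pair-orbits}. The last sentence of the statement follows by exhibiting, for each cross-stratum block, a pair orbit with trivial stabilizer, on which any matrix is allowed.

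The main obstacle is the existence and rigidity step for mixed pairs near corners. We must show that every stratum-compatible motion relating two nearby pairs really extends to a tangent-cone-preserving rigid bisection on neighbourhoods of both points. This is where polygonality and disconnected domains are needed. We must also show that no unexpected rigid bisection identifies pairs beyond the listed orbits, for example a reflection mapping an edge-bulk pair near a corner to one on the adjacent edge. I would handle this by classifying the finitely many linear parts allowed by the cone conditions at the boundary point (the edge or corner $\Z_2$ composed with the $D_2$ symmetries of the rectangle) and checking each against the translation constraint.
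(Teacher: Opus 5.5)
Your proposal is correct and follows the paper's own proof. Both arguments get ampleness from Lemma~\ref{lem:ample-rigid}, use Theorem~\ref{thm:bisection-kernel} to reduce each block to the diagonal rigid action on ordered source--target pairs, read off the orbit invariants and the exceptional stabilizers as fixed loci of reflections, and invoke Proposition~\ref{prop:pair-orbits} for necessity and sufficiency; you add details the paper leaves implicit (measure preservation, disconnected bisection domains, the role of the small-filter bound).

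One small slip concerns the admissible linear parts at an edge point. Your parenthetical list, the edge $\Z_2$ composed with $D_2$, misses the quarter-turns and diagonal reflections; the correct set is all eight elements of $D_4$, and the missing ones are exactly those that carry a horizontal edge to a vertical one, including the bisector reflection in your own example. Your frame argument ($An_y=n_{y'}$, hence $At_y=\pm t_{y'}$) already excludes unexpected identifications without any enumeration, so the slip is harmless.
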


\begin{proof}
By Lemma~\ref{lem:ample-rigid}, the selected rigid pseudogroup is ample.  The
bisection-equivariant kernel theorem reduces each block to the diagonal action
of that pseudogroup on its ordered source--target pair space.  Rigid Euclidean
motions identify precisely the representatives described above: distance and
orthogonal direction in the bulk, signed arclength on one edge, normal or
sector coordinates for mixed pairs, and the two configurations exchanged by a
corner bisector.  The fixed loci of the corresponding reflections give the
exceptional stabilizers.  Proposition~\ref{prop:pair-orbits} then proves both
necessity and sufficiency.
\end{proof}

\begin{table}[ht]
\centering
\small
\begin{tabular}{p{0.16\textwidth}p{0.28\textwidth}p{0.22\textwidth}p{0.24\textwidth}}
\hline
Block & Pair-orbit representative & Stabilizer & Free datum \\
\hline
$\mathrm b\leftarrow\mathrm b$
& $r=|y-x|$; angular transport by $\OO(2)$
& $\OO(2)$ at $r=0$; $\Z_2$ for $r>0$
& center intertwiner plus radial $\Z_2$-intertwiner function \\
$\mathrm e\leftarrow\mathrm e$ (same edge)
& signed separation $u$ modulo $u\sim-u$
& $\Z_2$ at $u=0$; trivial otherwise
& one matrix function for $u>0$, mirror determined \\
$\mathrm c\leftarrow\mathrm c$
& one diagonal corner pair orbit
& corner $\Z_2$
& one $\Z_2$-intertwiner \\
$\mathrm e\leftarrow\mathrm b$, $\mathrm b\leftarrow\mathrm e$
& inward half-disk modulo tangential reflection
& $\Z_2$ on normal axis; trivial otherwise
& matrix function on a half fundamental domain \\
$\mathrm c\leftarrow\mathrm b$, $\mathrm b\leftarrow\mathrm c$
& corner sector modulo bisector reflection
& $\Z_2$ on bisector; trivial otherwise
& matrix function on half-sector \\
$\mathrm c\leftarrow\mathrm e$, $\mathrm e\leftarrow\mathrm c$
& $u>0$ on either incident edge, edges exchanged
& trivial
& one matrix function of $u$ \\
$\mathrm e\leftarrow\mathrm e$ (across corner)
& $(u_y,u_x)$, two ordered edge configurations exchanged
& trivial
& one matrix function of $(u_y,u_x)$ \\
\hline
\end{tabular}
\caption{Rigid pair-orbit reduction of a boundary-aware kernel on a rectangle.
The nine matrix blocks are obtained by grouping the two edge--edge orbit
families into the single block $K^{\mathrm{ee}}$.}
\label{tab:blocks}
\end{table}

\begin{remark}[Why cross-stratum blocks do not contradict orbit separation]
\label{rem:coupling}
The groupoid has no arrows from the bulk orbit to the edge or corner orbits, so
pointwise equivariant maps are block diagonal by stratum.  A nonlocal kernel,
however, is defined on ordered pairs.  A local bisection acts simultaneously
on a bulk point and an edge point without changing either point's object orbit.
The mixed pair therefore has a legitimate orbit and joint stabilizer.  This is
why nonlocal equivariance constrains cross-stratum couplings instead of
eliminating them.
\end{remark}

\begin{remark}[Beyond the small-filter regime]
\label{rem:large-filters}
For larger support, additional pair families appear: opposite edges, distinct
corners, and pairs whose transport is obstructed by the finite geometry of the
domain.  The pair-orbit theorem remains valid, but the orbit invariants and
stabilizers must be refined.  The small-filter regime isolates the local
boundary geometry that is most directly relevant to convolutional layers.
\end{remark}

\section{Discrete tangent-cone groupoids and parameter counts}
\label{sec:discrete-tangent-cone}
\label{ssec:discrete}

The pixel-grid model is an exact finite realization of the preceding symmetry
datum.  It should not be confused with an exact preservation of the continuous
\(\OO(2)\) symmetry: the point group is reduced to \(D_4\), and the continuous
pair-orbit variables are sampled on a lattice.  What is exact is the finite
groupoid equivariance constraint itself, which becomes a finite homogeneous
linear system.

\subsection{The finite tangent-cone groupoid}
\label{ssec:finite-tc-groupoid}

Let
\begin{equation}
  \Omega_{W,H}=\{0,\ldots,W-1\}\times\{0,\ldots,H-1\}\subset\Z^2,
  \qquad W,H\geq4,
  \label{eq:pixel-grid}
\end{equation}
and let the ambient wallpaper group be
\begin{equation}
  \mathrm{p4m}=\Z^2\rtimes D_4.
  \label{eq:p4m}
\end{equation}
We use
\begin{equation}
  r=\begin{pmatrix}0&-1\\1&0\end{pmatrix},
  \qquad
  \sigma_{\mathrm{ax}}=\begin{pmatrix}-1&0\\0&1\end{pmatrix},
  \qquad
  \sigma_{\mathrm{diag}}=\begin{pmatrix}0&1\\1&0\end{pmatrix}
  \label{eq:D4-generators}
\end{equation}
as a quarter-turn, an axis reflection, and a diagonal reflection.
At each pixel we use the same full-plane, half-plane, or quadrant cone as in
the continuum rectangle.  Define
\begin{equation}
  \Gamma_{W,H}^{\tc}
  =\{((t,A),x):
  x,(Ax+t)\in\Omega_{W,H},\ 
  A(T_x^c\Omega_{W,H})=T_{Ax+t}^c\Omega_{W,H}\}
  \rightrightarrows\Omega_{W,H}.
  \label{eq:discrete-tc-groupoid}
\end{equation}

\begin{proposition}[Discrete orbit and isotropy structure]
\label{prop:discrete-orbits}
The groupoid \(\Gamma_{W,H}^{\tc}\) has exactly three orbits: interior pixels,
boundary non-corner pixels, and the four corners.  Their isotropy groups are
\begin{equation}
  D_4,
  \qquad
  \Z_2^{\mathrm{ax}}=\langle\sigma_{\mathrm{ax}}\rangle,
  \qquad
  \Z_2^{\mathrm{diag}}=\langle\sigma_{\mathrm{diag}}\rangle,
  \label{eq:discrete-isotropy}
\end{equation}
respectively.  The two reflection subgroups are non-conjugate in \(D_4\).
The four edges form one groupoid orbit even when \(W\neq H\).
\end{proposition}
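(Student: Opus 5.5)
The plan is to mirror the proof of Theorem~\ref{thm:tc-orbit-classification}, replacing \(\OO(2)\) by \(D_4\) and real translations by lattice translations. First I will record the tangent cone at each pixel. Interior pixels (\(1\le x_1\le W-2\), \(1\le x_2\le H-2\)) get the full plane \(\R^2\). A non-corner pixel on one of the four edges gets the closed half-plane whose inward normal is one of \(\pm e_1,\pm e_2\). Each of the four corners gets a closed quadrant spanned by two of these normals. Since an arrow \(((t,A),x)\) requires \(A(T^c_x)=T^c_{Ax+t}\), and linear isomorphisms preserve the cone types (whole plane, half-plane, proper sector), the cone type is an orbit invariant. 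Hence there are at least three orbits, and only the classes listed can occur.

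Next I will show that each class is a single orbit. The key observation is that \(D_4\) acts transitively on the four axis-aligned inward normals \(\{\pm e_1,\pm e_2\}\) and on the four axis-aligned quadrants. Given \(x,y\) in the same class, I pick \(A\in D_4\) carrying \(T^c_x\) to \(T^c_y\); for interior pixels \(A=I\) works. I then set \(t=y-Ax\in\Z^2\). This gives an element \(((t,A),x)\) with \(x\) and \(Ax+t=y\) in the grid and with cones matched, so it is an arrow \(x\to y\). In particular an edge of length \(W\) is related to an edge of length \(H\) by a quarter-turn \(r\) composed with a translation. Arrows only relate single points, so no global fit of the rectangle is needed, and the four edges form one orbit even when \(W\ne H\). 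The hypothesis \(W,H\ge4\) guarantees that every class is nonempty; in particular interior pixels exist and every edge has non-corner pixels.

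For the isotropy groups, an arrow \(x\to x\) has \(t=x-Ax\), so it is determined by \(A\in D_4\) with \(A(T^c_x)=T^c_x\). For interior pixels this is all of \(D_4\). For an edge pixel with inward normal \(n\in\{\pm e_1,\pm e_2\}\), the condition is \(An=n\). Listing the stabilizer of \(e_2\) in \(D_4\) gives \(\{I,\sigma_{\mathrm{ax}}\}\), and the other normals give conjugates of it. In the adapted edge gauge this is \(\Z_2^{\mathrm{ax}}\). For a corner with quadrant spanned by \(e_1,e_2\), \(A\) must permute the two extreme rays, giving \(\{I,\sigma_{\mathrm{diag}}\}\). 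Rotations by \(r\), \(r^2\), \(r^3\) are excluded because they move the sector.

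Finally, for non-conjugacy, I will use the fact that conjugation in \(D_4\) preserves the conjugacy class of \(\sigma_{\mathrm{ax}}\), namely \(\{\sigma_{\mathrm{ax}},r^2\sigma_{\mathrm{ax}}\}\), the two axis reflections. This class is disjoint from the class of \(\sigma_{\mathrm{diag}}\), the two diagonal reflections. A one-line check makes this explicit: \(r\sigma_{\mathrm{ax}}r^{-1}=-\sigma_{\mathrm{ax}}\). An alternative argument is that the fixed lines of the two reflections are a coordinate axis and a diagonal, and \(D_4\) preserves the set of coordinate axes.

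The only delicate point is the edge case: making sure the chosen arrow is legal with respect to the tangent-cone condition at both endpoints, and not accidentally requiring a global fit of the rectangle. Both are handled by the pointwise nature of arrows.
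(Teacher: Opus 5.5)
Your proposal is correct and follows essentially the same route as the paper's proof. Points of equal cone type are joined by a $D_4$ element matching the cones followed by the lattice translation $t=y-Ax$; isotropy is identified with the $D_4$-stabilizer of the reference full plane, half-plane and quadrant; and $\langle\sigma_{\mathrm{ax}}\rangle$ and $\langle\sigma_{\mathrm{diag}}\rangle$ are separated by the two distinct reflection conjugacy classes of $D_4$. You also make explicit the orbit-invariance of cone type and the nonemptiness of the three strata, which the paper's terse proof leaves implicit.
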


\begin{proof}
Translations act transitively on the interior.  Along the boundary,
translations parallel to an edge connect pixels on the same edge, and a
quarter-turn followed by a translation connects a horizontal edge pixel to a
vertical edge pixel while carrying its half-plane cone to the corresponding
half-plane.  Quarter-turns and translations likewise connect all four corners.
The stabilizer of the full lattice cone is \(D_4\); the stabilizer of the
reference half-plane is generated by \(\sigma_{\mathrm{ax}}\); and the
stabilizer of the reference quadrant is generated by
\(\sigma_{\mathrm{diag}}\).  Axis and diagonal reflections form the two
distinct reflection conjugacy classes of \(D_4\).
\end{proof}

\subsection{Branching rules and gauge cocycles}
\label{ssec:D4-branching-gauges}

Let \(A_1,A_2,B_1,B_2,E\) denote the standard irreducible real
representations of \(D_4\), with \(A_1\) trivial, \(A_2=\det\), and \(E\) the
standard two-dimensional representation.  With the conventions of
\eqref{eq:D4-generators}, their restrictions are shown in
Table~\ref{tab:branching}.

\begin{table}[ht]
\centering
\small
\begin{tabular}{lccccc}
\hline
 & $A_1$ & $A_2$ & $B_1$ & $B_2$ & $E$ \\
\hline
$\Z_2^{\mathrm{ax}}$
& $\mathbf1$ & $\sgn$ & $\mathbf1$ & $\sgn$
& $\mathbf1\oplus\sgn$ \\
$\Z_2^{\mathrm{diag}}$
& $\mathbf1$ & $\sgn$ & $\sgn$ & $\mathbf1$
& $\mathbf1\oplus\sgn$ \\
\hline
\end{tabular}
\caption{Branching of $D_4$ irreducibles to the edge and corner isotropy
subgroups.  The exchange of $B_1$ and $B_2$ distinguishes edge from corner
couplings on the lattice.}
\label{tab:branching}
\end{table}

A discrete feature type is a triple
\begin{equation}
  (\rho,\varepsilon,\delta),
  \qquad
  \rho\in\Rep(D_4),\quad
  \varepsilon\in\Rep(\Z_2^{\mathrm{ax}}),\quad
  \delta\in\Rep(\Z_2^{\mathrm{diag}}).
  \label{eq:discrete-feature-type}
\end{equation}
To write matrices in fixed reference fibres, choose for every edge and corner
pixel a gauge \(A_x\in D_4\) carrying the reference half-plane or quadrant to
the cone at \(x\).  For an arrow \(((t,A),x):x\to y\), the fibre action on a
boundary stratum is represented by the isotropy cocycle
\begin{equation}
  \eta((t,A),x)=A_y^{-1}AA_x.
  \label{eq:gauge-cocycle}
\end{equation}
For an edge, \(\eta\in\Z_2^{\mathrm{ax}}\); for a corner,
\(\eta\in\Z_2^{\mathrm{diag}}\).  Changing the gauges conjugates all fibre and
kernel matrices coherently and therefore leaves the equivariant layer space and
its dimension unchanged.

\subsection{Finite pair orbits and completeness}
\label{ssec:finite-pair-orbits}

Fix a sup-norm radius \(r_0\) satisfying
\begin{equation}
  2r_0+2\leq\min\{W,H\}.
  \label{eq:discrete-small-filter}
\end{equation}
Let \(\mathcal P_{r_0}\) be the finite set of ordered pixel pairs
\((y,x)\) with \(\|y-x\|_\infty\leq r_0\), together with their source and
target strata.  The rigid pseudogroup acts on \(\mathcal P_{r_0}\), and every
kernel tap belongs to one finite pair orbit.

\begin{proposition}[Finite completeness]
\label{prop:finite-completeness}
For fixed input and output feature types, the vector space of radius-\(r_0\)
\(\mathcal B^{\mathrm{rig}}\)-equivariant linear layers is canonically
isomorphic to
\begin{equation}
  \bigoplus_{[p]\in\mathcal P_{r_0}/\mathcal B^{\mathrm{rig}}}
  \Hom_{S_p}(V_p^{\mathrm{in}},V_p^{\mathrm{out}}),
  \label{eq:finite-orbit-direct-sum}
\end{equation}
where \(p\) is one representative of each ordered-pair orbit and \(S_p\) its
joint stabilizer.  Equivalently, solving the finite kernel transport equations
produces a basis of the entire equivariant layer space, not merely a
subfamily.
\end{proposition}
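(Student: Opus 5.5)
The plan is to specialize Theorem~\ref{thm:bisection-kernel} and Proposition~\ref{prop:pair-orbits} to the finite setting, where every measurability and null-set issue disappears. A radius-$r_0$ layer is a finite family of matrices $K(y,x)\colon V_x^{\mathrm{in}}\to V_y^{\mathrm{out}}$ indexed by $(y,x)\in\mathcal P_{r_0}$, with $K(y,x)=0$ outside $\mathcal P_{r_0}$. Here $V_x$ is the reference fibre of the stratum of $x$, written through the gauge cocycle \eqref{eq:gauge-cocycle}. With counting measure, ``almost every'' becomes ``every''. So Theorem~\ref{thm:bisection-kernel} says the layer is $\mathcal B^{\mathrm{rig}}$-equivariant if and only if \eqref{eq:kernel_constraint} holds for every rigid bisection $b$ and every pair $(y,x)\in U_b\times U_b$. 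This is a finite homogeneous linear system in the taps.

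First I will check that $\mathcal P_{r_0}$, extended by zero, is a union of pair orbits. Rigid motions are $\ell^\infty$-isometries of $\Z^2$, since $D_4$ preserves the sup norm, and they preserve source and target strata. Hence the pair action \eqref{eq:pair-pseudogroup-action} maps pairs with $\|y-x\|_\infty\le r_0$ to pairs of the same kind. The support condition is therefore compatible with the constraint, and the system decouples orbit by orbit. The vanishing of $K$ off $\mathcal P_{r_0}$ is consistent because the complement is also invariant.

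Next I define the map from equivariant layers to \eqref{eq:finite-orbit-direct-sum}: $K\mapsto (K(p))_{[p]}$, one value per chosen representative. It is well defined into the fixed spaces by Proposition~\ref{prop:pair-orbits}(i). It is injective by part (ii), since every tap in an orbit is obtained from $K(p)$ by transport. It is surjective by part (iii): given $C_p\in\Hom_{S_p}(V_p^{\mathrm{in}},V_p^{\mathrm{out}})$, transport defines $K$ consistently on the orbit.

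I then need the transported kernel to satisfy \eqref{eq:kernel_constraint} for every bisection, not only the ones used in the construction. Suppose $q=b_1\cdot p$ and $b\cdot q$ is defined. Then $b\circ b_1$ is an admissible bisection carrying $p$ to $b\cdot q$, and $\mathcal B^{\mathrm{rig}}$ is closed under composition and restriction. Combining this with the composition law for $R$ shows that the two transported values agree. Here I use that restricting $b\circ b_1$ to the two-point domain $\{y,x\}$ is allowed, because domains may be disconnected.

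Canonicity is up to the choice of representatives and gauges. Changing them conjugates each summand by fixed transport or gauge matrices, and this conjugation is itself an isomorphism, as noted after \eqref{eq:gauge-cocycle}. The equivalent phrasing follows at once: the solution space of the finite linear system is exactly the image of this isomorphism. Hence any nullspace basis spans all equivariant layers.

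The main obstacle is the well-definedness in part (iii). Two bisections may carry $p$ to the same pair $q$ while acting differently elsewhere, so one has to confirm that $\widetilde b^{-1}\circ b$, restricted to $\{y,x\}$, is a genuine element of $\mathcal B^{\mathrm{rig}}$ fixing $p$. Only then does the discrepancy lie in $S_p$ as defined in \eqref{eq:joint-stabilizer}. I would verify this directly. A composite of rigid bisections on a two-point domain is again $b_{g,\{y,x\}}$ for a single $g\in\mathrm{p4m}$, and its arrows are tangent-cone arrows because $\Gamma_{W,H}^{\tc}$ is a subgroupoid (Proposition~\ref{prop:tc-subgroupoid}, in its discrete form).
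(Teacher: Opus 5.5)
Your proposal is correct and follows the paper's route. The paper's proof simply applies Proposition~\ref{prop:pair-orbits} to the finite pair set, observing that measurability issues disappear and that transport from a representative is a finite collection of matrix identities. Your additional checks fill in details the paper leaves implicit: that $\mathcal P_{r_0}$ is invariant under the $\ell^\infty$-isometric pair action, that the transported kernel satisfies the constraint for every bisection (by closure of $\mathcal B^{\mathrm{rig}}$ under composition), and that composites of rigid bisections remain single-$g$ rigid bisections on two-point domains.
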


\begin{proof}
This is Proposition~\ref{prop:pair-orbits} applied to the finite pair set.
There are no measurability issues, and transport from an orbit representative
is a finite collection of matrix identities.
\end{proof}

\subsection{Closed parameter-count formulas}
\label{ssec:discrete-parameter-counts}

For a finite subgroup \(S\), write
\begin{equation}
  h_S(\pi_{\mathrm{in}},\pi_{\mathrm{out}})
  :=\dim\Hom_S(\pi_{\mathrm{in}},\pi_{\mathrm{out}}).
  \label{eq:hS-definition}
\end{equation}
When one argument is a bulk \(D_4\)-representation and the other an edge or
corner representation, the bulk representation is first restricted to the
indicated reflection subgroup.  Let
\(
  d_\rho, d_\varepsilon,d_\delta
\)
denote fibre dimensions, with input and output superscripts when needed, and
set
\(
  q(r_0)=r_0(r_0-1)/2
\).

\begin{theorem}[Parameter counts for the ten geometric block families]
\label{thm:discrete-parameter-counts}
The dimensions of the radius-\(r_0\) kernel spaces are
\begin{subequations}
\label{eq:discrete-counts}
\begin{align}
N_{\mathrm b\leftarrow\mathrm b}
 &=h_{D_4}(\rho_{\mathrm{in}},\rho_{\mathrm{out}})
 +r_0\Bigl[
   h_{\Z_2^{\mathrm{ax}}}(\rho_{\mathrm{in}},\rho_{\mathrm{out}})
  +h_{\Z_2^{\mathrm{diag}}}(\rho_{\mathrm{in}},\rho_{\mathrm{out}})
 \Bigr]
 +q(r_0)d_{\rho_{\mathrm{in}}}d_{\rho_{\mathrm{out}}},
 \label{eq:count-bb}\\
N_{\mathrm e\leftarrow\mathrm e}
 &=h_{\Z_2}(\varepsilon_{\mathrm{in}},\varepsilon_{\mathrm{out}})
 +r_0d_{\varepsilon_{\mathrm{in}}}d_{\varepsilon_{\mathrm{out}}},
 \label{eq:count-ee}\\
N_{\mathrm c\leftarrow\mathrm c}
 &=h_{\Z_2}(\delta_{\mathrm{in}},\delta_{\mathrm{out}}),
 \label{eq:count-cc}\\
N_{\mathrm e\leftarrow\mathrm b}
 &=r_0h_{\Z_2^{\mathrm{ax}}}
   (\rho_{\mathrm{in}},\varepsilon_{\mathrm{out}})
 +r_0^2d_{\rho_{\mathrm{in}}}d_{\varepsilon_{\mathrm{out}}},
 \label{eq:count-eb}\\
N_{\mathrm b\leftarrow\mathrm e}
 &=r_0h_{\Z_2^{\mathrm{ax}}}
   (\varepsilon_{\mathrm{in}},\rho_{\mathrm{out}})
 +r_0^2d_{\varepsilon_{\mathrm{in}}}d_{\rho_{\mathrm{out}}},
 \label{eq:count-be}\\
N_{\mathrm c\leftarrow\mathrm b}
 &=r_0h_{\Z_2^{\mathrm{diag}}}
   (\rho_{\mathrm{in}},\delta_{\mathrm{out}})
 +q(r_0)d_{\rho_{\mathrm{in}}}d_{\delta_{\mathrm{out}}},
 \label{eq:count-cb}\\
N_{\mathrm b\leftarrow\mathrm c}
 &=r_0h_{\Z_2^{\mathrm{diag}}}
   (\delta_{\mathrm{in}},\rho_{\mathrm{out}})
 +q(r_0)d_{\delta_{\mathrm{in}}}d_{\rho_{\mathrm{out}}},
 \label{eq:count-bc}\\
N_{\mathrm c\leftarrow\mathrm e}
 &=r_0d_{\varepsilon_{\mathrm{in}}}d_{\delta_{\mathrm{out}}},
 \label{eq:count-ce}\\
N_{\mathrm e\leftarrow\mathrm c}
 &=r_0d_{\delta_{\mathrm{in}}}d_{\varepsilon_{\mathrm{out}}},
 \label{eq:count-ec}\\
N_{\mathrm e\leftarrow\mathrm e}^{\mathrm{corner}}
 &=r_0^2d_{\varepsilon_{\mathrm{in}}}d_{\varepsilon_{\mathrm{out}}}.
 \label{eq:count-ee-corner}
\end{align}
\end{subequations}
Their sum is the dimension of the complete equivariant layer space.
\end{theorem}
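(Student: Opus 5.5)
We will derive all ten formulas from Proposition~\ref{prop:finite-completeness}, which identifies the radius-$r_0$ equivariant layer space with $\bigoplus_{[p]}\Hom_{S_p}(V_p^{\mathrm{in}},V_p^{\mathrm{out}})$. Every count then has the same form. For each block family we enumerate the rigid pair orbits in $\mathcal P_{r_0}$ whose target and source lie in the indicated strata. We then determine the joint stabilizer $S_p$ of a representative and add $h_{S_p}$ over orbits. A trivial stabilizer contributes $d^{\mathrm{in}}d^{\mathrm{out}}$. A reflection stabilizer contributes $h_{\Z_2^{\mathrm{ax}}}$ or $h_{\Z_2^{\mathrm{diag}}}$, with the bulk representation restricted according to Table~\ref{tab:branching}. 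The full $D_4$ stabilizer contributes $h_{D_4}$.

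The first step is to describe the rigid pair orbits concretely. Pairs are parametrized by the relative displacement, measured in the gauge frame of the lower-dimensional stratum, modulo that stratum's isotropy group. Lemma~\ref{lem:ample-rigid} and its discrete analogue show that any two pairs with the same normalized displacement are related by a rigid bisection, possibly with disconnected domain. Condition \eqref{eq:discrete-small-filter} ensures that the required translated or rotated windows remain inside $\Omega_{W,H}$ and that they meet neither the opposite edge nor a second corner.

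We then count orbits block by block.

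\textbf{Bulk to bulk.} The displacements $\xi\in\Z^2$ with $\|\xi\|_\infty\le r_0$ are taken modulo $D_4$. The origin has stabilizer $D_4$. There are $r_0$ orbits on the axes, namely $(k,0)$, with stabilizer generated by an axis reflection. There are $r_0$ orbits on the diagonals, namely $(k,k)$, with stabilizer generated by a diagonal reflection. The remaining $(4r_0^2+4r_0)-8r_0 = 4r_0^2-4r_0 = 8q(r_0)$ points have trivial stabilizer, so they form $q(r_0)$ free orbits. This gives \eqref{eq:count-bb}.

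\textbf{Same edge.} The signed offset $u\in[-r_0,r_0]$ is taken modulo $u\sim -u$. The center contributes $h_{\Z_2}$, and there are $r_0$ free orbits. This gives \eqref{eq:count-ee}.

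\textbf{Corner to corner.} There is the single diagonal pair, which gives \eqref{eq:count-cc}.

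\textbf{Edge and bulk.} In the edge frame the displacement is $(\xi_t,\xi_n)$ with $1\le\xi_n\le r_0$ and $|\xi_t|\le r_0$, modulo $\xi_t\mapsto-\xi_t$. The $r_0$ points on the normal axis have stabilizer $\Z_2^{\mathrm{ax}}$. The remaining $2r_0^2$ points pair up into $r_0^2$ free orbits. This gives \eqref{eq:count-eb} and \eqref{eq:count-be}.

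\textbf{Corner and bulk.} The displacements are $\xi\in\{1,\dots,r_0\}^2$ modulo the swap. The $r_0$ diagonal points have stabilizer $\Z_2^{\mathrm{diag}}$. The remaining $r_0^2-r_0$ points form $q(r_0)$ free orbits. This gives \eqref{eq:count-cb} and \eqref{eq:count-bc}.

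\textbf{Corner and edge.} Here $u\in\{1,\dots,r_0\}$ on either incident edge, and the two edges are exchanged by the bisector reflection. This gives $r_0$ free orbits, hence \eqref{eq:count-ce} and \eqref{eq:count-ec}.

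\textbf{Edge to edge across a corner.} The pairs $(u_y,u_x)\in\{1,\dots,r_0\}^2$ have two ordered configurations, which the bisector reflection exchanges. This gives $r_0^2$ free orbits, hence \eqref{eq:count-ee-corner}.

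The final statement follows because these families partition $\mathcal P_{r_0}$, by the block decomposition \eqref{eq:3x3-kernel-blocks}. The direct sum in \eqref{eq:finite-orbit-direct-sum} therefore splits accordingly.

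The main obstacle is justifying that these orbit descriptions are exact. There are two things to show.

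First, no additional identifications occur. We will argue this through invariants that every tangent-cone arrow preserves: the strata of both points, the normalized displacement measured in the gauge frame, and which incident edge contains the point. Pairs with different invariant values therefore cannot lie in the same orbit.

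Second, no orbit splits, and the stabilizers are exactly the ones listed rather than smaller because of domain obstructions. Here we will exhibit explicit rigid bisections with two-point domains $\{y,x\}$. They realize each translation along a stratum, the relevant reflection, and the quarter-turns linking different edges or corners. The convention permitting disconnected relatively open domains is what makes these two-point bisections admissible.

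A subtle point concerns corner-adjacent pixels. In the $\mathrm{e}\leftarrow\mathrm{b}$ count, the edge-pixel target near a corner must still realize the full half-disk of displacements up to the reflection. We will need to check that bulk sources which would fall outside the rectangle are simply absent, and that this does not create new orbit types. Condition \eqref{eq:discrete-small-filter} is what we expect to control this, together with sup-norm truncation, and we will verify the corresponding boundary cases explicitly.
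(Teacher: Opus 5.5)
Your proposal is correct and takes the same route as the paper's proof. Both enumerate the rigid pair orbits in each of the ten families, identify the joint stabilizer of each representative ($D_4$ at the bulk centre, an axis or diagonal reflection on the fixed loci, trivial otherwise), and sum the intertwiner dimensions via Proposition~\ref{prop:finite-completeness}; your counts (the $8q(r_0)$ generic bulk displacements, the $r_0^2$ mirror pairs for edge--bulk, the $q(r_0)$ off-bisector pairs for corner--bulk, and so on) agree with the paper's, and your planned justification through stratum and edge-incidence invariants plus explicit two-point rigid bisections spells out what the paper leaves implicit. One correction to your last paragraph: near-corner edge targets need not realize the full half-disk of displacements, because each displacement class only has to occur at some pair, and absent bulk sources create no new orbit types.
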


\begin{proof}
For the bulk block, the center tap has stabilizer \(D_4\); each of the
\(r_0\) nonzero axis radii and \(r_0\) diagonal radii has the corresponding
reflection stabilizer; the remaining generic \(D_4\)-orbits are represented
by \(0<b<a\leq r_0\), of which there are \(q(r_0)\), and have trivial
stabilizer.  The same-edge block has one central \(\Z_2\)-intertwiner and one
full matrix for each positive displacement.  The edge--bulk blocks have
\(r_0\) normal-axis orbits with reflection stabilizer and \(r_0^2\) generic
mirror-pair orbits.  The corner--bulk blocks have \(r_0\) bisector orbits and
\(q(r_0)\) generic off-bisector orbits.  The remaining corner--edge and
across-corner edge families have trivial stabilizer and the indicated numbers
of representatives.  Proposition~\ref{prop:finite-completeness} completes the
count.
\end{proof}

\begin{example}[A parameter-count check]
\label{ex:225-parameter-layer}
Take
\begin{align*}
\rho_{\mathrm{in}}&=A_1\oplus E,
&\rho_{\mathrm{out}}&=A_1\oplus B_1\oplus E,\\
\varepsilon_{\mathrm{in}}&=\mathbf1\oplus\sgn,
&\varepsilon_{\mathrm{out}}&=2\mathbf1\oplus\sgn,\\
\delta_{\mathrm{in}}&=\mathbf1\oplus\sgn,
&\delta_{\mathrm{out}}&=\mathbf1\oplus2\sgn,
\end{align*}
and \(r_0=2\).  The ten dimensions are
\begin{equation}
  (40,15,3,46,40,17,16,12,12,24),
  \label{eq:worked-counts}
\end{equation}
for a total of \(225\) parameters.  The accompanying finite constraint solver\footnote{GitHub url: "https://github.com/mariajimvaz-afk/groupoid-equivariant-cnn".} \verb+groupoid_cnn.py+
returns nullspaces of exactly these dimensions.
\end{example}

\begin{remark}[Exact finite constraints versus continuum discretization]
\label{rem:finite-exactness}
Once the finite groupoid, gauges, feature types, and support set are fixed, the
transport equations are exact algebraic identities.  Numerical nullspace
computation may introduce floating-point error, but there is no approximation
in the definition of equivariance.  This should be distinguished from the
approximation incurred by replacing \(\OO(2)\) with \(D_4\), replacing the
continuous domain with a grid, and sampling continuous pair-orbit variables.
\end{remark}

Part~\ref{part:euclidean-groupoids} has classified individual finite-propagation
layers.  The next part studies what remains of the same partial equivariance
when such layers are composed.  Global bisections compose exactly, whereas
properly local bisections require an ``erosion'' of their admissible domains.

\part{Filtered equivariance under composition}
\label{part:filtered-equivariance}

The kernel theorem of Part~\ref{part:symmetry-kernels} classifies a single
layer exactly.  A new issue appears when such layers are composed.  For a
global bisection, the usual intertwining relation is stable under composition.
For a proper local bisection, however, an input may influence an output through
intermediate points that do not belong to the bisection domain.  Exact
partial equivariance is therefore retained only after removing a boundary
layer from that domain.

The purpose of this part is to make this statement precise.  We first work
with finite-propagation integral channels and prove a two-layer composition
theorem.  We then establish an $n$-layer version in which the required erosion
is determined by the geometry of paths through the receptive field.  Finally,
we formulate the resulting filtered notion of equivariance and show how
pointwise nonlinearities, biases, residual branches, and global symmetries fit
into it.

\section{Finite propagation, erosion, and composition}
\label{sec:propagation-erosion}

\subsection{Metric symmetry data and finite propagation}
\label{ssec:metric-propagation}

Throughout this part, the object space $\Omega$ is equipped with a metric $d$
compatible with its topology and measurable structure.  We assume that every
$b\in\mathcal B$ acts by a local isometry\footnote{This is not necessarily so when dealing with non-flat Riemannian manifolds.  In such case the theory should be refined and the relevant groupoid is the so called Riemann (or Poincar\'e) groupoid - see, for instance, \cite{ibort-marmo-mas-dorca-schiavone}.},
\begin{equation}
  d\bigl(\tau_b(x),\tau_b(y)\bigr)=d(x,y),
  \qquad x,y\in U_b.
  \label{eq:local-isometry-bisection}
\end{equation}
This is automatic for the rigid Euclidean bisections used in
Part~\ref{part:euclidean-groupoids}.  On the pixel grid we use the
$\ell^\infty$ metric, so that a kernel of radius $r$ is supported on a
$(2r+1)\times(2r+1)$ patch.

For measurable subsets $A,C\subseteq\Omega$, write
\[
  d(A,C):=\inf\{d(a,c):a\in A,\ c\in C\},
\]
with the convention $d(A,\varnothing)=+\infty$.

\begin{definition}[Finite propagation]
\label{def:finite-propagation}
Let
\[
  \Phi\colon L^2(\Omega,E^{\mathrm{in}};\nu)
  \longrightarrow L^2(\Omega,E^{\mathrm{out}};\nu)
\]
be a bounded linear operator.  We say that $\Phi$ has propagation at most $r\geq0$ if
\begin{equation}
  P_A^{E^{\mathrm{out}}}\,\Phi\,P_C^{E^{\mathrm{in}}}=0
  \qquad\text{whenever}\qquad d(A,C)>r.
  \label{eq:finite-propagation-projections}
\end{equation}
The infimum of such $r$ is denoted by $\operatorname{prop}(\Phi)$.
\end{definition}

For an integral channel $\Phi_K$, condition
\eqref{eq:finite-propagation-projections} is equivalent, up to null sets, to
\begin{equation}
  K(y,x)=0
  \qquad\text{whenever}\qquad d(y,x)>r.
  \label{eq:finite-propagation-kernel}
\end{equation}
Pointwise channels have propagation zero.  The following standard estimate
will be useful later.

\begin{proposition}[Propagation under composition]
\label{prop:propagation-composition}
If $\Phi_1$ and $\Phi_2$ have propagation at most $r_1$ and $r_2$,
respectively, then
\begin{equation}
  \operatorname{prop}(\Phi_2\Phi_1)\leq r_1+r_2.
  \label{eq:propagation-composition}
\end{equation}
More generally, an $n$-fold composite
$\Phi_n\cdots\Phi_1$ has propagation at most $\sum_{j=1}^n r_j$.
\end{proposition}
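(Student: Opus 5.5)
We prove the statement directly from Definition~\ref{def:finite-propagation} by inserting a partition of the intermediate space between the two factors. Fix measurable $A,C\subseteq\Omega$ with $d(A,C)>r_1+r_2$. We must show $P_A\Phi_2\Phi_1P_C=0$. Set
\[
  N:=\{z\in\Omega:\ d(z,C)\le r_1\}
\]
and let $N^c$ be its complement. Then $N$ is closed, since $z\mapsto d(z,C)$ is $1$-Lipschitz, hence measurable. We also have $P_N+P_{N^c}=I$ on the intermediate bundle, which gives
\[
  P_A\Phi_2\Phi_1P_C=P_A\Phi_2P_N\Phi_1P_C+P_A\Phi_2P_{N^c}\Phi_1P_C .
\]

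The second term vanishes if $d(N^c,C)>r_1$. The first vanishes if $d(A,N)>r_2$. The second separation is clean: for $a\in A$ and $z\in N$, and any $\eta>0$, pick $c\in C$ with $d(z,c)<r_1+\eta$. Then
\[
  d(a,z)\ge d(a,c)-d(z,c)> d(A,C)-r_1-\eta .
\]
Since $d(A,C)>r_1+r_2$ strictly, choosing $\eta$ small gives $d(A,N)\ge d(A,C)-r_1>r_2$.

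The first separation is the delicate point. Points of $N^c$ satisfy $d(z,C)>r_1$ pointwise, but the infimum $d(N^c,C)$ may equal $r_1$, so the strict hypothesis of Definition~\ref{def:finite-propagation} cannot be applied to the pair $(N^c,C)$ directly. The remedy is to use the slack $\delta:=d(A,C)-r_1-r_2>0$ and enlarge the neighbourhood to
\[
  N_\delta:=\{z:\ d(z,C)\le r_1+\delta/2\}.
\]
On its complement $N_\delta^c$ we have $d(N_\delta^c,C)\ge r_1+\delta/2>r_1$, so $P_{N_\delta^c}\Phi_1P_C=0$. The triangle-inequality argument above, applied to $N_\delta$, gives $d(A,N_\delta)\ge d(A,C)-r_1-\delta/2=r_2+\delta/2>r_2$, so $P_A\Phi_2P_{N_\delta}=0$. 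Both terms therefore vanish. This proves that $r_1+r_2$ is an admissible propagation bound, and taking the infimum gives \eqref{eq:propagation-composition}.

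The $n$-fold statement follows by induction. Write $\Phi_n\cdots\Phi_1=\Phi_n(\Phi_{n-1}\cdots\Phi_1)$ and apply the two-factor case, using the inductive bound $\sum_{j<n}r_j$ for the inner composite. The only subtlety is that $\operatorname{prop}$ is an infimum. This is harmless, because the argument above only uses that the propagation condition holds for each factor at its given radius $r_j$; bounded composites of bounded operators remain bounded, so Definition~\ref{def:finite-propagation} applies at every step.
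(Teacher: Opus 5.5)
Your proof is correct, but it takes a different route from the paper's.

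The paper works at the level of integral kernels. It writes the composite kernel as $K_{21}(z,x)=\int_\Omega K_2(z,y)K_1(y,x)\,d\nu(y)$ and observes that a nonzero integrand forces $d(z,y)\le r_2$ and $d(y,x)\le r_1$. It then concludes $d(z,x)\le r_1+r_2$ from the triangle inequality. For the projection formulation it only asserts that it ``follows by the same neighbourhood argument''.

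You instead prove the projection formulation of Definition~\ref{def:finite-propagation} directly, for arbitrary bounded operators, by splitting the intermediate space as $P_{N_\delta}+P_{N_\delta^c}=I$. In effect you supply the neighbourhood argument the paper leaves implicit. You also handle a point the paper does not address: the naive radius-$r_1$ neighbourhood fails, because $d(N^c,C)$ can equal $r_1$ while the definition demands a strict inequality. The slack $\delta/2$ repairs this.

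\textbf{What your route buys.} It needs no kernel, no Fubini hypothesis, and no ``up to null sets'' translation between kernel support and projections. It therefore covers pointwise channels, whose kernels are only distributional, and any non-integral bounded operator the definition admits.

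\textbf{What the paper's route buys.} It gives the pointwise support picture of the composite kernel, i.e.\ the chain $x\to y\to z$ of intermediate sites. That path decomposition is exactly what is reused in Theorems~\ref{thm:two-layer-erosion} and~\ref{thm:n-layer-filtered-equivariance}, and your projection argument does not expose it.

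One small omission: when $A=\varnothing$ or $C=\varnothing$ you have $\delta=+\infty$. This case is trivial, since the corresponding projection vanishes, but it is worth stating.
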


\begin{proof}
For integral kernels, a nonzero contribution to the composite kernel
\[
  K_{21}(z,x)=\int_\Omega K_2(z,y)K_1(y,x)\,d\nu(y)
\]
requires $d(z,y)\leq r_2$ and $d(y,x)\leq r_1$.  Hence
$d(z,x)\leq r_1+r_2$.  The projection formulation follows by the same
neighbourhood argument, and the $n$-layer statement follows inductively.
\end{proof}

Propagation controls how far information may travel.  It does not itself
imply equivariance; throughout the composition results below, every factor is
also assumed to satisfy Definition~\ref{def:B-equivariant} on the maximal
domains of the bisections in $\mathcal B$.

\subsection{Eroded bisection domains}
\label{ssec:eroded-domains}

Let $b\in\mathcal B$ have source domain $U_b$ and image domain
$U_b'=\tau_b(U_b)$.  For $x\in\Omega$ and a subset $F\subseteq\Omega$, let
$d(x,F)=\inf_{z\in F}d(x,z)$, with $d(x,\varnothing)=+\infty$.

\begin{definition}[Symmetric erosion of a bisection domain]
\label{def:bisection-erosion}
For $r\geq0$, define
\begin{equation}
  U_b^{\ominus r}
  :=\left\{
  x\in U_b:
  \begin{array}{l}
  d\bigl(x,\Omega\setminus U_b\bigr)>r,\\[-1pt]
  d\bigl(\tau_b(x),\Omega\setminus U_b'\bigr)>r
  \end{array}
  \right\}.
  \label{eq:eroded-domain}
\end{equation}
The corresponding restricted bisection is
\begin{equation}
  b^{\ominus r}:=b|_{U_b^{\ominus r}},
  \qquad
  (U_b')^{\ominus_b r}:=\tau_b(U_b^{\ominus r}).
  \label{eq:eroded-bisection}
\end{equation}
\end{definition}

The second condition in \eqref{eq:eroded-domain} is essential: the source and
target sides of the partial symmetry must both contain the neighbourhoods
through which information propagates.  Because $\mathcal B$ is closed under
restriction, $b^{\ominus r}$ is again an admissible local bisection whenever
its domain is nonempty.

\begin{lemma}[Elementary properties of erosion]
\label{lem:erosion-properties}
For every $b\in\mathcal B$ and $0\leq r\leq s$:
\begin{enumerate}[label=\textup{(\roman*)}]
\item $U_b^{\ominus0}=U_b$ and
      $U_b^{\ominus s}\subseteq U_b^{\ominus r}$;
\item if $x\in U_b^{\ominus r}$ and $d(x,y)\leq r$, then $y\in U_b$;
\item if $x\in U_b^{\ominus r}$, $y\in U_b$, and $d(x,y)\leq r$, then
      $\tau_b(y)\in U_b'$ and
      $d(\tau_b(x),\tau_b(y))=d(x,y)$;
\item if $b$ is global, then $U_b^{\ominus r}=\Omega$ for every $r\geq0$.
\end{enumerate}
\end{lemma}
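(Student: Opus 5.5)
The plan is to verify the four items directly from Definition~\ref{def:bisection-erosion}, using only the metric convention $d(x,\varnothing)=+\infty$ and the local isometry property \eqref{eq:local-isometry-bisection}. No earlier structural result is needed.

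For (i), I would take $r=0$ and argue that the conditions $d(x,\Omega\setminus U_b)>0$ and $d(\tau_b(x),\Omega\setminus U_b')>0$ hold automatically. The reason is that $U_b$ and $U_b'$ are open, so each point of $U_b$ has a ball inside $U_b$ whose radius bounds its distance to the complement from below; the same holds for $\tau_b(x)\in U_b'$. In the finite or discrete setting the distance to the complement is a minimum over finitely many points, none of which equals $x$, so it is again positive. Monotonicity is immediate, because $d(\cdot,F)>s\ge r$ implies $d(\cdot,F)>r$.

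For (ii), suppose $y\notin U_b$. Then $y\in\Omega\setminus U_b$, so $d(x,\Omega\setminus U_b)\le d(x,y)\le r$, which contradicts the first erosion condition.

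For (iii), the first step is to get $\tau_b(y)\in U_b'$ for free: since $y\in U_b$, its image lies in $\tau_b(U_b)=U_b'$. The distance identity is then exactly \eqref{eq:local-isometry-bisection} applied to $x,y\in U_b$. I would also note that the second erosion condition becomes relevant only if one wants $r$-neighbours of $\tau_b(x)$ to pull back into $U_b$. That converse statement follows from (ii) applied to $b^{-1}$, and I would record it alongside (iii) since the composition theorems will use it.

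For (iv), a global bisection has $U_b=U_b'=\Omega$, so both complements are empty. Both distances are then $+\infty>r$, which gives $U_b^{\ominus r}=\Omega$.

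The only delicate point is (i) in the stratified setting. There one must make sure that "open" refers to the topology induced by $d$, so that the erosion at radius $0$ does not remove points. I would state this compatibility explicitly, as it is assumed in \ref{ass:object-space} and in the standing hypotheses of this part.
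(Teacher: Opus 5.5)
Your proposal is correct and follows the paper's own argument item by item: openness for (i), the contradiction $d(x,\Omega\setminus U_b)\le d(x,y)\le r$ for (ii), the definition of a bisection together with the local-isometry hypothesis for (iii), and empty complements for (iv). Two of your refinements are accurate. In (iii), the hypothesis $y\in U_b$ already gives $\tau_b(y)\in U_b'$, so the paper's appeal to (ii) is not needed. In (i), the metric must be compatible with the topology so that zero erosion removes no points, and the paper assumes this at the start of Part~\ref{part:filtered-equivariance}.
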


\begin{proof}
The first statement follows directly from the definition and openness of the
bisection domains.  If $y\notin U_b$, then
$d(x,\Omega\setminus U_b)\leq d(x,y)\leq r$, contradicting
$x\in U_b^{\ominus r}$; this proves (ii).  Statement (iii) follows from (ii),
the definition of a bisection, and the local-isometry hypothesis
\eqref{eq:local-isometry-bisection}.  If $b$ is global, both complements in
\eqref{eq:eroded-domain} are empty, proving (iv).
\end{proof}

In the finite grid, Definition~\ref{def:bisection-erosion} is equivalent to
requiring the complete closed $r$-neighbourhood of $x$ to lie in $U_b$ and the
complete closed $r$-neighbourhood of $\tau_b(x)$ to lie in $U_b'$.  This is
the erosion used by the discrete certificates of Part~\ref{part:experiments}.

\subsection{The two-layer composition theorem}
\label{ssec:two-layer-composition}

We now prove the basic structural result.  The erosion radius is the smaller
of the two propagation radii.  Indeed, every intermediate point contributing
to the composite lies simultaneously within distance $r_1$ of the source and
within distance $r_2$ of the target; it is therefore enough that either one of
these two neighbourhoods remain inside the bisection domain.

\begin{theorem}[Two-layer composition and erosion]
\label{thm:two-layer-erosion}
\label{prop:composition-erosion} 
Let
\[
  \Phi_1\colon \mathcal H_0\longrightarrow\mathcal H_1,
  \qquad
  \Phi_2\colon \mathcal H_1\longrightarrow\mathcal H_2
\]
be $\mathcal B$-equivariant integral channels with propagation at most
$r_1$ and $r_2$.  Assume that the composite has the iterated kernel obtained
by Fubini's theorem.  Set
\begin{equation}
  r_{12}:=\min\{r_1,r_2\}.
  \label{eq:two-layer-erosion-radius}
\end{equation}
Then, for every $b\in\mathcal B$, the composite $\Phi_2\Phi_1$ is equivariant
under the restricted bisection $b^{\ominus r_{12}}$.  Equivalently, if
$V=U_b^{\ominus r_{12}}$ and $V'=\tau_b(V)$, then
\begin{equation}
  P_{V'}^{E_2}\,\Phi_2\Phi_1\,
  \Lambda_0(b^{\ominus r_{12}})
  =
  \Lambda_2(b^{\ominus r_{12}})
  P_V^{E_2}\,\Phi_2\Phi_1\,P_V^{E_0}.
  \label{eq:two-layer-filtered-equivariance}
\end{equation}
\end{theorem}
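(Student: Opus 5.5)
The plan is to reduce \eqref{eq:two-layer-filtered-equivariance} to a kernel identity and then apply Theorem~\ref{thm:bisection-kernel}. Since $\mathcal B$ is closed under restriction, $b^{\ominus r_{12}}$ is again in $\mathcal B$. The composite $\Phi_2\Phi_1$ is the integral channel with kernel
\[
K_{21}(z,x)=\int_\Omega K_2(z,y)K_1(y,x)\,d\nu(y).
\]
The computation in the proof of Theorem~\ref{thm:bisection-kernel} only uses the single bisection at hand. So it suffices to show that, for $\nu\times\nu$-a.e.\ $(z,x)\in V\times V$,
\[
K_{21}(\tau_b z,\tau_b x)=R_2\bigl(b(z)\bigr)\,K_{21}(z,x)\,R_0\bigl(b(x)\bigr)^{-1}.
\]
Here $R_j$ denotes the representation on $\mathcal H_j$.

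\textbf{Step 1: localize the intermediate point.} Assume without loss of generality that $r_{12}=r_1$; the case $r_{12}=r_2$ is symmetric, with $z$ playing the role of $x$.

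On the target side, consider $K_{21}(\tau_b z,\tau_b x)$. The integrand vanishes unless $d(y',\tau_b x)\le r_1$, by \eqref{eq:finite-propagation-kernel}. Since $x\in V$, we have $d(\tau_b x,\Omega\setminus U_b')>r_1$, so every contributing $y'$ lies in $U_b'$. Hence the integral may be restricted to $U_b'$.

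On the source side, consider $K_{21}(z,x)$. Contributing $y$ satisfy $d(y,x)\le r_1$, so $y\in U_b$ by Lemma~\ref{lem:erosion-properties}(ii). Hence this integral may be restricted to $U_b$.

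This is exactly where the symmetric erosion \eqref{eq:eroded-domain} is used: the single condition $x\in V$ controls both the neighbourhood of $x$ and that of $\tau_b x$. It is also why the minimum of the two radii suffices.

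\textbf{Step 2: change variables and telescope.} In the target-side integral, substitute $y'=\tau_b(y)$ with $y\in U_b$; this is legitimate because $b$ is $\nu$-preserving. The integrand becomes
\[
K_2(\tau_b z,\tau_b y)\,K_1(\tau_b y,\tau_b x).
\]
Now apply the single-layer constraints \eqref{eq:kernel_constraint} for $\Phi_2$ and $\Phi_1$ with the unrestricted bisection $b$. This is valid because $z,y,x\in U_b$. The constraints give
\[
K_2(\tau_b z,\tau_b y)=R_2(b(z))\,K_2(z,y)\,R_1(b(y))^{-1},
\qquad
K_1(\tau_b y,\tau_b x)=R_1(b(y))\,K_1(y,x)\,R_0(b(x))^{-1}.
\]
The middle factors $R_1(b(y))^{-1}R_1(b(y))$ cancel. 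This yields
\[
R_2(b(z))\Bigl(\int_{U_b}K_2(z,y)K_1(y,x)\,d\nu(y)\Bigr)R_0(b(x))^{-1}.
\]
By Step~1, the integral over $U_b$ equals $K_{21}(z,x)$, which is the required identity.

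\textbf{Step 3: null sets (expected obstacle).} The single-layer identities hold only $\nu\times\nu$-a.e.\ on $U_b\times U_b$, whereas Step~2 uses them inside an integral in $y$ for fixed $(z,x)$. I would handle this with Fubini–Tonelli on $U_b\times U_b\times U_b$. The exceptional set $N_2\times U_b$ (arising from the $K_2$ identity) and the set $U_b\times N_1$ (arising from the $K_1$ identity) are both null. Hence for a.e.\ $(z,x)$ the integrand identity holds for a.e.\ $y$. The hypothesis that the composite has its iterated kernel ensures that the integrals converge absolutely for a.e.\ $(z,x)$.

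Finally, the strict inequality in \eqref{eq:eroded-domain}, combined with the non-strict support condition, means boundary distances cause no trouble.
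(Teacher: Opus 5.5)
Your proposal is correct and follows the paper's proof. It localizes the intermediate point in $U_b$ (resp.\ $U_b'$) using the smaller radius and the symmetric erosion, changes variables, cancels the middle representation factors, and concludes with the kernel theorem applied to $b^{\ominus r_{12}}$. Your explicit use of the image-side erosion condition and the Fubini bookkeeping for the null sets are details the paper leaves implicit.
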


\begin{proof}
Let $K_1$ and $K_2$ be the kernels of the two channels.  The composite kernel
is
\begin{equation}
  K_{21}(z,x)
  =\int_\Omega K_2(z,y)K_1(y,x)\,d\nu(y).
  \label{eq:two-layer-composite-kernel}
\end{equation}
Fix $x,z\in V$.  A nonzero integrand in
\eqref{eq:two-layer-composite-kernel} satisfies
\[
  d(y,x)\leq r_1,
  \qquad
  d(z,y)\leq r_2.
\]
If $r_1\leq r_2$, the first inequality and
$x\in U_b^{\ominus r_1}$ imply $y\in U_b$.  If $r_2\leq r_1$, the second
inequality and $z\in U_b^{\ominus r_2}$ imply the same conclusion.  Thus, in
either case, the integral may be restricted to $U_b$.

The corresponding integral defining
$K_{21}(\tau_b(z),\tau_b(x))$ may likewise be restricted to $U_b'$.
Changing variables $y'=\tau_b(y)$ and using preservation of $\nu$ gives
\begin{align*}
 K_{21}\bigl(\tau_b(z),\tau_b(x)\bigr)
 & =\int_{U_b}
 K_2\bigl(\tau_b(z),\tau_b(y)\bigr)
 K_1\bigl(\tau_b(y),\tau_b(x)\bigr)\,d\nu(y).
\end{align*}
Apply the kernel constraint \eqref{eq:kernel_constraint} to both factors.  The
representation matrices on the intermediate fibre cancel:
\begin{align*}
 &K_2\bigl(\tau_b(z),\tau_b(y)\bigr)
 K_1\bigl(\tau_b(y),\tau_b(x)\bigr)\\
 &\quad=
 R_2(b(z))K_2(z,y)
 \underbrace{R_1(b(y))^{-1}R_1(b(y))}_{I}
 K_1(y,x)R_0(b(x))^{-1}.
\end{align*}
Consequently,
\[
 K_{21}\bigl(\tau_b(z),\tau_b(x)\bigr)
 =R_2(b(z))K_{21}(z,x)R_0(b(x))^{-1}
\]

a.e. on $V\times V$.  The bisection-equivariant kernel theorem applied to
$b^{\ominus r_{12}}$ yields
\eqref{eq:two-layer-filtered-equivariance}.
\end{proof}

\begin{remark}[A safe but non-sharp radius]
The larger choice $\max\{r_1,r_2\}$ is also sufficient, but it erodes more of
the bisection domain than the proof requires.  The radius
\eqref{eq:two-layer-erosion-radius} is the natural balanced bound: every
intermediate point is controlled from both endpoints, and the shorter of the
two legs is enough to keep it inside the admissible region.
\end{remark}

\begin{corollary}[Global bisections]
\label{cor:global-composition}
If $b$ is global, then $\Phi_2\Phi_1$ is exactly $b$-equivariant on all of
$\Omega$, with no erosion.  In particular, the class of operators equivariant
under a global symmetry group is closed under composition.
\end{corollary}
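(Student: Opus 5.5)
The corollary should follow from Theorem~\ref{thm:two-layer-erosion} together with Lemma~\ref{lem:erosion-properties}(iv). For a global bisection $b$ we have $U_b=U_b'=\Omega$. Both complements in \eqref{eq:eroded-domain} are then empty, so $U_b^{\ominus r}=\Omega$ for every $r\geq 0$. In particular $V=U_b^{\ominus r_{12}}=\Omega$, $V'=\Omega$ and $b^{\ominus r_{12}}=b$. All projections $P_V$, $P_{V'}$ in \eqref{eq:two-layer-filtered-equivariance} are the identity, and the identity reduces to $\Phi_2\Phi_1\Lambda_0(b)=\Lambda_2(b)\Phi_2\Phi_1$. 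This is exact $b$-equivariance on all of $\Omega$, with no erosion.

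The second assertion should not depend on integral kernels or on finite propagation, since those hypotheses of Theorem~\ref{thm:two-layer-erosion} are irrelevant when no erosion is needed. I would prove it directly at the operator level. For a global bisection, Definition~\ref{def:B-equivariant} reads $\Phi_j\Lambda_{j-1}(b)=\Lambda_j(b)\Phi_j$ for $j=1,2$, and $\Lambda_j(b)$ is unitary by Proposition~\ref{prop:transport-partial-isometry}. Then
\[
\Phi_2\Phi_1\Lambda_0(b)=\Phi_2\Lambda_1(b)\Phi_1=\Lambda_2(b)\Phi_2\Phi_1,
\]
which is the usual intertwining argument. For a global symmetry group $G$, I would take $\mathcal B_G=\{b_g\}$ as in Example~\ref{ex:global-group-symmetry-datum} and apply this for every $g$; this gives closure of the $G$-equivariant class under composition. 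By induction the same holds for arbitrary finite compositions.

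The only point needing care is consistency with the kernel-level statement. If one insists on the hypotheses of Theorem~\ref{thm:two-layer-erosion}, the Fubini assumption ensures the composite kernel formula, and the change of variables in its proof runs over all of $\Omega$. I expect no real obstacle here: the operator-level computation avoids kernels entirely, and it is the argument I would present.
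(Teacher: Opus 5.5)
Your proposal is correct. Your first paragraph is exactly the paper's proof. The paper's proof is the single observation that Lemma~\ref{lem:erosion-properties}(iv) gives $U_b^{\ominus r}=\Omega$ for every $r$, so Theorem~\ref{thm:two-layer-erosion} applies with $V=V'=\Omega$ and all projections equal to the identity.

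The operator-level argument you say you would actually present takes a genuinely different and more general route. It never passes through kernels, so it drops the integral-channel, finite-propagation, and Fubini hypotheses that the corollary inherits from Theorem~\ref{thm:two-layer-erosion}. It works for arbitrary bounded operators satisfying Definition~\ref{def:B-equivariant} for a global $b$; this is the reduction to the ordinary intertwining relation the paper notes right after that definition. The chain of equalities also uses neither linearity nor the unitarity of $\Lambda_j(b)$ that you cite, so the citation is superfluous. Consequently, the same two-line argument yields Corollary~\ref{cor:filtered-network}(iii) for networks with nonlinear pointwise equivariant layers as well.

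What the paper's route buys is placement rather than strength. It exhibits the corollary as the $r$-independent endpoint of the erosion theory, with the transported kernel identity holding on all of $\Omega\times\Omega$, which is how global symmetries enter the filtration picture. For the statement as given, including the closure-under-composition clause, your direct intertwining argument is the cleaner and more general proof.
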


\begin{proof}
By Lemma~\ref{lem:erosion-properties}(iv),
$U_b^{\ominus r}=\Omega$ for every $r$.
\end{proof}

The erosion is not merely an artefact of the proof.

\begin{example}[Composition may fail on the maximal partial domain]
\label{ex:five-point-erosion}
Let $\Omega=\{0,1,2,3,4\}$ with the usual distance and counting measure, and
let $b$ be the partial translation $i\mapsto i+1$ from
$U=\{0,1,2,3\}$ to $U'=\{1,2,3,4\}$.  Take scalar fibres and let
\begin{equation}
 L=
 \begin{pmatrix}
 0&1&0&0&0\\
 1&0&1&0&0\\
 0&1&0&1&0\\
 0&0&1&0&1\\
 0&0&0&1&0
 \end{pmatrix}.
 \label{eq:five-point-layer}
\end{equation}
The operator $L$ has propagation one and satisfies the partial equivariance
identity for $b$.  Its square is
\[
 L^2=
 \begin{pmatrix}
 1&0&1&0&0\\
 0&2&0&1&0\\
 1&0&2&0&1\\
 0&1&0&2&0\\
 0&0&1&0&1
 \end{pmatrix}.
\]
For the basis vector $e_0$ one obtains
\[
 P_{U'}L^2\Lambda(b)e_0=2e_1+e_3,
 \qquad
 \Lambda(b)P_UL^2P_Ue_0=e_1+e_3,
\]
so $L^2$ is not equivariant on the maximal domain $U$.  The one-step erosion
is $U^{\ominus1}=\{1,2\}$, with image $\{2,3\}$, and a direct calculation
shows that the equivariance identity holds on this restricted domain, exactly
as predicted by Theorem~\ref{thm:two-layer-erosion}.
\end{example}

\subsection{The $n$-layer path theorem}
\label{ssec:n-layer-composition}

For a deeper composite, the required erosion is governed not by the total
receptive-field radius alone, but by the largest distance of an intermediate
site from the nearer endpoint of a contributing path.

Let $r_1,\ldots,r_n\geq0$ and define the prefix and suffix radii
\begin{equation}
  P_i:=\sum_{j=1}^{i}r_j,
  \qquad
  S_i:=\sum_{j=i+1}^{n}r_j,
  \qquad 1\leq i\leq n-1.
  \label{eq:prefix-suffix-radii}
\end{equation}
The \emph{balanced path radius} is
\begin{equation}
  \rho_n
  :=\max_{1\leq i\leq n-1}\min\{P_i,S_i\},
  \qquad \rho_1:=0.
  \label{eq:balanced-path-radius}
\end{equation}

\begin{theorem}[Multi-layer filtered equivariance]
\label{thm:n-layer-filtered-equivariance}
Let
\[
  \Phi_i\colon\mathcal H_{i-1}\longrightarrow\mathcal H_i,
  \qquad 1\leq i\leq n,
\]
be $\mathcal B$-equivariant integral channels with propagation at most $r_i$.
Assume that their iterated composite kernel is well defined.  Then, for every
$b\in\mathcal B$, the network
\begin{equation}
  \Phi^{(n)}:=\Phi_n\Phi_{n-1}\cdots\Phi_1
  \label{eq:n-layer-composite}
\end{equation}
is equivariant under $b^{\ominus\rho_n}$, where $\rho_n$ is given by
\eqref{eq:balanced-path-radius}.
\end{theorem}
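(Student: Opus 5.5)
The plan is to generalize the proof of Theorem~\ref{thm:two-layer-erosion} directly to the iterated kernel, rather than inducting on the two-layer statement. Induction would be lossy: an $(n-1)$-layer composite only yields an eroded bisection, and further composition would erode again. Write the composite kernel as the iterated integral
\[
  K^{(n)}(z,x)=\int K_n(z,y_{n-1})K_{n-1}(y_{n-1},y_{n-2})\cdots K_1(y_1,x)\,d\nu(y_1)\cdots d\nu(y_{n-1}),
\]
and set $y_0=x$, $y_n=z$. By Theorem~\ref{thm:bisection-kernel} applied to $b^{\ominus\rho_n}$, it suffices to prove the transport identity $K^{(n)}(\tau_b z,\tau_b x)=R_n(b(z))K^{(n)}(z,x)R_0(b(x))^{-1}$ for a.e.\ $x,z\in V:=U_b^{\ominus\rho_n}$.

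\textbf{Path localization.} This is the key geometric lemma. Take $x,z\in V$ and any path $(y_1,\dots,y_{n-1})$ with nonzero integrand, so that $d(y_{j},y_{j-1})\le r_j$. By the triangle inequality, $d(y_i,x)\le P_i$ and $d(y_i,z)\le S_i$, hence $\min\{d(y_i,x),d(y_i,z)\}\le\min\{P_i,S_i\}\le\rho_n$. Lemma~\ref{lem:erosion-properties}(ii) then gives $y_i\in U_b$ for every intermediate index. So every contributing path lies in $U_b$, and the integration may be restricted to $U_b^{\,n-1}$. The same argument on the target side, using the second condition in \eqref{eq:eroded-domain} at $\tau_b(x)$ and $\tau_b(z)$, shows that the integral defining $K^{(n)}(\tau_b z,\tau_b x)$ may be restricted to $(U_b')^{n-1}$.

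\textbf{Change of variables and telescoping.} Substitute $y_i'=\tau_b(y_i)$ in every intermediate variable; this is legitimate because $\tau_b$ is a $\nu$-preserving bijection $U_b\to U_b'$. Then apply the single-layer constraint \eqref{eq:kernel_constraint} to each factor $K_j(\tau_b y_j,\tau_b y_{j-1})$. This is valid because both $y_j$ and $y_{j-1}$ lie in $U_b$, and each $\Phi_j$ is equivariant on the maximal domain. The intermediate factors $R_j(b(y_j))^{-1}R_j(b(y_j))$ cancel, exactly as in the two-layer case, leaving $R_n(b(z))\,[\text{integrand}]\,R_0(b(x))^{-1}$. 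Integrating gives the desired identity.

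\textbf{Main obstacle.} The delicate point is measure-theoretic. Each single-layer constraint holds only $\nu\times\nu$-a.e., and we must conclude that the product identity holds a.e.\ on $U_b^{\,n+1}$. The intended argument is that each constraint fails only on a null set of pairs, whose preimage under the coordinate projection $(y_j,y_{j-1})$ is null in the product space by Fubini. The well-definedness assumption on the iterated kernel then justifies interchanging integrals. One should also check the boundary cases $\rho_1=0$ and the indices $i=0,n$, where the endpoints are $x,z\in U_b$ themselves, so that the erosion radius is genuinely $\rho_n$ and not $\sum r_j$.
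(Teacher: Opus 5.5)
Your proposal is correct and follows essentially the same route as the paper's proof. It localizes every contributing path in $U_b$, and its image in $U_b'$, via the bounds $d(y_i,x)\le P_i$, $d(y_i,z)\le S_i$ and Lemma~\ref{lem:erosion-properties}(ii), then changes variables in all $n-1$ intermediate integrations, telescopes the intermediate representation matrices, and concludes with the bisection-equivariant kernel theorem applied to $b^{\ominus\rho_n}$; your Fubini treatment of the almost-everywhere issues is a useful detail the paper leaves implicit.
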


\begin{proof}
Write $x_0=x$, $x_n=z$ and express the composite kernel as
\begin{equation}
 K^{(n)}(x_n,x_0)
 =\int_{\Omega^{n-1}}
 K_n(x_n,x_{n-1})\cdots K_1(x_1,x_0)
 \,d\nu(x_1)\cdots d\nu(x_{n-1}).
 \label{eq:n-layer-kernel}
\end{equation}
A nonzero contribution determines a path
$x_0,x_1,\ldots,x_n$ satisfying
$d(x_i,x_{i-1})\leq r_i$.  Hence, for every intermediate index $i$,
\begin{equation}
 d(x_i,x_0)\leq P_i,
 \qquad
 d(x_i,x_n)\leq S_i.
 \label{eq:path-endpoint-bounds}
\end{equation}
Let $x_0,x_n\in U_b^{\ominus\rho_n}$.  By definition of $\rho_n$, at least
one of $P_i$ and $S_i$ is no larger than $\rho_n$.  Equation
\eqref{eq:path-endpoint-bounds} and
Lemma~\ref{lem:erosion-properties}(ii) therefore imply $x_i\in U_b$ for every
$i$.  The same argument on the image side shows that every transformed
intermediate point lies in $U_b'$.

We may thus change variables $x_i'=\tau_b(x_i)$ in all $n-1$ integrations and
apply the kernel constraint to each factor.  The internal representation
matrices telescope:
\[
 R_n(b(x_n))K_n
 \underbrace{R_{n-1}(b(x_{n-1}))^{-1}R_{n-1}(b(x_{n-1}))}_{I}
 \cdots
 \underbrace{R_1(b(x_1))^{-1}R_1(b(x_1))}_{I}
 K_1R_0(b(x_0))^{-1}.
\]
It follows that
\[
 K^{(n)}\bigl(\tau_b(x_n),\tau_b(x_0)\bigr)
 =R_n(b(x_n))K^{(n)}(x_n,x_0)R_0(b(x_0))^{-1}
\]
on $U_b^{\ominus\rho_n}\times U_b^{\ominus\rho_n}$.  The kernel theorem
completes the proof.
\end{proof}

\begin{corollary}[Equal-radius layers]
\label{cor:equal-radius-erosion}
If every layer has propagation at most $r_0$, then
\begin{equation}
  \rho_n=\left\lfloor\frac n2\right\rfloor r_0.
  \label{eq:equal-radius-balanced-erosion}
\end{equation}
Thus a depth-$n$ linear network is exactly transported on the region lying
more than $\lfloor n/2\rfloor r_0$ from the source and image frontiers of the
partial motion.
\end{corollary}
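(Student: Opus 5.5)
The plan is to compute $\rho_n$ directly from the definition~\eqref{eq:balanced-path-radius}; the second sentence of the corollary then follows from Theorem~\ref{thm:n-layer-filtered-equivariance}. With $r_j=r_0$ for all $j$, the prefix and suffix radii are $P_i=i\,r_0$ and $S_i=(n-i)\,r_0$. Hence
\[
  \rho_n=r_0\max_{1\leq i\leq n-1}\min\{i,\,n-i\}.
\]
The problem therefore reduces to the elementary integer identity $\max_{1\leq i\leq n-1}\min\{i,n-i\}=\lfloor n/2\rfloor$ for $n\geq 2$. For $n=1$ the range of $i$ is empty, and the convention $\rho_1=0=\lfloor 1/2\rfloor r_0$ covers that case.

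For the identity I will argue in two directions.

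\emph{Upper bound.} For every $i$ we have $\min\{i,n-i\}\leq n/2$, since $2\min\{i,n-i\}\leq i+(n-i)=n$. Because the minimum is an integer, it is at most $\lfloor n/2\rfloor$.

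\emph{Attainment.} Take $i=\lfloor n/2\rfloor$. This index lies in $\{1,\ldots,n-1\}$ whenever $n\geq2$. It satisfies $n-i=\lceil n/2\rceil\geq i$, so $\min\{i,n-i\}=\lfloor n/2\rfloor$.

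Multiplying by $r_0\geq0$ gives \eqref{eq:equal-radius-balanced-erosion}. One subtlety: the theorem only assumes propagation \emph{at most} $r_0$, but the same path argument works with each $r_i$ replaced by the common upper bound $r_0$. So I apply Theorem~\ref{thm:n-layer-filtered-equivariance} with $r_i:=r_0$ for every $i$.

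For the interpretive statement, Theorem~\ref{thm:n-layer-filtered-equivariance} gives equivariance under $b^{\ominus\rho_n}$. By Definition~\ref{def:bisection-erosion}, its domain $U_b^{\ominus\rho_n}$ consists exactly of the points $x$ whose distance to $\Omega\setminus U_b$ exceeds $\rho_n$ and whose image $\tau_b(x)$ has distance to $\Omega\setminus U_b'$ exceeding $\rho_n$. This is precisely the region lying more than $\lfloor n/2\rfloor r_0$ from the source and image frontiers.

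There is no real obstacle here. The only points needing care are the degenerate case $n=1$ and checking that the maximizing index $i=\lfloor n/2\rfloor$ lies in the admissible range $1\leq i\leq n-1$.
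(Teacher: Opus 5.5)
Your proposal is correct and follows the paper's proof. With $P_i=ir_0$ and $S_i=(n-i)r_0$, everything reduces to $\max_{1\le i\le n-1}\min\{i,n-i\}=\lfloor n/2\rfloor$; the paper simply asserts this, and you verify it via $2\min\{i,n-i\}\le n$ and the choice $i=\lfloor n/2\rfloor$. Your extra details are also correct and are left implicit in the paper: the convention $\rho_1=0$, setting $r_i:=r_0$ in Theorem~\ref{thm:n-layer-filtered-equivariance} (whose hypothesis is already only an upper bound), and unpacking $U_b^{\ominus\rho_n}$ via Definition~\ref{def:bisection-erosion}.
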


\begin{proof}
In this case $P_i=ir_0$ and $S_i=(n-i)r_0$.  The maximum of
$\min\{i,n-i\}$ over $1\leq i\leq n-1$ is $\lfloor n/2\rfloor$.
\end{proof}

\begin{remark}[A convenient conservative bound]
\label{rem:conservative-erosion}
The balanced radius is bounded by
\begin{equation}
  \rho_n\leq
  \widehat\rho_n
  :=\max\left\{
  \sum_{j=1}^{n-1}r_j,
  \sum_{j=2}^{n}r_j
  \right\}.
  \label{eq:conservative-erosion}
\end{equation}
For equal radii, $\widehat\rho_n=(n-1)r_0$.  Theorem~\ref{thm:n-layer-filtered-equivariance} shows that
it is generally over-conservative.  Using the balanced radius produces larger,
less frequently empty certification domains.
\end{remark}

\begin{remark}[Why the radius is balanced]\label{rem:layerwise-vs-endtoend}
Definition~\ref{def:B-equivariant} restricts both the support of the input and
the observation window of the output.  Every contributing path is therefore
controlled from both endpoints, which explains the minimum in
\eqref{eq:balanced-path-radius}.  A one-sided statement that allowed arbitrary
inputs throughout $U_b$ while controlling only outputs near one endpoint would
instead involve a cumulative receptive-field radius.
\end{remark}

\section{Deep networks and filtered equivariance}
\label{sec:deep-filtered-equivariance}

\subsection{The filtration by erosion radius}
\label{ssec:filtered-classes}

The preceding theorems motivate a graded weakening of maximal-domain partial
equivariance.

\begin{definition}[Filtered equivariance class]
\label{def:filtered-equivariance}
For $r\geq0$, let $\mathscr E_r(\mathfrak S_{\mathrm{in,out}})$ be the class of
maps $F$ between the corresponding section spaces such that, for every
$b\in\mathcal B$, the equivariance identity holds for the restricted
bisection $b^{\ominus r}$.  Explicitly, with
$V=U_b^{\ominus r}$ and $V'=\tau_b(V)$,
\begin{equation}
 P_{V'}^{E^{\mathrm{out}}}
 F\,\Lambda^{\mathrm{in}}(b^{\ominus r})
 =
 \Lambda^{\mathrm{out}}(b^{\ominus r})
 P_V^{E^{\mathrm{out}}}F P_V^{E^{\mathrm{in}}}.
 \label{eq:filtered-equivariance-class}
\end{equation}
For nonlinear $F$, equation \eqref{eq:filtered-equivariance-class} is
understood pointwise as an identity of maps applied to arbitrary input
sections.
\end{definition}

Thus $\mathscr E_0$ is the class of maximally $\mathcal B$-equivariant maps of
Definition~\ref{def:B-equivariant}.  Because larger erosion radii produce
smaller domains,
\begin{equation}
  \mathscr E_r\subseteq\mathscr E_s,
  \qquad 0\leq r\leq s.
  \label{eq:equivariance-filtration}
\end{equation}
The family $(\mathscr E_r)_{r\geq0}$ is therefore an increasing filtration.
Theorems~\ref{thm:two-layer-erosion} and
\ref{thm:n-layer-filtered-equivariance} state that composites of exact
finite-propagation layers move from $\mathscr E_0$ to a controlled higher
level of this filtration.

It is useful to record the smallest certified erosion.

\begin{definition}[Equivariance erosion index]
\label{def:erosion-index}
For a map $F$, define
\begin{equation}
  \epsilon_{\mathcal B}(F)
  :=\inf\{r\geq0:F\in\mathscr E_r\},
  \label{eq:erosion-index}
\end{equation}
with value $+\infty$ if no finite radius is available.  The index is a
worst-case guarantee over all bisections in $\mathcal B$; individual
bisections may admit larger domains.
\end{definition}

For a single layer classified by Part~\ref{part:symmetry-kernels},
$\epsilon_{\mathcal B}=0$.  For a composite, the balanced path radius gives an
upper bound rather than necessarily the exact index, because the learned
kernels may have smaller effective support or cancellations may enlarge the
equivariant domain.

\subsection{Pointwise nonlinearities and affine maps}
\label{ssec:filtered-nonlinearities}

A practical network also contains nonlinear maps.  The relevant class is
fiberwise and therefore has locality radius zero.

\begin{definition}[Equivariant pointwise nonlinearity]
\label{def:equivariant-nonlinearity}
Let $E\to\Omega$ and $F\to\Omega$ carry representations $R^E$ and $R^F$.
A measurable family of maps $\sigma_x:E_x\to F_x$ is pointwise equivariant if
\begin{equation}
  \sigma_{\tau_b(x)}\bigl(R^E(b(x))v\bigr)
  =R^F(b(x))\sigma_x(v)
  \label{eq:pointwise-nonlinearity-equivariance}
\end{equation}
for every $b\in\mathcal B$, $x\in U_b$, and $v\in E_x$.  It acts on sections
by $(\Sigma\psi)(x)=\sigma_x(\psi(x))$.
\end{definition}

\begin{proposition}[Pointwise maps are exact and zero-local]
\label{prop:pointwise-erosion}
Every pointwise equivariant map $\Sigma$ belongs to $\mathcal{E}_0$
and has locality radius zero. However, inserting nonlinear pointwise
maps between linear layers does not preserve the balanced path radius
of Theorem~\ref{thm:n-layer-filtered-equivariance}: the balanced bound relies on the
multilinear path decomposition of the composite kernel, which fails
for nonlinear composites. The erosion certified for such composites is
given by Corollary~\ref{cor:filtered-network}(ii) below, and
Section~\ref{sec:trained-certs} exhibits a trained network attaining
it, so no radius smaller than the cumulative one can be certified in
general.
\end{proposition}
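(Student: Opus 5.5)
The positive part of Proposition~\ref{prop:pointwise-erosion} is a direct verification from the definitions. Fix $b\in\mathcal B$ with $U=U_b$, $U'=U_b'$, and an arbitrary input section $\psi$. We evaluate both sides of \eqref{eq:filtered-equivariance-class} with $r=0$ at a point $y'\in U'$, writing $y'=\tau_b(y)$ with $y\in U$. By Definition~\ref{def:transport}, the left side is
\[
\sigma_{\tau_b(y)}\bigl(R^E(b(y))\psi(y)\bigr).
\]
The right side is $R^F(b(y))\,\sigma_y\bigl(\mathbf 1_U(y)\psi(y)\bigr)=R^F(b(y))\,\sigma_y(\psi(y))$. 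These agree by \eqref{eq:pointwise-nonlinearity-equivariance}. Off $U'$, both sides vanish because of the projection $P_{U'}$ and the support of $\Lambda^{F}(b)$.

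Two technical points need care for nonlinear maps. First, $P_U\Sigma P_U=P_U\Sigma$ only if $\sigma_x(0)$ causes no trouble; we only use the equality on $U$, where $P_U$ acts as the identity, so this is harmless. Second, if one insists on $\sigma_x(0)=0$ off $U$ for the right side, the outer $P_U$ already kills those values. Measurability of $\Sigma\psi$ follows from measurability of the family $\{\sigma_x\}$. Locality radius zero follows because $(\Sigma\psi)(x)$ depends only on $\psi(x)$: if $d(A,C)>0$, then $A\cap C=\varnothing$, so $P_A\Sigma P_C\psi=P_A\Sigma(0)$ on $A$. For the nonlinear analogue of \eqref{eq:finite-propagation-projections}, we interpret this as saying that changing $\psi$ on $C$ does not change the output on $A$.

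The negative part is that the balanced path radius of Theorem~\ref{thm:n-layer-filtered-equivariance} does not survive insertion of nonlinearities. We first identify where the proof of that theorem uses linearity: it uses the multilinear integral \eqref{eq:n-layer-kernel}. In that integral each contributing path is a single chain $x_0,\dots,x_n$, and one of its two endpoints controls every intermediate point. For a nonlinear composite such as $\Phi_2\Sigma\Phi_1$, the output at $z$ depends on $\Sigma$ applied to the whole field $\Phi_1\psi$ near intermediate sites. We can no longer restrict the input to $V$ and split paths; the dependence is one-sided, as in Remark~\ref{rem:layerwise-vs-endtoend}. The certified radius is therefore the cumulative radius of Corollary~\ref{cor:filtered-network}(ii), to which we defer.

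To show that the balanced radius genuinely fails, we would build a counterexample on the five-point model of Example~\ref{ex:five-point-erosion}. Take two copies of $L$ with a nonlinearity between them, for instance $\sigma(t)=t^2$ or a ReLU with bias. Then exhibit an input supported in $U^{\ominus1}$ whose equivariance identity fails. This works because the nonlinearity sees the sum of contributions, including the intermediate value at a site outside the domain, and these contributions no longer cancel after projection. The sharpness statement appeals to the trained-network example of Section~\ref{sec:trained-certs}.

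The main obstacle is making the ``does not preserve'' assertion precise. It is a non-implication, so a single explicit counterexample is the cleanest route. We must also check that the paper's $P_U F P_U$ convention for nonlinear $F$ is interpreted consistently, with the inner projection applied to the input before the nonlinearity.
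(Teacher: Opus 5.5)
Your positive half (membership in $\mathscr E_0$ and zero locality) is the same pointwise check the paper makes, and it is fine. The gap is in the negative half: the explicit counterexample you propose cannot exist. For two layers of radius one, the balanced radius $\rho_2=\min\{r_1,r_2\}=1$ equals the cumulative radius $\hat\rho_2=r_2=1$. Corollary~\ref{cor:filtered-network}(ii), which you yourself invoke, therefore already certifies $L\Sigma L$ on $U^{\ominus1}=\{1,2\}$ for every pointwise equivariant $\Sigma$.

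Concretely, take $\psi(1)=a$ and $\psi(2)=c$. The masked side produces the intermediate field $(\sigma(a),\sigma(c),\sigma(a),\sigma(c),\sigma(0))$, and the transported side produces $(\sigma(0),\sigma(a),\sigma(c),\sigma(a),\sigma(c))$. Both sides of \eqref{eq:filtered-equivariance-class} then equal $(2\sigma(a),2\sigma(c))$ on $\{2,3\}$, for $\sigma(t)=t^2$ and for a biased ReLU alike. The mechanism you describe does not operate here. An output site in $U^{\ominus r_2}$ reads only intermediate sites in $U$, and there the first-layer fields agree exactly because the input is masked.

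To separate $\rho_n$ from $\hat\rho_n$ you need two things. First, depth at least three with equal radii, or $r_2>r_1$. Second, some $\sigma_x(0)\neq0$, i.e.\ a bias-like term. To see why the second is needed, suppose every $\sigma_x(0)=0$ and the linear layers carry no bias. Then:
\begin{itemize}
\item features at level $i$ stay within distance $P_i$ of the masked input;
\item a transport mismatch can only be created by reading nonzero features at sites outside $U$ (masked side) or outside $U'$ (transported side);
\item such a mismatch then spreads by at most $r_k$ per layer.
\end{itemize}
This bookkeeping reproduces exactly the balanced radius, so $\sigma(t)=t^2$ fails to give a counterexample at every depth.

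A working example: on $\{0,\dots,5\}$ with $b\colon i\mapsto i+1$, take $F=L\Sigma L\Sigma L$ with $\sigma(t)=t^2+1$ and $\psi=0$. Then $F(0)=(5,7,10,10,7,5)$. On $U^{\ominus1}=\{1,2,3\}$, the left side of \eqref{eq:filtered-equivariance-class} gives $(10,10,7)$ on $\{2,3,4\}$, while the right side gives $(7,10,10)$. On $U^{\ominus2}=\{2\}$, where $2=\hat\rho_3$, both sides equal $10$.

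Alternatively, you can rest the negative claim, as the paper does, solely on the trained counterexample of Section~\ref{sec:trained-certs}. Note that the network there, GEQ-nl, also carries admissible biases.
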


\begin{proof}
The first two statements are as before: \eqref{eq:pointwise-transport}
gives the transport identity independently at every point of the
bisection domain, and the value at $x$ depends only on the fibre at
$x$. For the negative statement, see the counterexample of
Section~\ref{sec:trained-certs}: a composite
$\Phi_2\,\Sigma\,\Phi_1$ of exactly equivariant layers with a
pointwise equivariant nonlinearity $\Sigma$ between them need not
satisfy the transport identity on the domain eroded by
$\min\{r_1,r_2\}$.
\end{proof}

An affine pointwise map $v\mapsto A_xv+c_x$ is equivariant precisely when the
linear part is a pointwise intertwiner and the bias field satisfies
\begin{equation}
  c_{\tau_b(x)}=R^{\mathrm{out}}(b(x))c_x.
  \label{eq:equivariant-bias-field}
\end{equation}
In particular, a constant scalar bias is admissible on a trivial feature type,
whereas a bias in a nontrivial irreducible type must lie in its invariant
subspace.  The detailed construction of nonlinearities for the bulk, edge,
and corner representations is deferred to
Section~\ref{sec:architectural-components}.

\begin{remark}[Spatial normalization and attention]
Pointwise norm and gated nonlinearities are zero-local.  By contrast, a
normalization using spatial averages, global pooling, or unrestricted
self-attention is generally nonlocal and may have infinite propagation.  The
finite-erosion theorems then provide no nontrivial guarantee unless the
operation is windowed or supplied with a separate equivariance proof.
\end{remark}

\subsection{Sums, parallel branches, and residual connections}
\label{ssec:filtered-residuals}

The filtration behaves naturally under the elementary operations used to
assemble neural architectures.

\begin{proposition}[Algebraic stability]
\label{prop:filtered-algebraic-stability}
Suppose $F$ and $G$ have compatible input and output types.
\begin{enumerate}[label=\textup{(\roman*)}]
\item If $F\in\mathscr E_r$ and $G\in\mathscr E_s$, then
      $F+G\in\mathscr E_{\max\{r,s\}}$.
\item Their direct sum or channel-wise concatenation belongs to
      $\mathscr E_{\max\{r,s\}}$.
\item If $F$ is $r_F$-local and $G$ is $r_G$-local, then $F+G$ and the
      parallel map $(F,G)$ are $\max\{r_F,r_G\}$-local.
\item The identity map belongs to $\mathscr E_0$ and is zero-local.  Hence a
      residual block $I+F$ has the same certified erosion and locality bounds
      as $F$.
\end{enumerate}
\end{proposition}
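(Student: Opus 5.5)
The plan is to reduce everything to two elementary facts: the defining identity \eqref{eq:filtered-equivariance-class} is additive in $F$ when $F$ is linear, and it is compatible with fibrewise direct sums. We also need monotonicity \eqref{eq:equivariance-filtration}, so that a map in $\mathscr E_r$ or $\mathscr E_s$ lies in $\mathscr E_{\max\{r,s\}}$. The inclusion $\mathscr E_r\subseteq\mathscr E_s$ for $r\le s$ should itself be checked rather than quoted. Set $V_s=U_b^{\ominus s}\subseteq V_r=U_b^{\ominus r}$ by Lemma~\ref{lem:erosion-properties}(i). Then $\Lambda(b^{\ominus s})=\Lambda(b^{\ominus r})P_{V_s}$, and $P_{V_s'}=P_{V_s'}P_{V_r'}$. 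Multiply the $r$-identity on the left by $P_{V_s'}$ and on the right by $P_{V_s}$. Finally use $P_{V_s'}\Lambda(b^{\ominus r})=\Lambda(b^{\ominus r})P_{V_s}$, which is the intertwining of projections by transport, a consequence of Proposition~\ref{prop:transport-partial-isometry}. This yields the $s$-identity. For nonlinear $F$ the same manipulation is done pointwise on sections: restricting the input to $V_s$ is just applying the identity to $P_{V_s}\psi$.

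For (i), fix $b$ and set $t=\max\{r,s\}$. Both $F$ and $G$ then satisfy the identity for $b^{\ominus t}$. When $F,G$ are linear, adding the two identities gives the result, since both sides are linear in the map. For nonlinear maps, the right-hand side $\Lambda^{\mathrm{out}}P_V(F+G)(P_V\psi)$ still splits, because $\Lambda^{\mathrm{out}}$ and $P_V$ are linear operators applied after $F+G$. So additivity is only needed in the output, and that holds. For (ii), the direct-sum bundle carries $R^{\mathrm{out}}_F\oplus R^{\mathrm{out}}_G$. Transport and projections act blockwise, so the identity for $(F,G)$ is the pair of identities for $F$ and $G$ at radius $t$. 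If channel concatenation means $\psi\mapsto(F\psi,G\psi)$ with a common input, the same blockwise check applies.

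For (iii), I would take $r$-locality to mean $P_A F P_C=0$ whenever $d(A,C)>r$, in the sense of Definition~\ref{def:finite-propagation}. For nonlinear maps I would read it as "output on $A$ is unaffected by input on $C$". Both notions are stable under sums and under stacking outputs, with the maximal radius, because the condition holds for each summand or component separately. For (iv), the identity is the pointwise channel $\varphi_x=I$, which is trivially an intertwiner. So by Proposition~\ref{prop:pointwise-erosion} it lies in $\mathscr E_0$ and has propagation zero; directly, $P_{V'}\Lambda(b)=\Lambda(b)P_V$. Applying (i) and (iii) with $r=0$ gives $I+F\in\mathscr E_{\max\{0,r\}}=\mathscr E_r$ and the same locality.

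The main obstacle is the monotonicity step, specifically the commutation $P_{V_s'}\Lambda(b^{\ominus r})=\Lambda(b^{\ominus r})P_{V_s}$ together with $\tau_b(V_s)=V_s'$. This needs $V_s\subseteq V_r$, which Lemma~\ref{lem:erosion-properties}(i) supplies. Everything else is bookkeeping, with one thing to watch: additivity for nonlinear $F$ must be used only on the output side, as above.
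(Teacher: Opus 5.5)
Your proposal is correct and follows the paper's route. Both maps are placed on the common eroded domain $U_b^{\ominus\max\{r,s\}}$, the transport identities are added or juxtaposed, and the identity map commutes with every transport operator. The paper takes the monotonicity $\mathscr E_r\subseteq\mathscr E_s$ for granted; your explicit check via $\Lambda(b^{\ominus s})=\Lambda(b^{\ominus r})P_{V_s}=P_{V_s'}\Lambda(b^{\ominus r})$, and your remark that nonlinear maps need additivity only on the output side, make that step rigorous.
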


\begin{proof}
All statements follow by adding or juxtaposing the corresponding transport
identities on the common eroded domain.  For the locality statements, the
larger of the two neighbourhoods contains all inputs required by either
branch.  The identity commutes with every transport operator.
\end{proof}

The maximum in this proposition is important for multi-branch networks: the
least local or most strongly eroded branch determines the certificate for the
combined output.  Skip connections themselves do not worsen the bound.

\subsection{Deep groupoid-steerable networks}
\label{ssec:deep-geq-networks}

Combining the linear and pointwise results gives the form needed for the
architectures of Part~\ref{part:architecture}.

\begin{corollary}[Filtered equivariance of a feed-forward network]
\label{cor:filtered-network}
Consider 
\begin{equation}
N = \Sigma_n \Phi_n \Sigma_{n-1}\Phi_{n-1}\cdots
\Sigma_1\Phi_1 \, ,
\label{eq:deep-network-form}
\end{equation}
where every $\Phi_i$ is a $B$-equivariant
finite-propagation linear layer of radius $r_i$ and every $\Sigma_i$
is a pointwise equivariant map.
\begin{enumerate}
\item[(i)] If every $\Sigma_i$ is linear (in particular, if all
$\Sigma_i$ are identities or pointwise intertwiners), then
$N \in \mathcal{E}_{\rho_n}$ with the balanced path radius
$\rho_n = \max_{1\le i\le n-1}\min\{\sum_{j\le i} r_j,
\sum_{j>i} r_j\}$ of Theorem~\ref{thm:n-layer-filtered-equivariance}.
\item[(ii)] For general (nonlinear) pointwise equivariant $\Sigma_i$,
\begin{equation}
  N \in \mathcal{E}_{\hat\rho_n},
  \qquad
  \hat\rho_n \;=\; \sum_{j=2}^{n} r_j ,
  \label{eq:cumulative-radius}
\end{equation}
which equals $(n-1)r_0$ for equal radii.
\item[(iii)] If a bisection is global, $N$ is exactly equivariant on
all of $\Omega$ at any depth, in either case.
\end{enumerate}
\end{corollary}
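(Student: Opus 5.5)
Part (i) follows from Theorem~\ref{thm:n-layer-filtered-equivariance}. If every $\Sigma_i$ is linear, then it is a pointwise intertwiner. Such a map is an integral channel with kernel supported on the diagonal, so it has propagation zero, and by the diagonal-kernel remark it satisfies the kernel constraint \eqref{eq:kernel_constraint}. The composite $\Sigma_i\Phi_i$ then has kernel $\sigma_y K_i(y,x)$. This kernel has propagation at most $r_i$ and still satisfies \eqref{eq:kernel_constraint}, because $\sigma_{\tau_b(y)}R(b(y))=R'(b(y))\sigma_y$. So $N$ is an $n$-fold composite of $\mathcal B$-equivariant finite-propagation integral channels $\Sigma_i\Phi_i$ with radii $r_i$, and Theorem~\ref{thm:n-layer-filtered-equivariance} gives $N\in\mathscr E_{\rho_n}$. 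Part (iii) follows from Lemma~\ref{lem:erosion-properties}(iv): for a global $b$ every eroded domain equals $\Omega$. In the nonlinear case one applies the global intertwining relations $\Phi_i\Lambda(b)=\Lambda(b)\Phi_i$ and $\Sigma_i\Lambda(b)=\Lambda(b)\Sigma_i$ in succession, as in Corollary~\ref{cor:global-composition}.

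For (ii), the multilinear path expansion is unavailable, so I will work with supports and locality instead. Fix $b$, set $V=U_b^{\ominus\hat\rho_n}$, $V'=\tau_b(V)$, and take an input $\psi$. The right-hand side of \eqref{eq:filtered-equivariance-class} applied to $\Lambda(b^{\ominus\hat\rho_n})\psi$ depends only on $P_V\psi$, so set $\phi=P_V\psi$. We must show, for $z\in V$, that
\[
  N(\Lambda(b)\phi)(\tau_b z)=R_n(b(z))\,N(\phi)(z).
\]
I would prove by induction on $k$ the following claim. For every $w\in U_b$ with $d(w,\Omega\setminus U_b)>\sum_{j=k+1}^n r_j$ and the analogous bound on the image side,
\[
  (\Sigma_k\Phi_k\cdots\Sigma_1\Phi_1\Lambda(b)\phi)(\tau_b w)=R_k(b(w))(\Sigma_k\Phi_k\cdots\phi)(w).
\]
\emph{Base case $k=1$.} Since $\phi$ is supported in $U_b$, the maximal-domain identity \eqref{eq:operator-equivariance} for $\Phi_1$ gives transport of $\Phi_1\phi$ at every point of $U_b$. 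Pointwise equivariance of $\Sigma_1$ then transports the result. \emph{Inductive step.} The value of $\Phi_k(\cdot)$ at $\tau_b w$ depends only on the previous layer's values in the $r_k$-ball around $\tau_b w$. By Lemma~\ref{lem:erosion-properties}(ii),(iii) this ball is the $\tau_b$-image of the $r_k$-ball around $w$, and that ball lies in the region where the induction hypothesis holds. Write $\eta$ for the previous-layer field. After cutting $\eta$ off to that ball, which does not change $\Phi_k\eta$ at $w$ by finite propagation, the kernel constraint for $\Phi_k$ transports the value. Finally $\Sigma_k$ transports pointwise.

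The main obstacle is the bookkeeping of the domain shrinkage. The first layer needs no erosion because the input is already localized in $V\subseteq U_b$. Each later layer $k\geq2$ consumes $r_k$ of margin, which is where $\hat\rho_n=\sum_{j=2}^n r_j$ comes from. The careful point is the localization argument that replaces a non-localized intermediate field by its cutoff. This step uses the projection form \eqref{eq:finite-propagation-projections} of finite propagation together with the local-isometry property of $\tau_b$, so that balls are mapped onto balls on the image side.
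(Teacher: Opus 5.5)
Your route is the paper's: (i) by absorbing each linear $\Sigma_i$ into the adjacent kernel and applying Theorem~\ref{thm:n-layer-filtered-equivariance}, (iii) by Lemma~\ref{lem:erosion-properties}(iv), and (ii) by a layer-by-layer matching induction in which the input mask gives layer one for free on all of $U_b$ and each later $\Phi_k$ costs $r_k$ of margin. However, the induction hypothesis you actually state for (ii) is mis-indexed and, as written, false.

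With margin $\sum_{j=k+1}^n r_j$, the matched region \emph{grows} with $k$. At $k=1$ it is $U_b^{\ominus\hat\rho_n}=V$. At $k=n$ the sum is empty, so the region is all of $U_b$, and your top-level claim asserts transport of $N$ on the whole maximal domain. This fails. In Example~\ref{ex:five-point-erosion} take $\Phi_1=\Phi_2=L$, $\Sigma_1(t)=t+1$ (an admissible constant bias on a trivial type), $\Sigma_2=\mathrm{id}$ and $\phi=0$. Both sides are then read off from $N(0)=L\mathbf 1=(1,2,2,2,1)$. At $w=0\in U_b$ one gets $N(0)(\tau_b 0)=2\neq 1=N(0)(0)$. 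Correspondingly, your inductive step breaks. A point $w$ with margin $>\sum_{j=k+1}^n r_j$ only guarantees margin $>\sum_{j=k+1}^n r_j-r_k$ on its $r_k$-ball. The level-$(k-1)$ hypothesis, however, needs margin $>\sum_{j=k}^n r_j$.

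Your sum $\sum_{j=k+1}^n r_j$ is the radius, at layer $k$, of the backward cone of influence of an output point in $V$; it is not a margin. The margin actually available at that layer is $\hat\rho_n-\sum_{j=k+1}^n r_j=s_k:=\sum_{j=2}^k r_j$. So state the claim on $U_b^{\ominus s_k}$, which is what your final paragraph describes. Then $s_1=0$, and your base case does hold on all of $U_b$. Also $s_n=\hat\rho_n$, so the final region is exactly $V$.

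Now take $w\in U_b^{\ominus s_k}$ and $d(z,w)\le r_k$. The triangle inequality on the source side, together with the local isometry of $\tau_b$ on the image side, puts $z$ in $U_b^{\ominus s_{k-1}}$. It also identifies the $r_k$-ball around $\tau_b w$ with the $\tau_b$-image of the $r_k$-ball around $w$. The rest of your step then goes through: restriction to the ball, the kernel constraint on $(w,z)\in U_b\times U_b$, and pointwise equivariance of $\Sigma_k$. This is the paper's argument. Its regions $V^{(k)}$ must likewise be read as $U_b^{\ominus s_k}$, decreasing to $V$, for its final step to cover all of $V$.
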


\begin{proof}
(i) Linear pointwise maps may be absorbed into the adjacent integral
kernels without changing their propagation radii, and
Theorem~\ref{thm:n-layer-filtered-equivariance} applies to the resulting linear
composite.

(ii) Fix $b\in B$ with maximal source domain $U$, let
$V = U^{\ominus\hat\rho_n}$ and $V' = \tau_b(V)$, and abbreviate the
two inputs of the identity \eqref{eq:filtered-equivariance-class} by
$\psi_R := P_V\psi$ and $\psi_L := \Lambda^{(0)}(b|_V)\psi_R$. We show
by induction that the intermediate feature fields
$h^{(k)}_L := (\Sigma_k\Phi_k\cdots\Sigma_1\Phi_1)(\psi_L)$ and
$h^{(k)}_R := (\Sigma_k\Phi_k\cdots\Sigma_1\Phi_1)(\psi_R)$ satisfy
the pointwise transport relation
\begin{equation}
  h^{(k)}_L\!\big(\tau_b(y)\big)
  \;=\; R^{(k)}\!\big(b(y)\big)\, h^{(k)}_R(y)
  \qquad\text{for all } y \in V^{(k)} := V^{\,\ominus\sum_{j=2}^{k} r_j},
  \label{eq:feature-matching}
\end{equation}
where the erosion in $V^{(k)}$ is taken inside $U$, so that
$V^{(k)} \subseteq V \subseteq U$ throughout.

For $k=1$ no erosion is needed: since $\psi_R$ vanishes outside $V$
and $\psi_L$ vanishes outside $V'$, the value $h^{(1)}_L(\tau_b(y))$
for any $y\in V$ is an integral over sources $x\in V$ only, on which
the kernel constraint \eqref{eq:kernel_constraint} applies because
$(y,x)\in U\times U$; the change of variables and unitarity of the
fibre maps give \eqref{eq:feature-matching} for $\Phi_1$, and the
pointwise equivariance of $\Sigma_1$ preserves it. This step uses the
masking of the \emph{inputs} built into the identity
\eqref{eq:filtered-equivariance-class}; it is the only layer that receives such
a free pass.

For the inductive step, intermediate features are not masked: for
$y \in V^{(k)}$ with $d(y,\cdot)\le r_k$, the layer $\Phi_k$ at
$\tau_b(y)$ reads $h^{(k-1)}_L$ at sites $\tau_b(z)$ with
$d(y,z)\le r_k$, and the relation \eqref{eq:feature-matching} at level
$k-1$ covers exactly the sites $z\in V^{(k-1)}$. By
Lemma~\ref{lem:erosion-properties}, $y\in V^{(k)}$ and $d(y,z)\le r_k$
imply $z\in V^{(k-1)}$ and $(y,z)\in U\times U$, so every contributing
source is matched and transported; the kernel constraint then yields
\eqref{eq:feature-matching} at level $k$, and $\Sigma_k$ preserves it.
Contributions from sites outside $V^{(k-1)}$ are excluded precisely by
the erosion, unlike in the linear case, they cannot be cancelled
against the transported side, because the nonlinearities preclude the
path decomposition used in Theorem~\ref{thm:n-layer-filtered-equivariance}.

Applying \eqref{eq:feature-matching} at $k=n$ on
$V^{(n)} = V^{\ominus\hat\rho_n}$ the certified domain and
projecting to $V'$ gives \eqref{eq:filtered-equivariance-class} for
$b^{\ominus\hat\rho_n}$.

(iii) For a global bisection all erosions are trivial by
Lemma~\ref{lem:erosion-properties}(iv), and the argument of (ii) applies
with $V = \Omega$ at every level.
\end{proof}

\begin{remark}[The balanced radius is a linear phenomenon]
\label{rem:balanced-linear-only}
The gap between (i) and (ii) is not an artifact of proof technique.
Section~\ref{sec:trained-certs} reports a trained depth-four network
with $r_0 = 2$ whose transport residual under a partial quarter-turn
is $0.11$ on the domain eroded by the balanced radius $4$, is $0.096$
at erosion $5$, and is $1.1\times 10^{-15}$ at the cumulative radius
$6$, while the corresponding linear network satisfies the identity at
erosion $4$ to $1.1\times 10^{-15}$. Thus \eqref{eq:cumulative-radius}
is attained, and for nonlinear composites the conservative bound of
Remark~\ref{rem:conservative-erosion} is in general the correct one.
Intuitively, the balanced radius certifies a point whenever every
\emph{path} through the receptive field can be controlled from its
nearer endpoint; a nonlinearity destroys the decomposition of the
output into path contributions, and control must then be propagated
one-sidedly from the masked inputs, eroding once per subsequent layer.
\end{remark}

\begin{remark}[Approximate finite propagation]
Real implementations may use kernels with rapidly decaying rather than
compactly supported tails.  Truncating such a kernel at radius $r$ yields an
exactly filtered-equivariant finite-propagation part plus a remainder.  The
end-to-end equivariance defect can then be bounded in operator norm by the
norms of the tails and of the remaining layers.  We do not develop the
quantitative estimate here, but this observation provides a direct route from
exact filtered equivariance to approximate equivariance for noncompact
kernels.
\end{remark}

\begin{remark}[Beyond local isometries]
The local-isometry hypothesis was chosen because it matches the rigid
bisections of the Euclidean examples.  For locally bi-Lipschitz bisections, the
same arguments remain valid after multiplying source and image erosion radii
by the relevant Lipschitz constants.  A fully intrinsic treatment may instead
use a bisection-invariant coarse structure on the groupoid object space.
\end{remark}

Part~\ref{part:filtered-equivariance} has separated three levels of symmetry:
exact layerwise equivariance, filtered end-to-end equivariance for proper local
bisections, and exact end-to-end equivariance for global bisections.  The next
part uses this distinction to design practical groupoid-steerable
architectures, including admissible feature types, nonlinearities, residual
blocks, and sparse implementations of the cross-stratum kernels.

\part{Groupoid-steerable architectures}
\label{part:architecture}

Parts~\ref{part:symmetry-kernels} and~\ref{part:euclidean-groupoids}
classified the linear maps compatible with the selected groupoid symmetry
data, while Part~\ref{part:filtered-equivariance} described how their
partial equivariance behaves under composition.  We now turn those results
into a neural architecture.  The essential design principle is that the
network does not represent a bounded-domain signal as one homogeneous stack
of channels.  At every depth it carries a stratified field with separate
bulk, edge, and corner fibres, and every nonlocal layer contains all kernel
blocks allowed by the pair-orbit classification.

The discussion has two levels.  The first is intrinsic and applies to the
continuous rectangle and, more generally, to any stratified groupoid for
which the kernel spaces of Part~\ref{part:symmetry-kernels} can be computed.
The second is the finite pixel-grid realization of
Section~\ref{sec:discrete-tangent-cone}.  In the finite model, the complete
space of equivariant layers is represented by fixed nullspace bases and a
small collection of learnable coefficient vectors.

\section{Stratified feature types and neural layers}
\label{sec:steerable}

\subsection{Layerwise feature spaces}
\label{ssec:layerwise-feature-spaces}

For a rectangle, let the feature type at depth $\ell$ be
\begin{equation}
  \tau^{(\ell)}
  =\bigl(\rho^{(\ell)},\varepsilon^{(\ell)},\delta^{(\ell)}\bigr),
  \label{eq:layer-feature-type}
\end{equation}
where $\rho^{(\ell)}$ is a representation of the bulk isotropy group,
$\varepsilon^{(\ell)}$ a representation of the edge isotropy group, and
$\delta^{(\ell)}$ a representation of the corner isotropy group.  In the
continuous model these groups are $\OO(2),\Z_2,\Z_2$; on the pixel grid they
are $D_4,\Z_2^{\mathrm{ax}},\Z_2^{\mathrm{diag}}$.  The corresponding feature
space is
\begin{equation}
  \mathcal H^{(\ell)}
  :=\mathcal H_{\rho^{(\ell)},\varepsilon^{(\ell)},\delta^{(\ell)}}
  =\mathcal H_{\mathrm b}^{(\ell)}
   \oplus\mathcal H_{\mathrm e}^{(\ell)}
   \oplus\mathcal H_{\mathrm c}^{(\ell)},
  \label{eq:layer-feature-space}
\end{equation}
with the three summands defined as in
\eqref{eq:stratified-section-space}.  A feature field is written
\begin{equation}
  h^{(\ell)}
  =\bigl(h_{\mathrm b}^{(\ell)},
         h_{\mathrm e}^{(\ell)},
         h_{\mathrm c}^{(\ell)}\bigr).
  \label{eq:stratified-feature-field}
\end{equation}

The three components should not be interpreted as unrelated branches.  They
are the restrictions of one groupoid representation to its object orbits,
and the nonlocal layers exchange information between them through the
cross-stratum kernels of Theorem~\ref{thm:rectangle-kernel-classification}.
The decomposition in \eqref{eq:layer-feature-space} records geometric type,
not architectural separation.

\begin{definition}[Complete stratified linear layer]
\label{def:complete-stratified-layer}
Fix input and output types $\tau^{\mathrm{in}}$ and
$\tau^{\mathrm{out}}$ and a propagation radius $r_0$.  A complete
radius-$r_0$ groupoid-steerable layer is an arbitrary element of the full
vector space of $\mathcal B_D^{\mathrm{rig}}$-equivariant linear maps
\begin{equation}
  \operatorname{GEQ}_{r_0}
  \bigl(\tau^{\mathrm{in}},\tau^{\mathrm{out}}\bigr)
  \subset
  \mathcal L\bigl(\mathcal H^{\mathrm{in}},
                  \mathcal H^{\mathrm{out}}\bigr)
  \label{eq:GEQ-layer-space}
\end{equation}
whose kernel is supported at distance at most $r_0$.
\end{definition}

For the rectangle, the action of such a layer is the block formula
\eqref{eq:stratified-layer}.  On the grid, the edge--edge entry separates into
same-edge and across-corner families, so the $3\times3$ matrix of strata is
implemented by the ten geometric block families of
Theorem~\ref{thm:discrete-parameter-counts}.

\begin{proposition}[Orbitwise parameterization of a complete layer]
\label{prop:complete-layer-basis}
Let $Q$ denote the orbit space of supported ordered pairs.  For every
$q\in Q$, let $\mathcal I_q$ be the corresponding joint-stabilizer
intertwiner space.  A complete continuous kernel is equivalently a measurable
section
\begin{equation}
  \kappa:q\longmapsto\kappa(q)\in\mathcal I_q,
  \label{eq:continuous-intertwiner-section}
\end{equation}
transported from the orbit representatives by the kernel constraint.  In a
measurable local frame
$\{\Psi_{q,a}\}_{a=1}^{n_q}$ of the intertwiner spaces, it has the fibrewise
form
\begin{equation}
  \kappa(q)=\sum_{a=1}^{n_q}\theta_a(q)\Psi_{q,a},
  \label{eq:continuous-orbitwise-expansion}
\end{equation}
where the $\theta_a$ are scalar coefficient functions on the pair-orbit
space.

For the finite rectangle, $Q$ is finite and the coefficient functions reduce
to scalars.  Every complete layer then has the unique expansion
\begin{equation}
  \Phi_{\theta}
  =\sum_{q\in Q}\sum_{a=1}^{n_q}\theta_{q,a}\Psi_{q,a},
  \label{eq:complete-layer-expansion}
\end{equation}
and
\begin{equation}
  \dim\operatorname{GEQ}_{r_0}
  \bigl(\tau^{\mathrm{in}},\tau^{\mathrm{out}}\bigr)
  =\sum_{q\in Q} n_q.
  \label{eq:complete-layer-dimension}
\end{equation}
This dimension is the sum of the ten formulas in
\eqref{eq:discrete-counts}.
\end{proposition}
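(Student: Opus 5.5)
The plan is to read the statement as a direct repackaging of Proposition~\ref{prop:pair-orbits}, together with Theorem~\ref{thm:bisection-kernel} and, in the finite case, Proposition~\ref{prop:finite-completeness}. First, fix a radius-$r_0$ kernel $K$ whose layer $\Phi_K$ lies in $\operatorname{GEQ}_{r_0}(\tau^{\mathrm{in}},\tau^{\mathrm{out}})$. By Theorem~\ref{thm:bisection-kernel}, $K$ satisfies the transport constraint \eqref{eq:kernel_constraint}. Proposition~\ref{prop:pair-orbits}(i) then shows that at each chosen orbit representative $p_q$ the value $K(p_q)$ lies in $\mathcal I_q=\Hom_{S_{p_q}}$. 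Setting $\kappa(q):=K(p_q)$ gives a section of the intertwiner bundle.

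Conversely, any such section $\kappa$ is transported along each pair orbit by Proposition~\ref{prop:pair-orbits}(iii). This produces a well-defined solution of the kernel constraint, supported on the supported pair set, because bisections are local isometries and so preserve $d(y,x)\le r_0$. Its layer is equivariant by Theorem~\ref{thm:bisection-kernel}. The two maps $K\mapsto\kappa$ and $\kappa\mapsto K$ are mutually inverse, since the transport relation \eqref{eq:pair-orbit-transport} determines $K$ on the whole orbit from its value at the representative. They are also linear, because both the constraint and the transport are linear in $K$. This establishes \eqref{eq:continuous-intertwiner-section}. Choosing a frame $\Psi_{q,a}$ of $\mathcal I_q$ and expanding $\kappa(q)$ in it gives \eqref{eq:continuous-orbitwise-expansion}, where $\theta_a(q)$ are the coordinates of $\kappa(q)$ in that frame.

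The main obstacle is measurability in the continuous case, as the Remark on measurability of orbitwise data anticipates. Two things are needed: a measurable choice of orbit representatives, and a measurable local frame for the intertwiner spaces, whose dimension jumps on stabilizer loci such as the normal axis and the bisector. I would handle this constructively for the rectangle rather than abstractly. Table~\ref{tab:blocks} gives explicit fundamental domains, namely radius, half-disk and half-sector coordinates. I would partition $Q$ into finitely many strata on which the stabilizer is constant (generic, axis, bisector, centre). On each stratum $\mathcal I_q$ is a fixed subspace in the adapted gauge, so a constant frame works and the measurable section is obtained piecewise. I would state the frame as local to match the wording "measurable local frame" in the proposition.

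For the finite grid, $Q$ is finite, so no measurability issue arises. Define $\Psi_{q,a}$ as the global kernel obtained by transporting the $a$-th basis intertwiner at $p_q$ over the orbit of $q$ and setting it to zero elsewhere. Distinct orbits have disjoint supports, and within each orbit the $\Psi_{q,a}$ are linearly independent. Hence $\{\Psi_{q,a}\}$ is a basis of $\operatorname{GEQ}_{r_0}$, which gives uniqueness of the expansion \eqref{eq:complete-layer-expansion} and the dimension formula \eqref{eq:complete-layer-dimension}; this is exactly Proposition~\ref{prop:finite-completeness}. Finally, grouping the orbits of $Q$ by source and target strata, with the edge--edge orbits split into same-edge and across-corner families, and summing $n_q$ within each group yields the ten counts of Theorem~\ref{thm:discrete-parameter-counts}. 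Their sum is therefore the dimension.
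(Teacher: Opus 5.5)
Your proposal is correct and follows essentially the paper's route. The paper obtains the continuous statement from the orbitwise normal form of Theorem~\ref{thm:rectangle-kernel-classification} (itself Theorem~\ref{thm:bisection-kernel} combined with Proposition~\ref{prop:pair-orbits}), and the finite statement from the direct-sum decomposition of Proposition~\ref{prop:finite-completeness}, with the coefficients as coordinates once bases are fixed; your stratification of $Q$ by stabilizer type and your transported basis $\Psi_{q,a}$ spell out the measurability and basis details that the paper only asserts.
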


\begin{proof}
The continuous statement is the orbitwise normal form of
Theorem~\ref{thm:rectangle-kernel-classification}, including its measurable
coefficient data.  The finite statement is the direct-sum decomposition
\eqref{eq:finite-orbit-direct-sum}; once bases are fixed, the expansion
coefficients are ordinary coordinates in the finite direct sum.
\end{proof}

Thus equivariance is enforced by the fixed intertwiner frames, not by a
penalty in the loss function.  In a continuous numerical model the coefficient
functions in \eqref{eq:continuous-orbitwise-expansion} must themselves be
parameterized, for example by radial or spline bases.  On the finite grid,
gradient-based training takes place directly in the free coefficient space of
\eqref{eq:complete-layer-expansion}, and every value of $\theta$ gives an
equivariant layer.

\subsection{Lifting raw inputs}
\label{ssec:lifting-inputs}

A raw data set may provide different physical quantities on different
strata.  For example, an image supplies a scalar value at every pixel, while
a boundary-value problem may supply an interior source and an independent
boundary datum.  The lifting map should preserve this distinction rather than
encode it through padding.

Let the raw fibres over the three strata carry representations
$\tau^{\mathrm{raw}}=(\rho^{\mathrm{raw}},
\varepsilon^{\mathrm{raw}},\delta^{\mathrm{raw}})$.  A pointwise lifting into
the first hidden type $\tau^{(0)}$ is specified orbitwise by intertwiners
\begin{equation}
  L_{\mathrm b}\in
  \Hom_{H_{\mathrm b}}
  \bigl(V_{\rho^{\mathrm{raw}}},V_{\rho^{(0)}}\bigr),
  \quad
  L_{\mathrm e}\in
  \Hom_{H_{\mathrm e}}
  \bigl(V_{\varepsilon^{\mathrm{raw}}},V_{\varepsilon^{(0)}}\bigr),
  \quad
  L_{\mathrm c}\in
  \Hom_{H_{\mathrm c}}
  \bigl(V_{\delta^{\mathrm{raw}}},V_{\delta^{(0)}}\bigr).
  \label{eq:pointwise-lifting-intertwiners}
\end{equation}
By the pointwise classification of Section~\ref{ssec:pointwise},
these intertwiners determine a unique groupoid-equivariant lifting over each
stratum.

For an ambient scalar field, the canonical split is
\begin{equation}
  \mathcal L_D f
  =\bigl(f|_{\Omega_{\mathrm b}},
         f|_{\Omega_{\mathrm e}},
         f|_{\Omega_{\mathrm c}}\bigr),
  \label{eq:canonical-stratified-lift}
\end{equation}
with trivial fibre type on every stratum.  On the discrete rectangle this is
simply a re-indexing of pixels into the interior array, the ordered edge list,
and the four corner sites.  If an input quantity exists only on one stratum,
its other components are set to zero.  

In particular, independent Dirichlet
data belong natively to the edge and corner components; they need not be
written into an image channel and then recovered from a padding convention.  This will be relevant for the experiments described in Section \ref{sec:boundary-learning}.

\begin{remark}[Lifting is part of the model]
The canonical split \eqref{eq:canonical-stratified-lift} is appropriate for a
scalar image, but it is not compulsory.  A vector, tensor, or orientation
field must be lifted using its actual isotropy representation.  Likewise, a
learned pointwise lift may create multiplicity channels only through the
intertwiner spaces in \eqref{eq:pointwise-lifting-intertwiners}.  An arbitrary
matrix at the input would generally destroy equivariance before the first
nonlocal layer.
\end{remark}

\subsection{Affine layers and admissible biases}
\label{ssec:affine-layers-biases}

A trainable affine layer has the form
\begin{equation}
  h\longmapsto \Phi_\theta h+c,
  \label{eq:affine-GEQ-layer}
\end{equation}
where $\Phi_\theta$ is a complete or restricted equivariant linear layer and
$c$ is an equivariant section.  On a transitive stratum with reference
isotropy group $H_s$, an admissible bias is determined by a vector in the
fixed subspace
\begin{equation}
  c_s^0\in (V_s^{\mathrm{out}})^{H_s}
  :=\{v:R_s(h)v=v\text{ for every }h\in H_s\},
  \label{eq:fixed-bias-subspace}
\end{equation}
transported coherently to all fibres of the stratum.  Consequently,

\begin{itemize}
\item in a bulk $D_4$ feature, biases occur only in copies of the trivial type
      $A_1$;
\item in an edge or corner $\Z_2$ feature, biases occur only in the even
      summands;
\item a nontrivial irreducible feature has no nonzero constant bias.
\end{itemize}

This is the concrete form of \eqref{eq:equivariant-bias-field}.  In the finite
implementation, one learnable vector is stored in each allowed multiplicity
space and is broadcast using the chosen gauges.

\subsection{Readout and task-dependent outputs}
\label{ssec:readout}

The output type should reflect the task.

\paragraph{Dense prediction.}
For a scalar field on the interior, choose a final bulk type containing
$A_1$ and apply a pointwise intertwiner onto that component.  Boundary and
corner outputs may be retained when the target contains traces, fluxes, or
boundary labels.  A full stratified prediction therefore has the same form as
\eqref{eq:stratified-feature-field} and may be compared with losses weighted
separately on the three strata.

\paragraph{Invariant scalar outputs.}
For a global symmetry, invariant pooling may be performed stratum by stratum:
first project each fibre onto its invariant subspace and then integrate or sum
over the corresponding orbit.  For example,
\begin{equation}
  \mathcal P_s(h_s)
  =\int_{\Omega_s}P_s^{\mathrm{inv}}h_s(x)\,d\nu_s(x),
  \qquad
  P_s^{\mathrm{inv}}:V_s\to V_s^{H_s}.
  \label{eq:stratum-invariant-pooling}
\end{equation}
The resulting summaries can be concatenated and passed to an ordinary
multilayer perceptron.

\begin{remark}[Pooling and partial symmetries]
Equation \eqref{eq:stratum-invariant-pooling} is invariant under bisections
that act globally on the stratum and preserve its measure.  A proper local
bisection moves only a subset, so a global integral is not governed by the
local transport identity unless the pooling window is transported together
with the bisection.  Global pooling is therefore suitable for globally
invariant classification, but it should not be included in an end-to-end
filtered-equivariance certificate for proper local symmetries without a
separate argument.
\end{remark}

\subsection{Complete and restricted architectures}
\label{ssec:complete-restricted-architectures}

The word \emph{equivariant} does not imply that an architecture spans every
equivariant operator.  We distinguish the following nested choices.

\begin{definition}[Restricted groupoid-equivariant layer]
A restricted layer is any linear subspace
\begin{equation}
  \mathcal V\subseteq
  \operatorname{GEQ}_{r_0}
  \bigl(\tau^{\mathrm{in}},\tau^{\mathrm{out}}\bigr)
  \label{eq:restricted-layer-subspace}
\end{equation}
used as the trainable layer class.  It remains equivariant, but it is complete
only when equality holds in \eqref{eq:restricted-layer-subspace}.
\end{definition}

Examples include removing all cross-stratum blocks, forbidding interactions
across a corner, tying additional coefficients, or keeping only the
free-plane steerable bulk kernel and extending it by a fixed padding rule.
Such restrictions may be useful regularizers or computational compromises,
but they are additional modeling assumptions not forced by the groupoid.
In particular, the diagonal-only model cannot communicate independent
boundary data to the bulk, even though the complete equivariant class permits
such communication.

This distinction is important experimentally.  An equivariance certificate
checks that a model lies in the admissible class; it does not show that the
class is complete.  Completeness is instead guaranteed by
Proposition~\ref{prop:finite-completeness} and tested numerically by matching
nullspace dimensions and by recovering target operators drawn from the full
class.

\section{Equivariant nonlinearities, normalization, and residual blocks}
\label{sec:architectural-components}

Linear equivariant layers alone produce a linear network.  Nonlinearities,
normalizations, and skip connections must also respect the fibre
representations.  The requirement is pointwise, so these operations introduce
no additional spatial propagation when constructed as below.

\subsection{The isotropy principle for pointwise nonlinearities}
\label{ssec:isotropy-nonlinearity-principle}

\begin{proposition}[Nonlinear isotropy reduction]
\label{prop:nonlinear-isotropy-reduction}
Let $\mathcal O$ be a transitive orbit of a groupoid representation, choose
$a_0\in\mathcal O$, and let
\begin{equation}
  \sigma_0:V_{a_0}^{\mathrm{in}}\longrightarrow
            V_{a_0}^{\mathrm{out}}
  \label{eq:reference-nonlinearity}
\end{equation}
be a measurable map satisfying
\begin{equation}
  \sigma_0\bigl(R^{\mathrm{in}}(h)v\bigr)
  =R^{\mathrm{out}}(h)\sigma_0(v),
  \qquad h\in\Gamma(a_0).
  \label{eq:isotropy-equivariant-nonlinearity}
\end{equation}
Then $\sigma_0$ extends uniquely to a pointwise groupoid-equivariant family on
$\mathcal O$ by
\begin{equation}
  \sigma_a(v)
  =R^{\mathrm{out}}(\alpha)\,
   \sigma_0\bigl(R^{\mathrm{in}}(\alpha)^{-1}v\bigr),
  \qquad \alpha:a_0\to a.
  \label{eq:transported-nonlinearity}
\end{equation}
Conversely, every pointwise groupoid-equivariant nonlinearity restricts to an
isotropy-equivariant map at $a_0$.
\end{proposition}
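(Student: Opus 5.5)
The plan is to copy the proof of Proposition~\ref{prop:pointwise-isotropy} with the linear map $C$ replaced by the nonlinear map $\sigma_0$. Linearity was never used there except through the intertwining identity. First I check that \eqref{eq:transported-nonlinearity} does not depend on the choice of $\alpha$. If $\widetilde\alpha\colon a_0\to a$ is another arrow, set $h=\alpha^{-1}\circ\widetilde\alpha\in\Gamma(a_0)$. Then
\[
R^{\mathrm{out}}(\widetilde\alpha)\,\sigma_0\bigl(R^{\mathrm{in}}(\widetilde\alpha)^{-1}v\bigr)
=R^{\mathrm{out}}(\alpha)R^{\mathrm{out}}(h)\,\sigma_0\bigl(R^{\mathrm{in}}(h)^{-1}R^{\mathrm{in}}(\alpha)^{-1}v\bigr).
\]
Applying \eqref{eq:isotropy-equivariant-nonlinearity} with $h^{-1}$ gives $R^{\mathrm{out}}(h)\sigma_0(R^{\mathrm{in}}(h)^{-1}w)=\sigma_0(w)$, so the right-hand side collapses to the formula defined with $\alpha$. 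The representation law \eqref{eq:groupoid-representation-law} is used throughout to split $R(\widetilde\alpha)=R(\alpha)R(h)$. The same identity with $\alpha=1_{a_0}$ shows that the extended family agrees with $\sigma_0$ at $a_0$.

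Next I check equivariance under an arbitrary arrow $\beta\colon a\to b$ in $\mathcal O$. Given $\alpha\colon a_0\to a$, the composite $\beta\circ\alpha$ is an admissible arrow from $a_0$ to $b$, and independence of the choice lets me use it to compute $\sigma_b$:
\[
\sigma_b\bigl(R^{\mathrm{in}}(\beta)v\bigr)=R^{\mathrm{out}}(\beta)R^{\mathrm{out}}(\alpha)\,\sigma_0\bigl(R^{\mathrm{in}}(\alpha)^{-1}v\bigr)=R^{\mathrm{out}}(\beta)\,\sigma_a(v).
\]
In particular this gives \eqref{eq:pointwise-nonlinearity-equivariance} for every arrow of the form $b(x)$ with $b\in\mathcal B$.

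For uniqueness, any equivariant family $\{\sigma'_a\}$ that extends $\sigma_0$ must satisfy $\sigma'_a(R^{\mathrm{in}}(\alpha)w)=R^{\mathrm{out}}(\alpha)\sigma_0(w)$. Substituting $w=R^{\mathrm{in}}(\alpha)^{-1}v$ forces $\sigma'_a=\sigma_a$.

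For the converse, I restrict the equivariance identity of a given family to arrows $h\in\Gamma(a_0)$, which yields \eqref{eq:isotropy-equivariant-nonlinearity} directly. If equivariance is only assumed along arrows attained by $\mathcal B$, I invoke ampleness (assumption \ref{ass:bisections}) so that every isotropy arrow lies on some bisection.

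The main obstacle is measurability, not the algebra. The family must be measurable in $a$, and that requires choosing $a\mapsto\alpha_a$ measurably. I would handle this as in the remark on measurability of orbitwise data. Take a measurable section of $t\colon\Gamma_{a_0}\to\mathcal O$ when the regularity category allows it; for a Lie groupoid this is a local smooth section, and for the finite and Euclidean cases it is given explicitly by translations and gauges. Independence of $\alpha$ means any such choice yields the same family. Measurability of $\sigma_0$ then transfers because composition with measurable linear maps preserves it.
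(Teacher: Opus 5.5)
Your proposal is correct and follows the paper's proof: independence of the transporting arrow via $h=\alpha^{-1}\circ\widetilde\alpha\in\Gamma(a_0)$, equivariance by composing transporting arrows, and uniqueness and the converse by direct restriction. Your extra remarks fill in points the paper leaves implicit: measurability of the choice $a\mapsto\alpha_a$, and the use of ampleness when equivariance is imposed only along arrows attained by bisections.
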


\begin{proof}
If $\alpha'$ is another arrow from $a_0$ to $a$, then
$h=\alpha^{-1}\circ\alpha'\in\Gamma(a_0)$.  Equation
\eqref{eq:isotropy-equivariant-nonlinearity} shows that the two expressions in
\eqref{eq:transported-nonlinearity} coincide.  Equivariance under an arbitrary
arrow follows by composing the transporting arrows.  Uniqueness and the
converse are immediate.
\end{proof}

Thus nonlinear design reduces stratum by stratum to the finite or compact
isotropy groups already used to classify the linear kernels.  The same
nonlinearity, expressed in transported gauges, is shared at all points of an
orbit.

\subsection{Practical nonlinearities by representation type}
\label{ssec:practical-nonlinearities}

The following constructions are sufficient for the feature types used in the
discrete architecture.

\paragraph{Trivial one-dimensional types.}
If the isotropy acts trivially, any scalar activation is equivariant.  ReLU,
GELU, sigmoid, and standard smooth activations are therefore admissible on
$A_1$ bulk channels and on even edge or corner channels.

\paragraph{One-dimensional sign types.}
If the group acts through a nontrivial character $\chi\in\{\pm1\}$, a scalar
activation $\varphi$ must satisfy
\begin{equation}
  \varphi(-t)=-\varphi(t).
  \label{eq:odd-activation}
\end{equation}
Hence odd activations such as $\tanh$, odd polynomials, or
$t\mapsto t\,g(t^2)$ are admissible.  An ordinary ReLU is not equivariant on a
sign channel.  This applies to the one-dimensional $D_4$ types
$A_2,B_1,B_2$ and to the odd summands of the edge and corner $\Z_2$
representations.

\paragraph{A single orthogonal vector type.}
Let an irreducible fibre $W$ carry an orthogonal representation.  Every map
of the form
\begin{equation}
  \sigma(v)=a(\|v\|^2)v
  \label{eq:norm-nonlinearity}
\end{equation}
with scalar function $a$ is equivariant.  In particular, for the standard
$D_4$ type $E\cong\R^2$, \eqref{eq:norm-nonlinearity} gives a radial
nonlinearity.  Componentwise ReLU on the two coordinates of $E$ is generally
not equivariant because a rotation or reflection mixes those coordinates.

\paragraph{Multiplicity spaces and invariant gates.}
Suppose an isotypic component is
\begin{equation}
  V=M\otimes W,
  \qquad R(h)=I_M\otimes\rho_W(h),
  \label{eq:isotypic-nonlinearity-space}
\end{equation}
with $m=\dim M$ copies of an orthogonal irreducible $W$.  Write
$v=(v_1,\ldots,v_m)$ with $v_i\in W$ and form the invariant Gram matrix
\begin{equation}
  G(v)_{ij}=\langle v_i,v_j\rangle.
  \label{eq:invariant-Gram-matrix}
\end{equation}
For any matrix-valued function $A$ of $G(v)$, the map
\begin{equation}
  \sigma(v)_i=\sum_{j=1}^m A(G(v))_{ij}v_j
  \label{eq:Gram-gated-nonlinearity}
\end{equation}
is equivariant.  It permits nonlinear mixing between copies of the same type
without selecting a preferred orientation in $W$.  A simpler gated
nonlinearity uses an invariant scalar channel to multiply a nontrivial
feature.

\paragraph{General $\Z_2$ fibres.}
For
\(
  V=V_+\oplus V_-
\)
with the nontrivial element acting by $(v_+,v_-)\mapsto(v_+,-v_-)$, a map
$\sigma=(\sigma_+,\sigma_-)$ is equivariant exactly when
\begin{equation}
\begin{split}
  \sigma_+(v_+,-v_-)&=\sigma_+(v_+,v_-),\\
  \sigma_-(v_+,-v_-)&=-\sigma_-(v_+,v_-).
\end{split}
  \label{eq:Z2-nonlinearity-parity}
\end{equation}
This parity rule allows even features to depend on invariants of the odd
features and odd features to be modulated by even gates.

\paragraph{Tensor-product nonlinearities.}
More general couplings between inequivalent irreducible types can be produced
by tensor products followed by equivariant projections onto irreducible
summands.  Such Clebsch--Gordan constructions enlarge expressivity, but they
also enlarge the feature type and parameter count.  They are not required by
the reference implementation, which uses norm and gated nonlinearities.

\subsection{Equivariant normalization}
\label{ssec:equivariant-normalization}

A normalization must commute with the isotropy action.  For an isotypic
decomposition
\begin{equation}
  V=\bigoplus_{\lambda}M_\lambda\otimes W_\lambda,
  \label{eq:normalization-isotypic-decomposition}
\end{equation}
an equivariant fixed linear rescaling has the form
\begin{equation}
  N=\bigoplus_\lambda A_\lambda\otimes I_{W_\lambda},
  \label{eq:equivariant-linear-normalization}
\end{equation}
with arbitrary maps $A_\lambda$ on multiplicity spaces.  Centering is allowed
only in invariant components, by the bias criterion
\eqref{eq:fixed-bias-subspace}.

A data-dependent pointwise normalization may use only isotropy invariants.
For one orthogonal irrep, a typical choice is
\begin{equation}
  \operatorname{Norm}(v)
  =\frac{v}{\sqrt{\|v\|^2+\epsilon}},
  \label{eq:pointwise-norm-normalization}
\end{equation}
possibly followed by a learned scalar gain.  For repeated irreps, the Gram
matrix \eqref{eq:invariant-Gram-matrix} can be used to normalize or whiten the
multiplicity channels while preserving the representation factor.

\begin{remark}[Spatial and batch statistics]
Statistics aggregated over spatial points are nonlocal.  They preserve a
global symmetry when the aggregation domain is globally preserved and the
same rule is used on every fibre of an orbit.  For proper local bisections,
however, global batch or instance statistics need not satisfy the filtered
equivariance identity and may have propagation comparable to the entire
domain.  Pointwise norm normalization, or normalization over explicitly
transported local windows, is the safe default when an end-to-end partial
symmetry certificate is required.
\end{remark}

Standard channel-wise batch normalization is also unsafe when a representation
mixes coordinates: assigning different gains or offsets to the coordinates of
one irreducible block generally breaks equivariance.  Gains should instead be
tied according to \eqref{eq:equivariant-linear-normalization}, and offsets
restricted to invariant subspaces.

\subsection{Residual, parallel, and type-changing blocks}
\label{ssec:architectural-residual-blocks}

A basic nonlinear layer is
\begin{equation}
  h^{(\ell+1)}
  =\Sigma^{(\ell)}
   \Bigl(N^{(\ell)}
   \bigl(\Phi_{\theta^{(\ell)}}h^{(\ell)}+c^{(\ell)}\bigr)\Bigr),
  \label{eq:nonlinear-GEQ-layer}
\end{equation}
where $N^{(\ell)}$ and $\Sigma^{(\ell)}$ are pointwise equivariant.  Its
spatial propagation is the propagation of $\Phi_{\theta^{(\ell)}}$.

If the input and output types agree, a residual block may be written
\begin{equation}
  \mathcal R(h)=h+F(h).
  \label{eq:GEQ-residual-block}
\end{equation}
If the types differ, the identity is replaced by a pointwise equivariant
projection
\begin{equation}
  P_s\in\Hom_{H_s}(V_s^{\mathrm{in}},V_s^{\mathrm{out}})
  \label{eq:residual-type-projection}
\end{equation}
on each stratum.  Proposition~\ref{prop:filtered-algebraic-stability} implies
that skip connections do not increase the certified erosion radius beyond
that of the nontrivial branch.

Parallel branches are combined by direct sum, concatenation of multiplicity
spaces, or an equivariant pointwise mixing layer.  The output type must record
the resulting multiplicities.  In contrast, arbitrary concatenation followed
by an unconstrained $1\times1$ matrix can mix inequivalent irreducible types
and break equivariance.

\begin{remark}[Downsampling and multiscale architectures]
Downsampling changes the object space and is not automatically covered by a
pointwise intertwiner.  A multiscale model requires a morphism between the
fine and coarse symmetry data, together with compatible lifting and
restriction maps.  On rectangular grids one may use subgroup-compatible
sampling schemes, but their analysis is separate from the fixed-grid
architecture developed here.  The reference implementation therefore keeps
the spatial grid fixed.
\end{remark}

\section{Discrete implementation and computational structure}
\label{sec:implementation}

We now describe the finite realization used by the reference code.  The goal
is not to assemble a dense matrix on the full stratified feature space, but to
materialize a small set of constrained kernels and apply them through standard
convolution and sparse gather/scatter primitives.

\subsection{Stratified tensor layout}
\label{ssec:stratified-tensor-layout}

For a batch of size $B$, a feature type
$\tau=(\rho,\varepsilon,\delta)$ is stored as
\begin{equation}
\begin{aligned}
  h_{\mathrm b}&\in\R^{B\times d_\rho\times H\times W},\\
  h_{\mathrm e}&\in\R^{B\times d_\varepsilon\times N_{\mathrm e}},\\
  h_{\mathrm c}&\in\R^{B\times d_\delta\times4},
\end{aligned}
  \label{eq:discrete-feature-layout}
\end{equation}
where the bulk array is masked outside the interior and
$N_{\mathrm e}=2(W+H)-8$ is the number of non-corner boundary pixels.  Edge
sites are stored in a fixed oriented boundary order, and the four corner sites
are stored counterclockwise.  The gauges of
Section~\ref{ssec:D4-branching-gauges} identify every fibre with its reference
bulk, edge, or corner representation space.

This layout preserves the geometric decomposition while remaining compatible
with dense tensor libraries.  It also makes it impossible to confuse a corner
pixel with an ordinary edge pixel merely because both occupy the boundary of
the image array.

\subsection{Offline construction of kernel bases}
\label{ssec:offline-kernel-bases}

Fix one geometric block family $q$.  Let $S_q$ be its finite list of reference
kernel slots and let $d_q^{\mathrm{in}},d_q^{\mathrm{out}}$ be its fibre
dimensions.  Vectorize all slot matrices into
\begin{equation}
  k_q\in\R^{m_q},
  \qquad
  m_q=|S_q|d_q^{\mathrm{out}}d_q^{\mathrm{in}}.
  \label{eq:block-slot-vector}
\end{equation}
Every generator of the relevant pair-orbit transport produces a homogeneous
linear equation in $k_q$.  Stacking them gives
\begin{equation}
  C_q k_q=0.
  \label{eq:block-constraint-system}
\end{equation}
Choose a matrix $Q_q$ whose columns form a basis of $\ker C_q$.  The learnable
kernel is then
\begin{equation}
  k_q=Q_q\theta_q,
  \qquad
  \theta_q\in\R^{n_q},
  \qquad
  n_q=\dim\ker C_q.
  \label{eq:nullspace-kernel-parameterization}
\end{equation}

\begin{proposition}[Constraint-basis implementation]
\label{prop:constraint-basis-implementation}
For every parameter vector $\theta_q$, the kernel
\eqref{eq:nullspace-kernel-parameterization} satisfies all transport
constraints of block $q$.  Conversely, every admissible kernel in that block
has a unique coefficient vector after a basis $Q_q$ has been fixed.  Taking
the direct sum over the ten block families implements the complete layer
space of Proposition~\ref{prop:finite-completeness}.
\end{proposition}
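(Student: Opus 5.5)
The plan is to reduce the statement to two linear-algebra facts about the constraint matrix $C_q$, and then to invoke Proposition~\ref{prop:finite-completeness} for the direct-sum claim. Everything happens in the finite setting, so no measurability or approximation questions arise.

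The first step is to make precise what $C_q$ encodes. By construction, each row block of $C_q$ is one transport identity \eqref{eq:kernel_constraint}, written for a generator of the pair-orbit transport of block $q$ and restricted to the slots $S_q$. Concretely, a row block has the form $k(\tau_b y,\tau_b x)-R^{\mathrm{out}}(b(y))\,k(y,x)\,R^{\mathrm{in}}(b(x))^{-1}=0$, with the fibre actions expressed through the gauge cocycle \eqref{eq:gauge-cocycle}. The stabilizer identities of Proposition~\ref{prop:pair-orbits}(i) are included as the case $\tau_b(y)=y$, $\tau_b(x)=x$.

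The key lemma is the following: a vector $k_q$ satisfies every transport constraint of block $q$, for all $b\in\mathcal B^{\mathrm{rig}}$, if and only if $C_qk_q=0$.
\begin{itemize}
\item One direction is tautological.
\item For the converse, I would argue that the constraint for an arbitrary bisection is implied by the constraints for the generators. Every $b$ acting on the finite pair set factors, pair by pair, as a composition of generators. If the identity holds for $b_1$ and $b_2$, it holds for $b_2\circ b_1$. This follows from the representation law \eqref{eq:groupoid-representation-law} and the bisection composition \eqref{eq:bisection-composition}: substituting one identity into the other composes the $R$-factors.
\item Inverses are handled the same way, using $R(\alpha^{-1})=R(\alpha)^{-1}$.
\end{itemize}
So the solution set of $C_qk_q=0$ is exactly the admissible kernel space of block $q$.

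Given the lemma, the linear algebra is routine. If the columns of $Q_q$ form a basis of $\ker C_q$, then $k_q=Q_q\theta_q$ lies in $\ker C_q$ for every $\theta_q$, which gives the first claim. Every admissible $k_q$ lies in $\ker C_q$, and coordinates with respect to a basis are unique, because $Q_q$ has full column rank; this gives the converse. Consequently $n_q=\dim\ker C_q$ equals the dimension of the block's share of \eqref{eq:finite-orbit-direct-sum}, and therefore matches the corresponding formula in Theorem~\ref{thm:discrete-parameter-counts}.

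For the last sentence, I use the fact that tangent-cone arrows preserve strata. Hence the pair set $\mathcal P_{r_0}$ splits into ten invariant families: the nine source--target blocks, with the edge--edge block split into same-edge and across-corner pairs. Every pair orbit lies in exactly one family, and the constraints never couple slots from different families. The complete layer space of Proposition~\ref{prop:finite-completeness} is therefore the direct sum of the ten block solution spaces, and the map $(\theta_q)_q\mapsto\bigoplus_q Q_q\theta_q$ is an isomorphism onto it.

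The main obstacle is the converse half of the key lemma, namely that the chosen generator list suffices. If the reference code only imposes constraints for generating bisections, I need every pair in a block to be reachable from its representative by generator compositions that stay inside the support set. I would check this family by family, using the explicit orbit representatives listed in the proof of Theorem~\ref{thm:discrete-parameter-counts}. If a gap appears, the fallback is to enlarge $C_q$ to include all pairwise identities within each orbit, which does not change $\ker C_q$.
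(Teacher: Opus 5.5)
Your proposal is correct and follows the paper's proof: $C_qQ_q=0$ gives admissibility for every $\theta_q$, uniqueness of coordinates in a basis of $\ker C_q$ gives the converse, and the ten block nullspaces are then summed; you also justify a point the paper builds into its definition of $C_q$, namely that the stacked generator equations already imply the transport identity for every composite and inverse bisection. The one real difference is the last step: the paper gets completeness by matching the block nullities with the closed formulas of Theorem~\ref{thm:discrete-parameter-counts}, whose sum is the dimension of the complete layer space, whereas you derive the direct sum structurally from the stratum-preserving splitting of $\mathcal P_{r_0}$ into ten invariant pair families, so that agreement with those formulas becomes a consequence rather than an input.
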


\begin{proof}
The first statement follows from $C_qQ_q=0$.  The second is the defining
property of a basis of $\ker C_q$.  Completeness follows by summing the block
nullspaces, whose dimensions are the formulas of
Theorem~\ref{thm:discrete-parameter-counts}.
\end{proof}

The basis matrices $Q_q$, slot lists, tap indices, and gauge cocycles depend
only on the geometry, feature types, and support radius.  They are computed
once and stored as non-trainable buffers.  Only the vectors $\theta_q$ are
optimized.

\begin{remark}[Numerical and exact bases]
The current code obtains $Q_q$ by singular-value decomposition with a fixed
rank tolerance.  The equivariance equations themselves are finite algebraic
identities; for the real $D_4$ and $\Z_2$ representations used here their
coefficients can be chosen in $\{0,\pm1\}$, so exact rational nullspaces are
also possible.  Floating-point bases are adequate provided that
$\|C_qQ_q\|$ and the resulting transport residuals are checked.
\end{remark}

\subsection{From reference slots to grid taps}
\label{ssec:reference-slots-grid-taps}

A realized tap $t$ of block $q$ consists of a target site $y_t$, a source site
$x_t$, a reference slot $s(t)\in S_q$, and gauge matrices
$M_t^{\mathrm{out}},M_t^{\mathrm{in}}$.  Its contribution is
\begin{equation}
  M_t^{\mathrm{out}}K_q\bigl(s(t)\bigr)
  M_t^{\mathrm{in}}h(x_t).
  \label{eq:gauged-tap-contribution}
\end{equation}
The full block action is therefore
\begin{equation}
  (\Phi_qh)(y)
  =\sum_{t:y_t=y}
   M_t^{\mathrm{out}}K_q\bigl(s(t)\bigr)
   M_t^{\mathrm{in}}h(x_t).
  \label{eq:sparse-block-action}
\end{equation}
All geometric and sign bookkeeping is contained in the precomputed tap list
and cocycles; the trainable code only expands $Q_q\theta_q$ and evaluates
\eqref{eq:sparse-block-action}.

The natural computational primitive depends on the block.

\begin{table}[H]
\centering
\small
\begin{tabular}{p{0.31\textwidth}p{0.58\textwidth}}
\hline
Block family & Natural implementation \\
\hline
$\mathrm b\leftarrow\mathrm b$ & Standard $D_4$-steerable two-dimensional convolution on the masked interior array. \\
$\mathrm e\leftarrow\mathrm e$ on one edge & One-dimensional convolution along the oriented edge chains, with the parity relation at opposite offsets. \\
$\mathrm c\leftarrow\mathrm c$ & A small batched pointwise intertwiner shared by the four corners. \\
$\mathrm e\leftrightarrow\mathrm b$ & Indexed gather from one stratum, gauge transform, matrix multiplication, and scatter-add to the other. \\
$\mathrm c\leftrightarrow\mathrm b$ & Corner-sector gather/scatter with diagonal-reflection cocycles. \\
$\mathrm c\leftrightarrow\mathrm e$ & Gather/scatter along the two edge rays adjacent to each corner. \\
$\mathrm e\leftarrow\mathrm e$ across a corner & Sparse coupling of pairs on adjacent edge rays, related by the corner reflection. \\
\hline
\end{tabular}
\caption{Computational realization of the ten geometric block families.  The
nine entries of the stratum matrix yield ten families because the
edge--edge entry splits into same-edge and across-corner components.}
\label{tab:block-computational-primitives}
\end{table}

The block outputs are accumulated in their target stratum.  This additive
assembly is exactly the discrete form of \eqref{eq:stratified-layer}.

\subsection{Complexity and parameter scaling}
\label{ssec:implementation-complexity}

Let $T_q$ denote the number of realized directed taps of block $q$ on the
grid.  A direct evaluation of \eqref{eq:sparse-block-action} has arithmetic
cost
\begin{equation}
  O\left(
    B\sum_q T_q d_q^{\mathrm{in}}d_q^{\mathrm{out}}
  \right),
  \label{eq:sparse-forward-complexity}
\end{equation}
and stores
\(
  \sum_q n_q
\)
trainable parameters.  The basis expansion costs
$O(\sum_q m_qn_q)$ when materialized naively; it is independent of the batch
size and can be fused or cached between weight updates.

In the small-filter regime, the leading tap counts scale schematically as
\begin{equation}
\begin{array}{c|c}
\text{family} & \text{number of realized taps}\\
\hline
\mathrm b\leftarrow\mathrm b & O(WHr_0^2)\\
\mathrm e\leftarrow\mathrm e\text{ (same edge)} & O((W+H)r_0)\\
\mathrm b\leftrightarrow\mathrm e & O((W+H)r_0^2)\\
\mathrm b\leftrightarrow\mathrm c & O(r_0^2)\\
\mathrm e\leftrightarrow\mathrm c & O(r_0)\\
\mathrm e\leftarrow\mathrm e\text{ (across corner)} & O(r_0^2).
\end{array}
  \label{eq:tap-scaling}
\end{equation}
The constants include the four edges or corners and the fibre matrix sizes.
The trainable parameter count, by contrast, is independent of $W$ and $H$
for fixed feature types and $r_0$: parameters are shared over pair orbits,
while computational work grows with the number of sites at which the shared
kernels are applied.

The implementation is therefore sparse in the object-space variables.  It
never constructs the full matrix of size
\(
  \dim\mathcal H^{\mathrm{out}}
  \times\dim\mathcal H^{\mathrm{in}}
\),
except in small reference tests.

\subsection{Gauge covariance of the implementation}
\label{ssec:implementation-gauge-covariance}

The numerical arrays depend on the chosen edge and corner gauges, whereas the
geometric layer does not.

\begin{proposition}[Change of gauge]
\label{prop:implementation-change-gauge}
Let $S^{\mathrm{in}}$ and $S^{\mathrm{out}}$ be the block-diagonal changes of
fibre coordinates induced by two coherent gauge choices.  If $\Phi$ is the
matrix assembled in the first gauges, the matrix assembled in the second is
\begin{equation}
  \widetilde\Phi
  =S^{\mathrm{out}}\Phi(S^{\mathrm{in}})^{-1}.
  \label{eq:gauge-conjugation-layer}
\end{equation}
Consequently,
\(
  \widetilde h^{\mathrm{out}}=S^{\mathrm{out}}h^{\mathrm{out}}
\)
whenever
\(
  \widetilde h^{\mathrm{in}}=S^{\mathrm{in}}h^{\mathrm{in}}
\),
and the represented geometric operator is gauge independent.
\end{proposition}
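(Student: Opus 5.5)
The plan is to treat a change of gauge as a change of basis in every fibre, and then to show that all quantities stored by the implementation transform by the expected conjugation. Write the two families of gauges as $A_x$ and $\widetilde A_x$ for edge and corner sites; in the bulk the Cartesian frame is used, so there the change is trivial. The fibrewise change of coordinates is then $S_x=\pi(\widetilde A_x^{-1}A_x)$. Here $\pi$ is the reference isotropy representation, either $\varepsilon$ or $\delta$. Coherence of the two gauges forces $\widetilde A_x^{-1}A_x$ to lie in the reference isotropy subgroup $\Z_2^{\mathrm{ax}}$ or $\Z_2^{\mathrm{diag}}$, so $S_x$ is a well-defined invertible (indeed orthogonal) matrix on the reference fibre. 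Collecting the $S_x$ over all sites gives the block-diagonal operators $S^{\mathrm{in}}$ and $S^{\mathrm{out}}$.

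The first step is the cocycle identity. Using \eqref{eq:gauge-cocycle}, the new cocycle of an arrow $((t,A),x)\colon x\to y$ is
\[
\widetilde\eta=\widetilde A_y^{-1}A\widetilde A_x=(\widetilde A_y^{-1}A_y)\,\eta\,(A_x^{-1}\widetilde A_x).
\]
Applying $\pi$ gives $\pi(\widetilde\eta)=S_y\pi(\eta)S_x^{-1}$. This says that the fibre representations in the second gauge are exactly the conjugates of those in the first.

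The second step is to apply this to the kernel. Take a kernel tap $K(y,x)$ satisfying the transport constraints \eqref{eq:kernel_constraint} in the first gauge, and set $\widetilde K(y,x)=S^{\mathrm{out}}_yK(y,x)(S^{\mathrm{in}}_x)^{-1}$. By the cocycle identity, $\widetilde K$ satisfies the same constraints written in the second gauge. The map $K\mapsto\widetilde K$ is therefore a linear isomorphism between the two nullspaces, with inverse given by conjugation in the opposite direction. The layer assembled by \eqref{eq:sparse-block-action} from a given geometric kernel is thus $\widetilde\Phi=S^{\mathrm{out}}\Phi(S^{\mathrm{in}})^{-1}$. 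Substituting $\widetilde h^{\mathrm{in}}=S^{\mathrm{in}}h^{\mathrm{in}}$ then gives $\widetilde h^{\mathrm{out}}=S^{\mathrm{out}}\Phi h^{\mathrm{in}}=S^{\mathrm{out}}h^{\mathrm{out}}$. Gauge independence follows because both arrays represent the same abstract section and the same abstract operator in two coordinate systems.

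The main obstacle is bookkeeping rather than mathematics. The gauge matrices $M_t^{\mathrm{out}}$ and $M_t^{\mathrm{in}}$ in \eqref{eq:gauged-tap-contribution} are built from cocycles relating the reference slot to the realized tap. One has to check that under the change of gauge they pick up exactly the factors $S_{y_t}$ on the left and $S_{x_t}^{-1}$ on the right, and that the reference slot kernel $K_q(s(t))$ is unchanged. If the reference slot values do change, the nullspace basis $Q_q$ also transforms, by a fixed invertible map, so the parameter dimension of Theorem~\ref{thm:discrete-parameter-counts} is preserved in either case.
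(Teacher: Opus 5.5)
Your proposal is correct and takes essentially the paper's route. The paper's proof simply observes that each source coordinate change contributes $(S_x^{\mathrm{in}})^{-1}$ on the right of a kernel tap and each target change contributes $S_y^{\mathrm{out}}$ on the left, and that these assemble into the block-diagonal conjugation; your cocycle identity $\pi(\widetilde\eta)=S_y\pi(\eta)S_x^{-1}$ adds a check the paper omits, namely that the conjugated kernel satisfies the transport constraints in the new gauge, and the tap-level bookkeeping you defer in your last paragraph is exactly what the paper asserts, although your kernel-level argument already suffices for the statement.
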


\begin{proof}
Every source coordinate change contributes $(S_x^{\mathrm{in}})^{-1}$ on the
right of a kernel tap, and every target coordinate change contributes
$S_y^{\mathrm{out}}$ on the left.  These factors assemble into the global
block-diagonal matrices in \eqref{eq:gauge-conjugation-layer}.
\end{proof}

Gauge covariance is useful as an implementation test: changing all gauges and
conjugating the inputs must conjugate the outputs without changing any
coordinate-free prediction.

\subsection{Differentiable layer assembly and reference workflow}
\label{ssec:differentiable-layer-workflow}

A practical forward pass follows these steps.

\begin{enumerate}[label=\textup{\arabic*.}]
\item For every block family, expand its trainable vector through
      $k_q=Q_q\theta_q$ and reshape the result into reference slot matrices.
\item Materialize the bulk kernel in a standard convolution tensor and apply
      it to the masked interior array.
\item For every remaining block, gather the indexed source fibres, apply the
      precomputed input gauges, slot matrices, and output gauges, and
      scatter-add the result at the target indices.
\item Sum all contributions in each target stratum, add the admissible bias,
      and apply the chosen pointwise normalization and nonlinearity.
\end{enumerate}

All operations are differentiable with respect to $\theta_q$.  The basis and
geometric tensors are fixed buffers, so automatic differentiation returns
gradients only for the admissible coefficients.  A dense reference assembly
is retained for small-grid unit tests but is not required for training.

The architecture supports several controlled variants without altering the
geometric infrastructure:

\begin{itemize}
\item the \emph{complete} model includes all ten constrained block families;
\item a \emph{no-cross} ablation keeps only the bulk--bulk, same-edge
      edge--edge, and corner--corner blocks;
\item an \emph{unconstrained-connectivity} control keeps the same tap lists
      but replaces each nullspace basis by the identity, so every slot matrix
      is learned independently;
\item a free-plane steerable control keeps only a homogeneous $D_4$ kernel on
      the image grid and imposes a chosen padding convention.
\end{itemize}

The first two are equivariant, but only the first is complete.  The third has
matched geometric connectivity but not matched symmetry.  The fourth is a
more strongly tied subfamily whose boundary behavior is inherited from the
extension rule rather than from independent edge and corner fibres.

\subsection{Correctness checks and present scope}
\label{ssec:implementation-correctness-scope}

Before training, the implementation should verify four independent facts:

\begin{enumerate}[label=\textup{(\roman*)}]
\item the numerical nullity of every $C_q$ agrees with the closed parameter
      count of Theorem~\ref{thm:discrete-parameter-counts};
\item $\|C_qQ_q\|$ is below the selected tolerance;
\item the sparse forward pass agrees with a dense matrix assembled from the
      same coefficients on small grids;
\item single-layer transport residuals vanish for a non-vacuous collection,
      or preferably an exhaustive finite list, of rigid bisections.
\end{enumerate}

End-to-end certificates for stacked networks must use the balanced erosion
radius of Theorem~\ref{thm:n-layer-filtered-equivariance}; empty eroded domains
are reported as not applicable.  These checks are developed experimentally in
Section~\ref{sec:exact-certificates}.

The present reference implementation is deliberately limited to a fixed
rectangular pixel grid, the small-filter regime, real $D_4$ and $\Z_2$
representations, and fixed-resolution layers.  These restrictions concern the
implementation, not the abstract kernel theorem.  Curved boundaries require
the jet refinement of Remark~\ref{rem:curved-boundary-jets}; other polygonal
domains require new orbit and tap enumerations; and multiscale networks require
compatible maps between different object spaces.

Part~\ref{part:architecture} has now specified the complete trainable layer
space, admissible pointwise operations, residual constructions, and a sparse
finite implementation.  Part~\ref{part:experiments} will test separately the
algebraic correctness of this implementation, recovery of operators known to
belong to the complete equivariant class, and the usefulness of the resulting
boundary-aware inductive bias on operator-learning problems.

\part{Experiments}
\label{part:experiments}

The experiments have three distinct purposes.  First, we verify the
finite representation-theoretic construction itself: the dimensions of the
constraint nullspaces, the agreement of the dense and sparse realizations,
and the transport identities of Part~\ref{part:filtered-equivariance}.
Second, we test recovery of target operators that are known by construction to
belong to the complete groupoid-equivariant layer space.  This separates
completeness and sample complexity from questions of model misspecification.
Third, we examine a boundary-value problem for which the proposed
architecture is a natural boundary-aware inductive bias.  In that last case we
first test the symmetry of the exact discrete solution operator; this prevents
an empirical advantage from being incorrectly interpreted as exact
local-bisection equivariance.

All algebraic certificates and synthetic recovery experiments were performed
in double precision.  The finite constraint matrices, geometric tap lists,
and transport operators were generated by the reference implementation
specified in Section~\ref{sec:implementation}.  Numerical values and plotting
scripts are available at
\href{https://github.com/mariajimvaz-afk/groupoid-equivariant-cnn}
{the associated GitHub repository}.

\section{Exact algebraic and equivariance certificates}
\label{sec:experiments}
\label{sec:exact-certificates}

\subsection{Constraint dimensions and implementation agreement}
\label{ssec:constraint-certificates}

We first repeat the worked representation-theoretic example of
Theorem~\ref{thm:discrete-parameter-counts}.  On an $11\times9$ grid with
$r_0=2$, take
\[
 \rho^{\mathrm{in}}=A_1\oplus E,
 \qquad
 \rho^{\mathrm{out}}=A_1\oplus B_1\oplus E,
\]
\[
 \varepsilon^{\mathrm{in}}=\mathbf 1\oplus\sgn,
 \qquad
 \varepsilon^{\mathrm{out}}=2\mathbf 1\oplus\sgn,
\]
and
\[
 \delta^{\mathrm{in}}=\mathbf 1\oplus\sgn,
 \qquad
 \delta^{\mathrm{out}}=\mathbf 1\oplus2\sgn.
\]
For every block family, the numerical nullity of its finite constraint matrix
agrees with the closed representation-theoretic formula; see
Table~\ref{tab:experimental-parameter-counts}.  The resulting complete layer
has $225$ trainable coefficients.

\begin{table}[H]
\centering
\small
\begin{tabular}{lrr}
\hline
Block family & Formula & Numerical nullity\\
\hline
bulk $\leftarrow$ bulk & 40 & 40\\
edge $\leftarrow$ edge, same edge & 15 & 15\\
corner $\leftarrow$ corner & 3 & 3\\
edge $\leftarrow$ bulk & 46 & 46\\
bulk $\leftarrow$ edge & 40 & 40\\
corner $\leftarrow$ bulk & 17 & 17\\
bulk $\leftarrow$ corner & 16 & 16\\
corner $\leftarrow$ edge & 12 & 12\\
edge $\leftarrow$ corner & 12 & 12\\
edge $\leftarrow$ edge, across a corner & 24 & 24\\
\hline
Total & 225 & 225\\
\hline
\end{tabular}
\caption{Closed parameter counts and numerically computed nullities for the
worked $D_4$ example.}
\label{tab:experimental-parameter-counts}
\end{table}

To test the realization of these bases, we sampled random admissible
coefficients and random input fields.  The maximal absolute difference between
the dense matrix implementation and the sparse gather--transform--scatter
implementation was
\begin{equation}
  5.33\times10^{-15}.
  \label{eq:sparse-dense-certificate}
\end{equation}
Thus the sparse layer used for training realizes, to floating-point accuracy,
the same linear operator as the finite constraint construction.

\subsection{Exhaustive rigid-motion tests on a finite grid}
\label{ssec:exhaustive-transport-tests}

We next test the transport identities independently of any training
procedure.  To make exhaustive enumeration inexpensive, we use scalar feature
types on all three strata of a $7\times6$ grid and radius $r_0=1$.  We
enumerate every pair $(A,t)$ with $A\in D_4$ and integer translation $t$ for
which the maximal admissible rigid bisection is nonempty.  This gives $310$
nonempty partial rigid motions.

For a random complete layer $\Phi_1$, the maximal absolute residual of the
single-layer identity
\[
 P_{U'}\Phi_1\Lambda(b)
 -\Lambda(b)P_U\Phi_1P_U
\]
over all $310$ motions is $2.22\times10^{-15}$.  We then compose two
independent random layers of radius one.  Among the enumerated motions, $82$
have a nonempty one-pixel eroded domain.  On those domains the worst residual
of the two-layer composite is $3.11\times10^{-15}$.  On the maximal,
non-eroded domains, by contrast, the worst residual is $1.92$, and $276$ of
the $310$ motions violate the identity above tolerance $10^{-8}$.  Finally,
on a square grid the eight global $D_4$ motions remain exact for the two-layer
composite, with worst residual $1.78\times10^{-15}$.

\begin{table}[H]
\centering
\small
\begin{tabular}{lr}
\hline
Certificate & Worst absolute residual\\
\hline
Single layer, all $310$ nonempty rigid motions & $2.22\times10^{-15}$\\
Two layers, $82$ nonempty eroded domains & $3.11\times10^{-15}$\\
Two layers, maximal domains & $1.92$\\
Two layers, global $D_4$ on a square & $1.78\times10^{-15}$\\
\hline
\end{tabular}
\caption{Exact transport certificates.  The order-one residual on maximal
partial domains confirms the necessity of erosion under composition; global
symmetries do not erode.}
\label{tab:transport-certificates}
\end{table}

These tests verify separately the three claims proved earlier: individual
layers satisfy the bisection-equivariant kernel theorem; composition is exact
on the eroded domains predicted by Theorem~\ref{thm:n-layer-filtered-equivariance};
and global equivariance is preserved at arbitrary depth.  They also show why
an empty eroded domain must be recorded as not applicable rather than as a
zero-residual certificate.

\section{Recovery of exactly equivariant target operators}
\label{sec:synthetic-recovery}

The preceding certificates show that the implementation satisfies the desired
identities, but they do not test whether the parameterization is complete or
whether the symmetry restriction reduces the amount of data needed to identify
an operator.  We therefore construct targets that lie in the complete
finite-dimensional layer space by design.


We use a $9\times8$ grid, radius $r_0=2$, and scalar feature types on the bulk,
edge, and corner strata.  In this case the four model spaces have dimensions
\[
\begin{array}{c|cccc}
\text{model}
 & \mathrm{GEQ}
 & \mathrm{GEQ\text{-}noX}
 & \mathrm{GEQ\text{-}noCorner}
 & \mathrm{UNC}
\\ \hline
\text{parameters} & 36 & 10 & 21 & 75.
\end{array}
\]
Here GEQ is the complete equivariant layer; GEQ-noX retains only the three
diagonal stratum blocks; GEQ-noCorner removes every block involving a corner;
and UNC retains the same geometric tap connectivity but learns every reference
slot independently.

For each of $40$ independent trials, we draw a random complete GEQ target
$\Phi_\star$ and normalize its Frobenius norm.  A scalar training observation
is a random bilinear probe
\begin{equation}
 z_i
 =\ell_i^{\mathsf T}\Phi_\star\psi_i,
 \label{eq:random-bilinear-probe}
\end{equation}
where $\psi_i$ is an isotropic Gaussian input field and $\ell_i$ is an
independent isotropic Gaussian output functional.  Each model is fitted by
minimum-norm least squares from $n$ such scalar observations.  We evaluate the
relative squared operator error
\begin{equation}
 \mathcal E(\widehat\Phi)
 =
 \frac{\|\widehat\Phi-\Phi_\star\|_{\mathrm F}^2}
      {\|\Phi_\star\|_{\mathrm F}^2}.
 \label{eq:relative-operator-error}
\end{equation}
For isotropic input and output probes, this quantity is also the relative
expected mean-squared error of \eqref{eq:random-bilinear-probe}.


Figure~\ref{fig:synthetic-geq-recovery} reports the median error and the
interquartile range over the $40$ trials.  Representative median values are
listed in Table~\ref{tab:synthetic-recovery}.

\begin{figure}[H]
\centering

\includegraphics[width=0.72\textwidth]{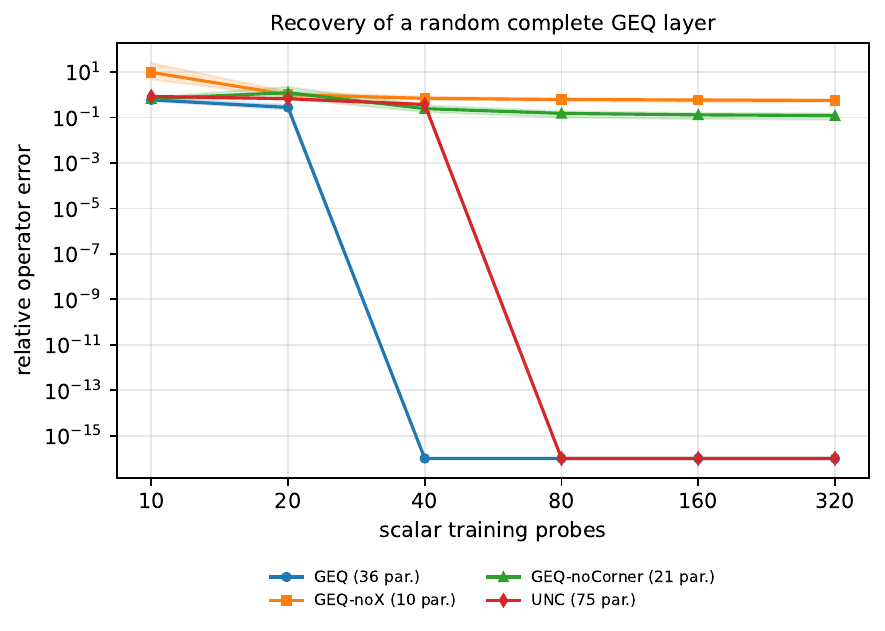}
\caption{Recovery of a random complete groupoid-equivariant layer from scalar
bilinear probes.  Curves show medians over $40$ independent targets.}
\label{fig:synthetic-geq-recovery}
\end{figure}

\begin{table}[H]
\centering
\small
\begin{tabular}{lrrrr}
\hline
Model & $n=20$ & $n=40$ & $n=80$ & $n=320$\\
\hline
GEQ & $2.76\times10^{-1}$ & $5.11\times10^{-30}$
    & $3.02\times10^{-30}$ & $2.54\times10^{-30}$\\
GEQ-noX & $1.02$ & $6.93\times10^{-1}$
    & $6.17\times10^{-1}$ & $5.61\times10^{-1}$\\
GEQ-noCorner & $1.23$ & $2.44\times10^{-1}$
    & $1.51\times10^{-1}$ & $1.21\times10^{-1}$\\
UNC & $6.70\times10^{-1}$ & $3.71\times10^{-1}$
    & $1.58\times10^{-29}$ & $4.14\times10^{-30}$\\
\hline
\end{tabular}
\caption{Median relative operator error in the exact-recovery experiment.}
\label{tab:synthetic-recovery}
\end{table}

The complete GEQ model reaches numerical precision once the number of generic
scalar probes exceeds its $36$-dimensional parameter space.  The unconstrained
model also contains the target, but requires approximately twice as many
probes, consistently with its $75$ free slot parameters.  The two restricted
equivariant models do not contain a generic complete target: their errors
approach nonzero approximation floors.  In particular, removing the corner
sector and its incident blocks leaves a median relative error of about $0.12$
even at $320$ probes.  Thus the corner and cross-stratum components are not
merely decorative additions to an otherwise homogeneous convolution; they
span directions of the complete equivariant operator space that cannot be
recovered by the corresponding ablations.

This experiment is deliberately linear and optimization-free.  Its purpose is
not to model a natural-data task, but to isolate two structural consequences
of the theory: completeness of the ten-block parameterization and the reduction
of identification complexity obtained by solving the equivariance constraints
before learning.


\section{Boundary-aware operator learning}
\label{sec:boundary-learning}

The certificates of Section \ref{sec:experiments} and the synthetic study of Section \ref{sec:synthetic-recovery}
concern targets inside the exact equivariant class. This section tests
the architecture as an inductive bias \emph{outside} that class, on
discrete Poisson--Dirichlet solution operators. We first verify that
the target operators are not exactly equivariant under proper local
bisections, so that any advantage must be interpreted as approximation
and regularization rather than as membership of the hypothesis class.
We then report a controlled benchmark executed under a pre-specified
protocol: independent training, validation, and test sets;
normalization constants estimated on training data only; per-model
learning rates selected on validation data; early stopping on
validation with the test set evaluated exactly once per run, at the
best-validation checkpoint; and ten independent seeds, each with fresh
data and initialization. Means, standard deviations, medians and
interquartile ranges are reported throughout. Finally, we measure the
transport residuals of the \emph{trained} operators, which yields two
results of independent interest: unconstrained models do not acquire
even approximate partial equivariance from training on this family of
tasks, and a trained nonlinear network furnishes a numerical
counterexample showing that the balanced erosion radius of
Theorem~\ref{thm:n-layer-filtered-equivariance} certifies linear composites only (see the
corrected Corollary~\ref{cor:filtered-network}).

\subsection{Symmetry diagnostic for the Poisson--Dirichlet inverse}
\label{sec:poisson-diagnostic}

Consider the discrete Dirichlet problem
\begin{equation}
  -\Delta_h u = f \ \text{ on the interior pixels},
  \qquad u = g \ \text{ on the boundary},
  \label{eq:dirichlet}
\end{equation}
where $\Delta_h$ is either the five-point Laplace stencil or the
nine-point Mehrstellen stencil
$20u_p - 4\sum_{\mathrm{cross}} u - \sum_{\mathrm{diag}} u = 6 f_p$.
Both stencils are compatible with rigid local motions, but their
inverses on a fixed rectangle depend on the Green operator of the
entire domain. Assembling the exact solution operator
$S_D\colon (f,g)\mapsto u$ on an $8\times 6$ grid and evaluating the
relative transport residual:
\begin{equation}
 \frac{\|P_{U'}S_D\Lambda(b)-\Lambda(b)P_US_DP_U\|_{\mathrm F}}
      {\|\Lambda(b)P_US_DP_U\|_{\mathrm F}}
 \label{eq:poisson-relative-transport-residual}
\end{equation}
confirms the
expectation: the global dihedral symmetries $D_2$ of the rectangle are
preserved to $2.8\times 10^{-16}$, whereas proper partial motions
(partial translations and a partial quarter-turn) produce residuals
between $0.14$ and $0.26$. Consequently the experiments below test the
value of a boundary-aware inductive bias; they are not exact-recovery
experiments, and empirical advantages must not be read as target
equivariance.

The two stencils were chosen as a matched pair with different
information flow from the corner stratum. In the five-point problem no
interior equation reads a corner value, so the target is exactly
corner-blind. In the nine-point problem the interior equation adjacent
to each corner reads the corner Dirichlet value through its diagonal
neighbour, with weight $1/20$; the corner data therefore influence the
target, but weakly. Section~\ref{sec:cornerx} adds a third task in
which the corner influence is total.

\subsection{Benchmark design and executed protocol}
\label{sec:protocol}

\paragraph{Tasks and data.} All experiments use a $16\times 12$ grid.
Inputs are a smooth random interior source $f$ and independent smooth
random Dirichlet data $g$; the target is the exact finite-difference
solution of \eqref{eq:dirichlet} for the corresponding stencil. For
every task and seed we draw a fresh training pool of $400$ fields, a
validation set of $200$, and a test set of $400$. A run trained on $n$
fields normalizes all targets by the standard deviation of its own $n$
training targets; validation and test data are never used for any
estimation other than the final evaluation.

\paragraph{Models.}
All trained networks have depth four and propagation radius $r_0 = 2$
per layer. The stratified models use the hidden type
$\rho^{(\ell)} = A_1\oplus A_2\oplus B_1\oplus B_2\oplus E$,
$\varepsilon^{(\ell)} = 2\cdot\mathbf{1}\oplus\mathrm{sgn}$,
$\delta^{(\ell)} = \mathbf{1}\oplus\mathrm{sgn}$, with scalar types on
input and output.

The main benchmark reported in Table~7 and Figure~3 comprises the
complete model GEQ (1076~parameters); GEQ-noX (all cross-stratum
blocks removed, 332); the connectivity-matched unconstrained control
UNC (3716); the zero-padded free-plane $D_4$-steerable network STEER
(283) and its width-matched variant STEER-w (1018); and zero-padded
ordinary CNNs of widths $4$, $6$, $12$ (1100, 2250, 8100 parameters),
where CNN-4 is parameter-matched to GEQ.

The finer block ablations reported in Table~8 are GEQ-noEB
(bulk$\leftrightarrow$edge blocks removed, 624), GEQ-noC (all four
corner cross-blocks $c\!\leftrightarrow\!b$, $c\!\leftrightarrow\!e$
removed, 880), and GEQ-noXC (across-corner edge block removed, 980);
GEQ-noX is included there again as the fully cross-stratum-ablated
reference.

Finally, the nonlinear round on the nine-point task reported in
Section~15.5 and Figure~4 adds GEQ-nl (1088): the same linear layers
interleaved with the pointwise equivariant nonlinearities of
Section~\ref{ssec:practical-nonlinearities} (GELU on invariant
channels, odd $\tanh$ on sign channels, norm gating on $E$) and
admissible biases per \eqref{eq:block-constraint-system}; and
CNN-6-nl (2250) with GELU between convolutions.

\paragraph{Optimization.} Adam with cosine annealing, batch size
$\min(64, n)$; stratified models train $300$ steps, image-format
models $600$ (their per-step cost is far lower and their deep-linear
optimization is slower). Validation
is evaluated every $50$ steps with patience-based early stopping; the
parameters achieving the best validation error are restored and the
test error computed once. Learning rates are selected per model and
task from $\{3\times 10^{-2}, 10^{-2}, 3\times 10^{-3}\}$ on
validation data of a held-out selection seed; the three GEQ block
ablations reuse GEQ's rate. The selections differ across models
(GEQ-noX selects $10^{-2}$ on both tasks, while CNN-12 selects
$10^{-2}$ on the five-point task and $3\times10^{-3}$ on the
nine-point task), confirming that a shared budget distorts comparisons. Training is in single precision; all transport
certificates of Section~\ref{sec:trained-certs} are evaluated in
double precision.

\subsection{Sample efficiency and completeness ablations}
\label{sec:definitive-benchmark}

Table~\ref{tab:definitive} and
Figure~\ref{fig:definitive-curves} report the benchmark.

\begin{table}[t]
\centering\small
\begin{tabular}{lrcccccc}
\hline
 & & \multicolumn{3}{c}{five-point} & \multicolumn{3}{c}{nine-point}\\
Model & Par. & $n{=}25$ & $n{=}100$ & $n{=}400$
             & $n{=}25$ & $n{=}100$ & $n{=}400$\\
\hline
GEQ     & 1076 & $.0027{\pm}.0005$ & $.0024{\pm}.0004$ & $.0025{\pm}.0006$
               & $.0026{\pm}.0005$ & $.0024{\pm}.0004$ & $.0024{\pm}.0004$\\
GEQ-noX &  332 & $.0086{\pm}.0026$ & $.0084{\pm}.0023$ & $.0092{\pm}.0050$
               & $.0090{\pm}.0028$ & $.0084{\pm}.0029$ & $.0089{\pm}.0042$\\
UNC     & 3716 & $.0161{\pm}.0049$ & $.0147{\pm}.0067$ & $.0141{\pm}.0060$
               & $.0225{\pm}.0206$ & $.0163{\pm}.0100$ & $.0152{\pm}.0090$\\
STEER   &  283 & $.0287{\pm}.0411$ & $.0298{\pm}.0455$ & $.0305{\pm}.0492$
               & $.0320{\pm}.0483$ & $.0284{\pm}.0463$ & $.0312{\pm}.0499$\\
STEER-w & 1018 & $.1532{\pm}.1520$ & $.2455{\pm}.4034$ & $.2316{\pm}.3395$
               & $.4232{\pm}.9438$ & $.3260{\pm}.6727$ & $.2779{\pm}.5183$\\
CNN-4   & 1100 & $.0117{\pm}.0033$ & $.0104{\pm}.0034$ & $.0099{\pm}.0038$
               & $.0130{\pm}.0055$ & $.0100{\pm}.0031$ & $.0098{\pm}.0033$\\
CNN-6   & 2250 & $.0213{\pm}.0189$ & $.0161{\pm}.0086$ & $.0156{\pm}.0076$
               & $.0196{\pm}.0077$ & $.0161{\pm}.0082$ & $.0154{\pm}.0085$\\
CNN-12  & 8100 & $.1833{\pm}.1065$ & $.1559{\pm}.0754$ & $.1419{\pm}.0567$
               & $.1957{\pm}.0811$ & $.1503{\pm}.0446$ & $.1460{\pm}.0419$\\
\hline
\end{tabular}
\caption{Relative test MSE (mean $\pm$ standard deviation over ten
seeds) for the two Poisson--Dirichlet solution operators under the
executed protocol. Depth four, $r_0=2$ throughout.}
\label{tab:definitive}
\end{table}
Four conclusions are stable across both tasks. First, the complete
groupoid model attains $0.0024$--$0.0027$ and is essentially flat in
$n$ from 25 to 400 training fields, consistent with a 1076-parameter
hypothesis class that tightly brackets the target; its margin over the
strongest baseline is a factor of about four, and over the remaining
baselines a factor of six or more. Second, the strongest baseline is
the \emph{parameter-matched} ordinary CNN (CNN-4, $0.0098$--$0.0099$): a
small zero-padded convolutional network exploiting absolute position
through padding \cite{KayhanGemert2020,IslamJiaBruce2020} is a genuinely competitive boundary
heuristic. The groupoid model outperforms it fourfold at comparable
parameter count while additionally carrying exact certificates
(Section~\ref{sec:trained-certs}). Third, the unconstrained control UNC, with identical
geometric connectivity and 3.5$\times$ the parameters of GEQ, is a
further $50\%$ worse than CNN-4 and never approaches GEQ; symmetry, not
connectivity, drives the gap. Fourth, the zero-padded steerable model
STEER stagnates at $0.028$--$0.032$ at every training size. Its
width-matched variant STEER-w fails to train under any validated rate
(median error $0.11$--$0.13$); a 2500-step diagnostic run still
plateaus near $0.042$, above the plain STEER floor, so enlarging the
steerable family's types makes its deep-linear optimization harder
rather than its hypothesis class effectively richer. The GEQ--STEER
gap is therefore not a parameter-count artifact.

\begin{figure}[H]
\centering
\includegraphics[width=1\textwidth]{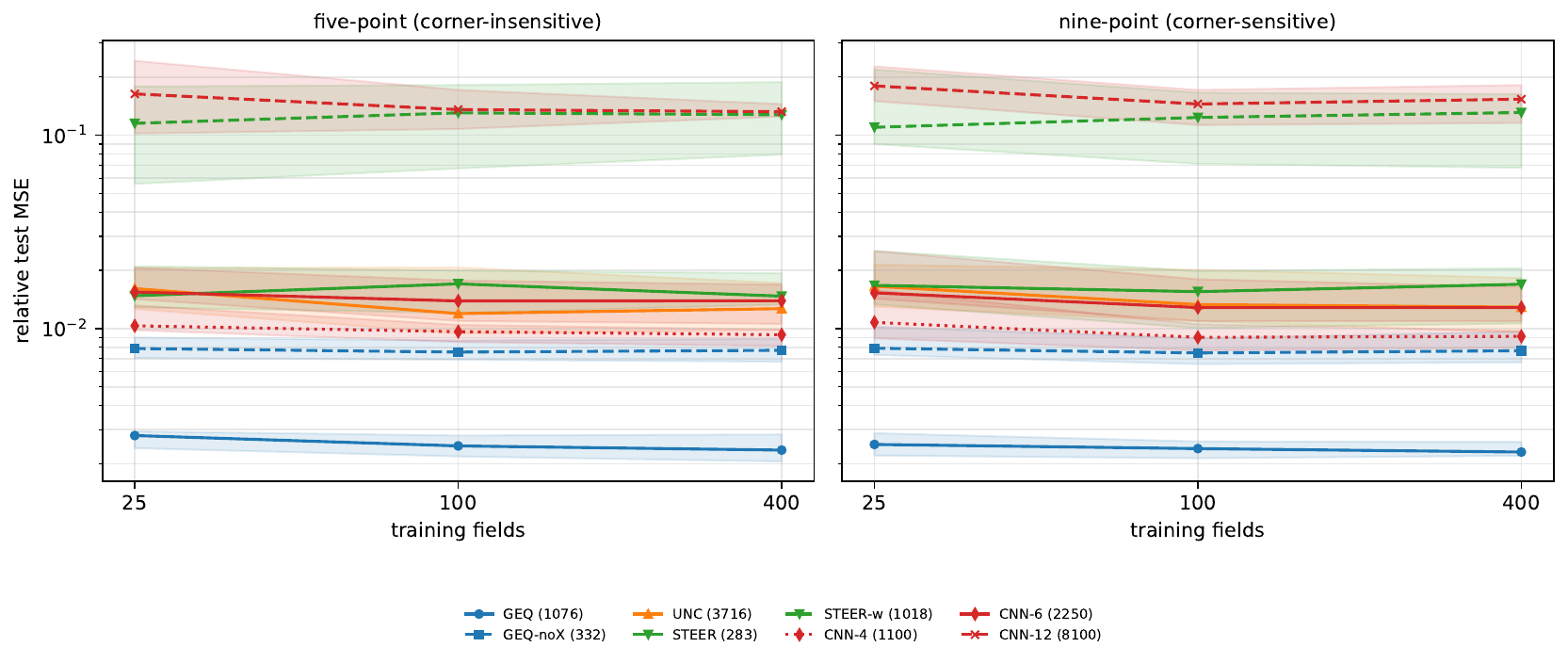}
\caption{Sample-efficiency curves for the Poisson--Dirichlet benchmark.}
\label{fig:definitive-curves}
\end{figure}

\begin{table}[t]
\centering\small
\begin{tabular}{lrcc}
\hline
Model & Par. & five-point, $n{=}400$ & nine-point, $n{=}400$\\
\hline
GEQ       & 1076 & $.0025\pm.0006$ & $.0024\pm.0004$\\
GEQ-noXC  &  980 & $.0023\pm.0004$ & $.0021\pm.0003$\\
GEQ-noC   &  880 & $.0023\pm.0004$ & $.0024\pm.0004$\\
GEQ-noEB  &  624 & $.0052\pm.0016$ & $.0057\pm.0016$\\
GEQ-noX   &  332 & $.0092\pm.0050$ & $.0089\pm.0042$\\
\hline
\end{tabular}
\caption{Per-block ablation at $n=400$ (ten seeds). Removing the
bulk$\leftrightarrow$edge blocks degrades both tasks by a factor
$2.1$--$2.4$; removing the corner-incident or across-corner blocks has
no measurable effect on either PDE task.}
\label{tab:ablation}
\end{table}

The per-block ablation (Table~\ref{tab:ablation}) resolves \emph{which}
cross-stratum channels carry the advantage. On both stencils the
entire effect is carried by the bulk$\leftrightarrow$edge blocks: their
removal alone accounts for most of the gap to the fully diagonal model,
while the corner-incident and across-corner ablations are statistically
indistinguishable from the complete model. For the five-point stencil
this null result is a structural prediction confirmed, the target is
provably corner-blind, and for the nine-point stencil it shows that a
corner coupling of relative weight $1/20$ in a single interior equation
per corner is buried beneath the locality-approximation floor of
$\approx 0.0024$. The empirical case for the corner machinery therefore
cannot be made on these two tasks, and Section~\ref{sec:cornerx}
supplies the task on which it can.

\subsection{A corner-sensitive diagnostic: harmonic extension of
corner data}
\label{sec:cornerx}

We pose the nine-point problem with $f\equiv 0$ and Dirichlet data
supported \emph{only} at the four corners, so that the target is the
discrete (Mehrstellen) harmonic extension of four scalars and every
path from input to output passes through the corner stratum. The
complete model can express this map only through its classified
corner$\to$bulk couplings; GEQ-noC provably has no such pathway and
its best attainable prediction is zero, with relative error one.
Table~\ref{tab:cornerx} confirms the prediction exactly.

\begin{table}[t]
\centering\small
\begin{tabular}{lcc}
\hline
Model & Par. & relative test MSE ($n{=}100$, ten seeds)\\
\hline
GEQ      & 1076 & $0.0024\pm 0.0004$\\
GEQ-noC  &  880 & $1.0000\pm 0.0000$\\
STEER    &  283 & $0.0015\pm 0.0001$\\
CNN-6    & 2250 & $0.0016\pm 0.0001$\\
\hline
\end{tabular}
\caption{Corner-extension diagnostic. The corner-ablated model fails
identically, as predicted; the image-format baselines succeed through
the corner pixels visible to their padded convolutions.}
\label{tab:cornerx}
\end{table}

Two remarks keep the interpretation honest. The image-format baselines
also solve the task, because a padded convolution reads the corner
pixels of its input array directly; the diagnostic separates
\emph{corner information pathways}, not architectures. And taken
together with Table~\ref{tab:ablation}, the three tasks show the
ablation pattern tracking the information flow of the target operator
in every case: bulk$\leftrightarrow$edge blocks necessary exactly when
edge data drives the solution, corner blocks necessary exactly when
corner data does. This is the operational content of the completeness
theorem: the classified blocks are not interchangeable capacity but
distinct, geometrically indexed communication channels.

\subsection{Nonlinear round}
\label{sec:nonlinear-round}

On the nine-point task we additionally train GEQ-nl and CNN-6-nl;
the results are shown in Figure~\ref{fig:poisson-nonlinear-round}. GEQ-nl reaches $0.0565\pm0.0188$, $0.0341\pm0.0160$, and
$0.0332\pm0.0162$ at $n = 25, 100, 400$; CNN-6-nl reaches
$0.0576\pm0.0251$, $0.0338\pm0.0187$, $0.0298\pm0.0177$. Both are
worse than their linear counterparts, as they must be: the target
operator is linear, and a nonlinear network can only approach it by
learning to operate in the quasi-linear regime of its activations. The
round is included not as a benchmark claim but as an end-to-end
validation of Section~\ref{sec:architectural-components}: the gated equivariant
nonlinearities, parity rules, and admissible biases train stably under
gradient descent, preserve exact global equivariance at any depth, and
satisfy the (corrected) filtered transport identities measured next.
\begin{figure}[t]
\centering
\includegraphics[width=0.62\textwidth]{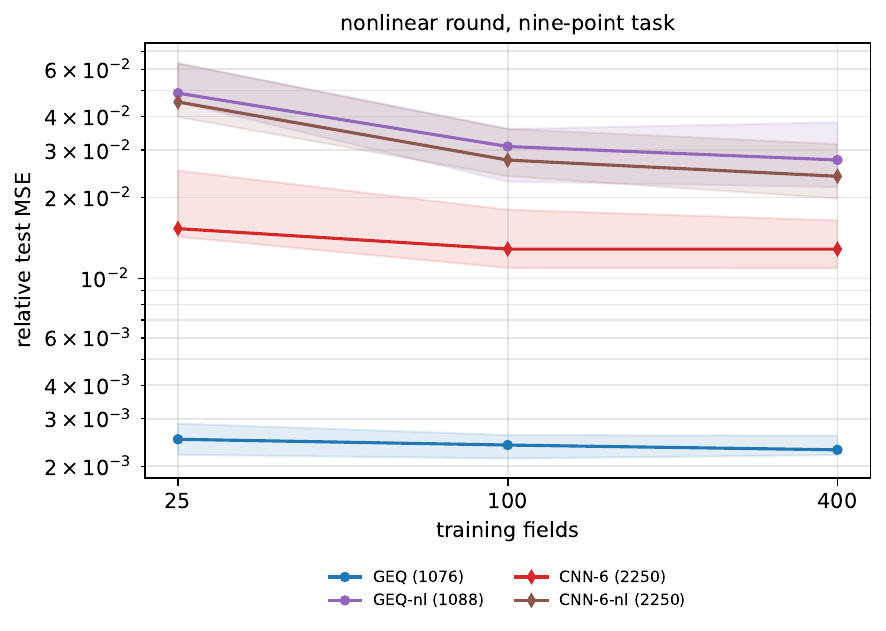}
\caption{Nonlinear-round results on the nine-point Poisson--Dirichlet task.
Median relative test MSE is shown as a function of the number of training
fields, with shaded interquartile ranges. GEQ-nl and CNN-6-nl denote the
nonlinear counterparts of GEQ and CNN-6, respectively.}
\label{fig:poisson-nonlinear-round}
\end{figure}

\subsection{Transport certificates of trained operators}
\label{sec:trained-certs}

For every trained model (seed~$0$, $n=400$) we evaluate, in double
precision, the relative transport residual on: the partial translations
$(1,0)$ and $(2,1)$ and the partial quarter-turn about a corner, each
on the domain eroded by the balanced radius
$\rho_4 = \lfloor 4/2\rfloor r_0 = 4$ of Theorem~\ref{thm:n-layer-filtered-equivariance}
($60$, $4$, and $49$ transported pairs respectively), Corollary~\ref{cor:equal-radius-erosion}; the partial
translation $(3,2)$, whose radius-$4$ eroded domain is empty; and the three nontrivial
global $D_2$ symmetries of the $16\times 12$ rectangle, which require
no erosion at any depth.

\begin{table}[t]
\centering\small
\begin{tabular}{lccc}
\hline
Model & partial motions, $\rho_4$-eroded & global $D_2$\\
\hline
GEQ and all GEQ ablations & $\le 3.0\times 10^{-15}$ & $\le 3.0\times 10^{-15}$\\
STEER, STEER-w            & $\le 4.7\times 10^{-15}$ & $\le 6.3\times 10^{-15}$\\
UNC                       & $1.5$--$1.7$             & $1.6$--$1.8$\\
CNN-4, CNN-6, CNN-12      & $1.2$--$2.0$             & $1.3$--$1.4$\\
\hline
GEQ-nl (at $\rho_4=4$)    & $0.11$ (quarter-turn)    & $1.8\times 10^{-15}$\\
GEQ-nl (at $(n{-}1)r_0=6$)& $1.1\times 10^{-15}$     & ---\\
CNN-6-nl                  & $1.4$                    & $1.5$\\
\hline
\end{tabular}
\caption{Worst relative transport residuals of \emph{trained}
operators. Translations are exact for all models (zero padding
coincides with the input masking of the transport identity);
the reflection and rotation certificates separate the classes.}
\label{tab:trained-certs}
\end{table}

Three conclusions. First, the trained GEQ family satisfies every
certificate to machine precision, as guaranteed by construction: the
certificates are parameter-independent. STEER also passes, as the
classification theorem predicts for a position-independent sub-family
of the $B$-equivariant class; the distinction between STEER and GEQ is
completeness of parameterization, never certificate satisfaction.
Second, the unconstrained models, UNC and the ordinary CNNs,
including the nonlinear one, exhibit order-one residuals on the
rotation and global reflection certificates after training. On this
family of tasks, training does \emph{not} induce even approximate
partial equivariance; whatever boundary competence CNN-4 acquires, it
is not of transport type.

Third, and most consequentially, the trained nonlinear network GEQ-nl
\emph{fails} the quarter-turn certificate on the balanced-radius
domain ($0.11$ at erosion $4$; $0.096$ at erosion $5$) and passes at
the cumulative radius $\smash{\sum_{j\ge 2} r_j} = (n-1)r_0 = 6$ with
residual $1.1\times 10^{-15}$, while the linear GEQ passes at radius
$4$ with residual $1.1\times 10^{-15}$ and both preserve the global
$D_2$ symmetries exactly. The balanced radius
$\lfloor n/2\rfloor r_0$ is a strictly multilinear phenomenon, whereas
nonlinear composites are certified on the one-sided cumulative
erosion, which the experiment shows to be attained here. We regard
this as the strongest argument for reporting trained-operator
certificates routinely: they audit not only implementations but
theorems.

Taken together, the experiments support the following conclusions at
stated levels of strength. The algebraic certificates verify the
implementation, the composition theorem, and through the nonlinear
counterexample the corrected erosion radii, to floating-point
precision. The synthetic study of Section \ref{sec:synthetic-recovery} verifies completeness and
identification complexity inside the exact class. The boundary
benchmark shows a four- to six-fold accuracy advantage for the
complete stratified parameterization over connectivity-, parameter-,
and width-matched controls on operators outside the exact class, with
the per-block ablations tracking the information flow of each target;
and the trained-certificate audit shows that this advantage coexists
with exact partial-symmetry guarantees that unconstrained training
does not discover on its own.


\section{Conclusions and outlook}
\label{sec:outlook}

This paper develops a theory of equivariant neural networks for situations in
which symmetry is not represented by a group acting globally on the signal
space.  The basic object is the measured groupoid symmetry datum
\[
 \bigl(
   \Gamma\rightrightarrows\Omega,
   \mathcal B,
   \nu;
   R^{\mathrm{in}},R^{\mathrm{out}}
 \bigr),
\]
which records admissible arrows, coherent partial transformations, the measure
on the object space, and the transformation laws of the feature fibres.  This
separation is essential: the same groupoid may support different notions of
equivariance depending on the chosen bisections and representations.

The principal abstract result is the bisection-equivariant kernel theorem.  It
reduces equivariance of an object-space integral channel to the transport law
\[
 K\bigl(\tau_b(y),\tau_b(x)\bigr)
 =R^{\mathrm{out}}\bigl(b(y)\bigr)
  K(y,x)
  R^{\mathrm{in}}\bigl(b(x)\bigr)^{-1}.
\]
The solution space is controlled by the diagonal action of the bisection
pseudogroup on pairs of objects: one chooses an intertwiner of the joint
stabilizer at each pair-orbit representative and transports it along the
orbit.  This both recovers the classical steerable-kernel constraint on
homogeneous spaces and explains what replaces translational weight sharing
when homogeneity is lost.

For bounded planar domains, the tangent-cone groupoid captures the boundary part
of the symmetry model.  Its object orbits distinguish bulk, edge, and corner
geometry; its pair orbits classify all admissible communication between these
strata.  The resulting layer is not a direct sum of three independent
networks.  The bulk--edge, bulk--corner, and edge--corner kernels are allowed
and constrained by equivariance, and hence provide intrinsic channels through
which boundary information enters the interior.  On the pixel grid, the
nonconjugate edge and corner reflection subgroups of $D_4$ produce distinct
branching rules.  The complete finite layer is obtained by solving explicit
homogeneous constraint systems, and its trainable dimension agrees with the
closed representation-theoretic formulas.

Partial symmetry also changes the algebra of depth.  Individual finite-range
layers satisfy the full local transport identity, but their composition can
route information through points outside the domain of a proper local
bisection.  The erosion theorems quantify the exact region on which the
composite remains equivariant.  This leads to filtered equivariance rather
than a binary distinction: the depth and propagation radii determine a
controlled boundary layer, while global symmetries remain exact at every
depth.  Pointwise equivariant nonlinearities, admissible biases, normalization,
parallel branches, and residual connections fit naturally into this filtered
structure.

The numerical results separate three different claims that are often
conflated.  First, the nullspace dimensions, sparse implementation, and
transport identities verify the mathematical construction to floating-point
precision.  Second, the synthetic identification experiment shows that the
complete parameterization spans the exact equivariant class and requires fewer
generic observations than an unconstrained model with the same geometric
connectivity.  Third, the Poisson--Dirichlet experiment illustrates the use of
the architecture as a boundary-aware inductive bias outside the exact class.
The direct symmetry diagnostic is important: the inverse Dirichlet operator on
a fixed rectangle preserves the global symmetries of that rectangle, but not
general proper rigid local bisections.  The empirical advantage observed in
this experiment must therefore be interpreted as approximation and regularization,
not as exact target equivariance.

\medskip
Several questions remain open.

\medskip
\noindent\textit{Curved boundaries and higher-order geometry.}
Tangent cones capture first-order boundary type and are sufficient for
polygonal domains.  For curved boundaries, matching tangent cones does not
ensure that a rigid motion maps one boundary germ to another; curvature and
higher jets provide additional obstructions.  A natural extension is to
replace the tangent-cone groupoid by a hierarchy of jet-refined groupoids.
This would separate boundary points by curvature or other local invariants and
produce correspondingly refined isotropy and kernel spaces.  It would also
clarify the transition between exact local isometries and approximate
geometric matching.

\medskip
\noindent\textit{Manifolds and the choice of geometric groupoid.}
On a general Riemannian manifold there need not be a rich pseudogroup of local
isometries.  Extending the construction therefore requires an explicit choice
of geometry: the groupoid of germs of local isometries, a frame or gauge
groupoid, a holonomy groupoid, a jet groupoid, or a groupoid supplied by the
application.  These choices encode different notions of symmetry and should
not be identified.  Similar questions arise for Lorentzian, gauge-theoretic,
and constrained systems.

\medskip
\noindent\textit{Morita invariance and presentation independence.}
A groupoid may admit several equivalent presentations of the same underlying
stack or orbit geometry.  The present construction is expressed using a
specific object space, measure, family of bisections, and bundle
trivialization.  It is therefore important to determine which parts of the
architecture are invariant under Morita equivalence and how symmetry data,
kernel spaces, and learned operators should be transported between equivalent
groupoids. 

\medskip
\noindent\textit{Relation with arrow-space convolution and categorical models.}
The kernel theorem classifies operators on fields over the object space,
whereas the standard convolution algebra of a Lie groupoid is defined on the
arrow space using a Haar system.  Establishing precise transforms between
these constructions is an important next step.  More general categories may
also be needed when the relevant relations are noninvertible.  Trace maps,
restriction from bulk to boundary, incidence maps between cells, and coarse-
to-fine maps are naturally categorical rather than groupoidal.  Combining the
present invertible local symmetries with such noninvertible geometric maps may
lead to a unified architecture for stratified and cellular domains.

\medskip
\noindent\textit{Approximation theory and statistical complexity.}
The finite layer space is completely characterized, but universal
approximation for nonlinear groupoid-steerable networks remains to be
established in the present measured and partially equivariant setting.  One
would like criteria ensuring density in spaces of continuous or measurable
maps satisfying the relevant filtered transport laws, together with estimates
of sample complexity in terms of pair-orbit and stabilizer data.  The
synthetic experiment suggests that the dimension of the equivariant operator
space governs generic identification complexity; a statistical theory should
make this relation precise.

\medskip
\noindent\textit{Multiscale architectures.}
The reference model works at fixed resolution.  Pooling, subsampling, mesh
coarsening, and multigrid operations change the object space and hence the
symmetry datum.  Their equivariance should be formulated through morphisms
between groupoids or through compatible functors between stratified
resolutions.  Such a theory would enable U-net, multiresolution, and neural-
operator architectures while retaining explicit control of local symmetry and
erosion across scales.

\medskip
\noindent\textit{Boundary-value and operator-learning problems.}
The most informative applied tests should distinguish local differential
operators from their domain-dependent inverses.  Promising directions include
learning an ambient equivariant core together with a stratified boundary
correction, using families of transformed domains as inputs, and studying
corner-sensitive finite-element or mixed-boundary problems.  These experiments
should report both prediction error and transport residuals, use independent
validation and test sets, and include matched ablations of individual
cross-stratum blocks.  Domains with holes, nonconvex polygons, and varying
boundary types would test whether the learned representation genuinely
exploits the groupoid stratification.

\medskip
\noindent\textit{Broader representation classes.}
The present implementation uses finite-dimensional orthogonal or unitary
feature types and measure-preserving bisections.  Quasi-invariant measures
would introduce Radon--Nikodym factors in the transport operators.  Projective,
nonunitary, graded, and operator-algebraic representations could encode
additional physical structure.  In particular, imposing complete positivity
and normalization on the pointwise or integral channels leads toward
symmetry-constrained quantum channels and groupoid-based quantum neural
architectures.

The main conclusion is that boundaries and other failures of global
homogeneity need not be treated as defects of an otherwise homogeneous CNN.
They can be incorporated into the symmetry datum itself.  Groupoids then do
more than rephrase group equivariance: their object orbits identify geometric
strata, their pair orbits determine the admissible kernels between those
strata, and their local bisections provide an exact, computable notion of
partial symmetry whose behavior under depth can be quantified.


\section*{Acknowledgements}
A.I. acknowledges financial support from the Spanish Ministry of Economy and Competitiveness, through the Severo Ochoa Program for Centers of Excellence in RD (SEV-2015/0554). The authors acknowledge the MINECO research project PID2024-160539NB-I00, and the Comunidad de Madrid project TEC-2024/COM-84 QUITEMAD-CM. M.J.V. acknowledges the partial support of CaLIGOLA HERP 101086123.

\section*{Tool and computational resource disclosure}
The authors of this article adhere to the Leiden Declaration on Artificial Intelligence and Mathematics ( https://leidendeclaration.ai ).   In accordance with that we acknowledge that the code as well as part of the text used in the certificates and experiments described in Parts \ref{part:architecture}-\ref{part:experiments} was done with the help of Claude Fable 5 (Anthropic) and ChatGPT 5.6 (OpenAI) and it was independently drafted and verified by the authors.

\end{document}